\documentclass[10pt,letterpaper,twocolumn]{article}

\usepackage[left=0.85in, right=0.5in, top=1in, bottom=1in]{geometry}
\usepackage[T1]{fontenc}
\usepackage[utf8]{inputenc}
\usepackage{lmodern}
\usepackage{microtype}

\usepackage{amsmath,amssymb,amsthm,mathtools}

\usepackage{graphicx}
\usepackage{booktabs}
\usepackage{multirow}
\usepackage{tabularx}
\usepackage{float}
\usepackage{caption}
\usepackage{subcaption}
\usepackage{enumitem}
\usepackage[dvipsnames]{xcolor}

\usepackage[numbers]{natbib}

\usepackage[hyphens]{url}
\usepackage[colorlinks=true,
            linkcolor=RoyalBlue,
            citecolor=blue!55!black,
            urlcolor=blue!55!black]{hyperref}
\usepackage{cleveref}

\crefname{definition}{Definition}{Definitions}
\Crefname{definition}{Definition}{Definitions}
\crefname{assumption}{Assumption}{Assumptions}
\Crefname{assumption}{Assumption}{Assumptions}
\crefname{proposition}{Proposition}{Propositions}
\Crefname{proposition}{Proposition}{Propositions}
\crefname{corollary}{Corollary}{Corollaries}
\Crefname{corollary}{Corollary}{Corollaries}
\crefname{lemma}{Lemma}{Lemmas}
\Crefname{lemma}{Lemma}{Lemmas}
\crefname{remark}{Remark}{Remarks}
\Crefname{remark}{Remark}{Remarks}
\crefname{theorem}{Theorem}{Theorems}
\Crefname{theorem}{Theorem}{Theorems}
\crefname{section}{Section}{Sections}
\Crefname{section}{Section}{Sections}
\crefname{subsection}{Section}{Sections}
\Crefname{subsection}{Section}{Sections}
\crefname{appendix}{Appendix}{Appendices}
\Crefname{appendix}{Appendix}{Appendices}

\theoremstyle{plain}
\newtheorem{theorem}{Theorem}[section]
\newtheorem{lemma}[theorem]{Lemma}
\newtheorem{proposition}[theorem]{Proposition}
\newtheorem{corollary}[theorem]{Corollary}
\theoremstyle{definition}
\newtheorem{definition}[theorem]{Definition}
\newtheorem{assumption}{Assumption}
\theoremstyle{remark}
\newtheorem{remark}[theorem]{Remark}

\usepackage[most]{tcolorbox}
\newif\ifKCSplain\KCSplainfalse

\ifKCSplain\else
\definecolor{thmfill}{HTML}{EAF2FA}\definecolor{thmline}{HTML}{2B6CA3}
\definecolor{lemfill}{HTML}{F4F8FC}\definecolor{lemline}{HTML}{7BA3C7}
\definecolor{corfill}{HTML}{EDF5EE}\definecolor{corline}{HTML}{4E8C57}
\definecolor{deffill}{HTML}{F7F7F5}\definecolor{defline}{HTML}{8C8C84}
\definecolor{assfill}{HTML}{FDF5E8}\definecolor{assline}{HTML}{C08A30}
\definecolor{remline}{HTML}{BFBFBF}
\tcbset{
  kcsbase/.style={
    enhanced jigsaw, breakable,
    boxrule=0.6pt, arc=2pt,
    left=5pt, right=5pt, top=4pt, bottom=4pt,
    before skip=8pt plus 2pt, after skip=8pt plus 2pt,
    fonttitle=\bfseries
  }
}
\tcolorboxenvironment{theorem}{%
  kcsbase, colback=thmfill, colframe=thmline,
  borderline west={2.2pt}{0pt}{thmline}}
\tcolorboxenvironment{lemma}{%
  kcsbase, colback=lemfill, colframe=lemline,
  borderline west={1.6pt}{0pt}{lemline}}
\tcolorboxenvironment{corollary}{%
  kcsbase, colback=corfill, colframe=corline,
  borderline west={2.2pt}{0pt}{corline}}
\tcolorboxenvironment{proposition}{%
  kcsbase, colback=white, colframe=black!30,
  borderline west={1.6pt}{0pt}{black!30}}
\tcolorboxenvironment{definition}{%
  kcsbase, colback=deffill, colframe=defline,
  borderline west={2.2pt}{0pt}{defline}}
\tcolorboxenvironment{assumption}{%
  kcsbase, colback=assfill, colframe=assline,
  borderline west={2.2pt}{0pt}{assline}}
\tcolorboxenvironment{remark}{%
  enhanced jigsaw, breakable, blanker,
  borderline west={1.8pt}{0pt}{remline},
  left=8pt, right=0pt, top=3pt, bottom=3pt,
  before skip=8pt plus 2pt, after skip=8pt plus 2pt}
\fi

\newtcolorbox{contribbox}{
    enhanced, breakable,
    colback=white, colframe=black!20,
    boxrule=0.3pt, arc=1mm,
    left=3mm, right=2mm, top=2mm, bottom=2mm,
}

\newcommand{\R}{\mathbb{R}}
\newcommand{\C}{\mathbb{C}}
\newcommand{\N}{\mathbb{N}}
\newcommand{\E}{\mathbb{E}}

\newcommand{\opK}{\mathcal{K}}
\newcommand{\Hilb}{\mathcal{H}}
\newcommand{\Alg}{\mathcal{A}}
\newcommand{\obs}{\Psi}
\newcommand{\resid}{x}
\newcommand{\ctrl}{u}
\newcommand{\Hinf}{\mathcal{H}_{\infty}}
\newcommand{\Proj}{\Pi}
\DeclareMathOperator*{\argmin}{arg\,min}
\DeclareMathOperator{\rank}{rank}
\DeclareMathOperator{\spn}{span}
\DeclareMathOperator{\tr}{tr}

\newcommand{\ResGptDriftRatio}{3.22}
\newcommand{\ResGptPoolPenalty}{39}
\newcommand{\ResGptPooledResid}{0.433}
\newcommand{\ResGptPerLayerResid}{0.263}
\newcommand{\ResGptRhoOne}{0.82}
\newcommand{\ResGptCtrlRsq}{0.08}
\newcommand{\ResGptExogShift}{6.8}
\newcommand{\ResGptNormGrowth}{4.3}
\newcommand{\ResGptNormMode}{1.137}
\newcommand{\ResGptNormEmpirical}{1.111}

\newcommand{\ResGptKappaV}{78.7}

\newcommand{\ResGptSpecSlope}{-0.286}
\newcommand{\ResGptSpecSlopeSE}{0.007}
\newcommand{\ResGptSpecMZero}{1.0\times10^{7}}
\newcommand{\ResGptSpecMZeroVec}{6.4\times10^{16}}
\newcommand{\ResGptSpecKappa}{236.5}
\newcommand{\ResGptSpecGap}{3.0\times10^{-3}}
\newcommand{\ResGptSpecMmax}{399{,}974}
\newcommand{\ResGptSpecMRatio}{0.04}
\newcommand{\ResGptSpecSlopeMax}{-0.286}
\newcommand{\ResGptSpecSlopeMedian}{-0.292}
\newcommand{\ResGptSpecSlopeMean}{-0.288}
\newcommand{\ResGptRandSlope}{-0.290}
\newcommand{\ResGptRandSlopeSE}{0.011}

\newcommand{\ResGptRandKappa}{28.8}
\newcommand{\ResGptRandGap}{4.4\times10^{-3}}

\newcommand{\ResGptRandMRatio}{3.03}

\newcommand{\ResGptNScaling}{+0.36}
\newcommand{\ResGptSlopeNEight}{-0.504}
\newcommand{\ResGptSlopeNSixteen}{-0.369}
\newcommand{\ResGptSlopeNThirtytwo}{-0.342}
\newcommand{\ResGptSlopeNSixtyfour}{-0.281}
\newcommand{\ResGptSlopeNOnetwentyeight}{-0.239}
\newcommand{\ResGptSlopePerSeqOne}{-0.245}
\newcommand{\ResGptSlopePerSeqTwo}{-0.265}
\newcommand{\ResGptSlopePerSeqFour}{-0.243}
\newcommand{\ResGptSlopePerSeqEight}{-0.294}
\newcommand{\ResGptMidLayer}{6}
\newcommand{\ResGptEpsSpec}{0.301}
\newcommand{\ResGptEpsRand}{0.350}
\newcommand{\ResGptEpsSae}{0.636}
\newcommand{\ResGptSaeRatio}{2.1}
\newcommand{\ResGemmaDriftRatio}{11.97}
\newcommand{\ResGemmaPoolPenalty}{22}
\newcommand{\ResGemmaPooledResid}{0.417}
\newcommand{\ResGemmaPerLayerResid}{0.326}
\newcommand{\ResGemmaRhoOne}{0.96}
\newcommand{\ResGemmaCtrlRsq}{0.22}
\newcommand{\ResGemmaExogShift}{9.1}
\newcommand{\ResGemmaNormGrowth}{1.5}
\newcommand{\ResGemmaNormMode}{1.107}
\newcommand{\ResGemmaNormEmpirical}{1.077}

\newcommand{\ResGemmaKappaV}{38.2}

\newcommand{\ResGemmaSpecSlope}{-0.329}
\newcommand{\ResGemmaSpecSlopeSE}{0.024}

\newcommand{\ResGemmaSpecKappa}{130.0}
\newcommand{\ResGemmaSpecGap}{3.4\times10^{-3}}

\newcommand{\ResGemmaSpecMRatio}{0.05}

\newcommand{\ResGemmaRandSlope}{-0.240}
\newcommand{\ResGemmaRandSlopeSE}{0.025}

\newcommand{\ResGemmaRandKappa}{35.2}
\newcommand{\ResGemmaRandGap}{7.5\times10^{-3}}

\newcommand{\ResGemmaRandMRatio}{0.67}

\newcommand{\ResGemmaNScaling}{+0.29}
\newcommand{\ResGemmaSlopeNEight}{-0.477}

\newcommand{\ResGemmaSlopeNOnetwentyeight}{-0.242}

\newcommand{\ResGemmaEpsSpec}{0.177}
\newcommand{\ResGemmaEpsRand}{0.447}
\newcommand{\ResGemmaEpsSae}{0.544}
\newcommand{\ResGemmaSaeRatio}{3.1}
\newcommand{\ResQwenDriftRatio}{6.12}
\newcommand{\ResQwenPoolPenalty}{16}
\newcommand{\ResQwenPooledResid}{0.291}
\newcommand{\ResQwenPerLayerResid}{0.244}
\newcommand{\ResQwenRhoOne}{0.95}
\newcommand{\ResQwenCtrlRsq}{0.16}
\newcommand{\ResQwenExogShift}{5.2}
\newcommand{\ResQwenNormGrowth}{4.4}
\newcommand{\ResQwenNormMode}{1.291}
\newcommand{\ResQwenNormEmpirical}{1.326}

\newcommand{\ResQwenKappaV}{494.7}

\newcommand{\ResQwenSpecSlope}{-0.506}
\newcommand{\ResQwenSpecSlopeSE}{0.031}

\newcommand{\ResQwenSpecKappa}{53.2}
\newcommand{\ResQwenSpecGap}{2.7\times10^{-3}}
\newcommand{\ResQwenSpecMmax}{106{,}660}
\newcommand{\ResQwenSpecMRatio}{0.23}

\newcommand{\ResQwenSpecSlopeMedian}{-0.338}
\newcommand{\ResQwenSpecSlopeMean}{-0.375}
\newcommand{\ResQwenRandSlope}{-0.406}
\newcommand{\ResQwenRandSlopeSE}{0.030}

\newcommand{\ResQwenRandKappa}{117.9}
\newcommand{\ResQwenRandGap}{2.7\times10^{-3}}

\newcommand{\ResQwenRandMRatio}{0.05}

\newcommand{\ResQwenMidLayer}{18}
\newcommand{\ResQwenEpsSpec}{0.154}
\newcommand{\ResQwenEpsRand}{0.382}
\newcommand{\ResQwenEpsSae}{0.524}
\newcommand{\ResQwenSaeRatio}{3.4}
\newcommand{\ResVarLayer}{6}
\newcommand{\ResVarN}{32}
\newcommand{\ResVarNlPost}{0.648}
\newcommand{\ResVarLPre}{0.445}
\newcommand{\ResVarDec}{0.375}
\newcommand{\ResVarRec}{0.432}
\newcommand{\ResVarSpec}{0.283}
\newcommand{\ResVarRand}{0.354}
\newcommand{\ResSelCells}{75}

\newcommand{\ResSelPct}{49}
\newcommand{\ResGammaTopkBaseEps}{0.438}
\newcommand{\ResGammaTopkBaseSplit}{0.0170}
\newcommand{\ResGammaTopkBaseMmcs}{0.526}
\newcommand{\ResGammaTopkBaseFvu}{0.222}
\newcommand{\ResGammaTopkBaseAlive}{0.77}
\newcommand{\ResGammaTopkBest}{1}
\newcommand{\ResGammaTopkBestEps}{0.333}
\newcommand{\ResGammaTopkBestSplit}{0.0101}
\newcommand{\ResGammaTopkBestMmcs}{0.428}
\newcommand{\ResGammaTopkBestFvu}{0.255}
\newcommand{\ResGammaTopkBestAlive}{0.58}
\newcommand{\ResGammaTopkEpsPct}{24}
\newcommand{\ResGammaTopkSplitPct}{41}
\newcommand{\ResGammaTopkCollapseAlive}{4}
\newcommand{\ResGammaTopkCollapseGamma}{10}
\newcommand{\ResGammaReluBaseEps}{0.413}

\newcommand{\ResGammaReluBestEps}{0.435}

\newcommand{\ResGammaReluSplitPct}{22}

\newcommand{\ResIoiLayer}{9}
\newcommand{\ResIoiPrompts}{4{,}096}
\newcommand{\ResIoiN}{128}
\newcommand{\ResIoiBaseLD}{3.23}
\newcommand{\ResIoiReadout}{0.231}

\newcommand{\ResIoiNameMoverK}{96}
\newcommand{\ResIoiNameMoverAlign}{0.260}
\newcommand{\ResIoiNameMoverNull}{0.125}

\newcommand{\ResIoiSInhibAlign}{0.311}

\newcommand{\ResIoiInductionAlign}{0.298}

\newcommand{\ResIoiStability}{0.290}
\newcommand{\ResIoiAblKoopMax}{41}
\newcommand{\ResIoiAblKoopEight}{25}
\newcommand{\ResIoiAblPcaMax}{67}
\newcommand{\ResIoiAblPcaEight}{60}
\newcommand{\ResIoiAblRandMax}{7}

\newcommand{\ResSelFreqRatio}{2.05}
\newcommand{\ResSelRandAliveRatio}{1.70}
\newcommand{\ResSelMagnitudeRatio}{1.36}
\newcommand{\ResSelVarianceRatio}{0.70}
\newcommand{\ResSelGreedyVarRatio}{0.63}
\newcommand{\ResGptActRateSpec}{0.504}

\newcommand{\ResGptActRateRand}{0.503}
\newcommand{\ResGptEtaObsRand}{0.80}

\newcommand{\ResGptEtaObsSae}{0.71}

\newcommand{\ResStabRatioLo}{1.1}
\newcommand{\ResStabRatioHi}{5.7}
\newcommand{\ResStabNLayers}{9}
\newcommand{\ResStabNSaeWorse}{8}
\newcommand{\ResStabOutlierSpec}{0.145}
\newcommand{\ResStabOutlierSae}{0.029}
\newcommand{\ResStabSpecLo}{0.017}
\newcommand{\ResStabSpecHi}{0.056}
\newcommand{\ResMomK}{24}
\newcommand{\ResMomPlainSlope}{-0.368}
\newcommand{\ResMomPlainSlopeSE}{0.019}

\newcommand{\ResMomMomSlope}{-0.359}
\newcommand{\ResMomMomSlopeSE}{0.018}

\newcommand{\ResMomRatio}{1.000}
\newcommand{\ResMomRatioLo}{0.979}
\newcommand{\ResMomRatioHi}{1.035}
\newcommand{\ResMomNPoints}{9}

\newcommand{\ResMomKSweepSpread}{0.7}
\newcommand{\ResMomKSweepList}{8, 16, 24, 48}

\newcommand{\ResMomCells}{8}
\newcommand{\ResMomAgreePct}{2}

\newcommand{\ResMomWorstKurt}{50.7}
\newcommand{\ResMomWorstPlain}{-0.289}
\newcommand{\ResMomWorstMom}{-0.349}

\newcommand{\ResSynthContamPlain}{-0.405}
\newcommand{\ResSynthContamMom}{-0.286}

\newcommand{\ResTailHillRaw}{6.90}

\newcommand{\ResTailHillRff}{3218.18}

\newcommand{\ResTailHillPsi}{7.20}
\newcommand{\ResTailKurtPsi}{0.19}

\newcommand{\ResTailHillNull}{7.69}

\newcommand{\ResTransGapOne}{0.48}
\newcommand{\ResTransGapFar}{0.12}
\newcommand{\ResTransKFar}{9}
\newcommand{\ResTransDecay}{4.1}
\newcommand{\ResTransRatioOne}{1.69}
\newcommand{\ResTransRatioFar}{1.05}
\newcommand{\ResTransCells}{156}
\newcommand{\ResTransPrompts}{8{,}192}
\newcommand{\ResTransLayers}{6}
\newcommand{\ResTransLocKoop}{34}
\newcommand{\ResTransPropKoop}{27}
\newcommand{\ResTransLocPca}{44}
\newcommand{\ResTransPropPca}{34}
\newcommand{\ResCollapseCells}{30}
\newcommand{\ResCollapseWindows}{300}
\newcommand{\ResCollapseRho}{-0.28}
\newcommand{\ResCollapseP}{<10^{-4}}
\newcommand{\ResCollapseBelow}{-0.31}

\newcommand{\ResCollapseFar}{-0.53}

\newcommand{\ResUniAucMax}{0.75}
\newcommand{\ResUniAucWass}{0.84}
\newcommand{\ResUniAucWassMid}{0.97}
\newcommand{\ResUniSeedRatio}{13.7}
\newcommand{\ResUniCrossRatio}{23.0}
\newcommand{\ResUniSeedWass}{3.9}
\newcommand{\ResUniCrossWass}{9.2}

\newcommand{\ResUniCrossPairs}{36}
\newcommand{\ResUniSeedPairs}{9}
\newcommand{\ResUniCrossDeclared}{36}
\newcommand{\ResUniSeedDeclared}{9}

\newcommand{\ResUniSeedMin}{4.64}

\newcommand{\ResUniM}{99{,}993}
\newcommand{\ResUniN}{32}
\newcommand{\ResUniFloorLo}{0.0073}
\newcommand{\ResUniFloorHi}{0.0412}

\usepackage[colorlinks=true,
linkcolor=RoyalBlue,
citecolor=blue!55!black,
urlcolor=blue!55!black]{hyperref}
\usepackage{cleveref}

\title{\Large \textbf{Intrinsic Structure: Spectral Identifiability for Mechanistic Interpretability}}

\author{ Ashim Dhor$^{1}$, Pin-Yu Chen$^{2}$\\ \small $^{1}$IISER Bhopal, $^{2}$IBM Research\\ \small \texttt{ashimdhor2003@gmail.com}, \texttt{pin-yu.chen@ibm.com} } \date{}

\begin{document}
\maketitle
\thispagestyle{empty}

\begin{abstract}
\noindent
Mechanistic interpretability explains models by identifying circuits inside them, but has no way to tell whether a circuit is a property of the model or an artifact of the method that found it. Sparse autoencoders illustrate the problem: different seeds and widths recover materially different features from the same activations, and no theory says whether that variability is incidental or structural.
We put dictionary learning for interpretability on an identifiability footing. Treating the forward pass as a controlled dynamical system with depth as time and lifting it with the Koopman operator yields a finite linear realisation whose \emph{spectrum} is a coordinate-free property of the model. We prove the spectrum is recoverable from $M$ calibration samples at rate $M^{-1/2}$ up to permutation - to our knowledge the first identifiability theorem for a mechanistic-interpretability primitive, with a matching minimax lower bound, a median-of-means variant for heavy-tailed activations, and a dissociation theorem: whenever the realisation is non-normal, the directions carrying activation variance and the directions carrying information across depth cannot coincide. \emph{The identifiable object and the legible object are not the same object.}
On GPT-2 small, Gemma-2-2B and Qwen3-8B-Base the spectrum converges everywhere and attains the predicted exponent on Qwen3-8B-Base ($\ResQwenSpecSlope \pm \ResQwenSpecSlopeSE$); shortfalls collapse onto one curve against each cell's sample threshold. Koopman modes beat random directions but lose to principal components on indirect-object identification, with the gap decaying $\ResTransDecay\times$ in depth-distance, as the theorem predicts. The Koopman spectrum is an identifiable, model-intrinsic fingerprint with a stated error bar, not a legible decomposition.

\medskip
\end{abstract}

\section{Introduction}
\label{sec:introduction}

Mechanistic interpretability (MI) explains neural network behaviour by decomposing a trained model into interacting components. The past several years have produced a rich toolkit for this purpose. Sparse autoencoders (SAEs) and their descendants \citep{cunningham2024sparse,rajamanoharan2024gated,rajamanoharan2024jumping,gao2024scaling,lieberum2024gemma,he2024llamascope} decompose activations into overcomplete sparse dictionaries; transcoders and attribution graphs \citep{dunefsky2024transcoders,ameisen2025circuit,lindsey2025biology} extend the decomposition to sublayer computations; causal-intervention methods \citep{vig2020causal,meng2022rome,conmy2023acdc,syed2023attribution,hanna2024faith,kramar2024atp} localise behaviour to components of the computational graph; and distributed alignment search \citep{geiger2021causal,geiger2024das,wu2023interpretability} aligns learned interventions with hypothesised causal variables.

Here we address a question that cuts across all of them: \emph{do discovered circuits reflect intrinsic properties of the trained model, or the procedure used to find them?} If the former, they provide a sound basis for scientific understanding, engineering intervention, and safety arguments \citep{clymer2024safety}; if the latter, any conclusion drawn from them inherits the method's contingencies.

Current evidence suggests the question is not merely philosophical. SAEs exhibit substantial run-to-run variability: different training seeds and dictionary widths recover materially different features on the same activations \citep{braun2024identifying,paulo2025transcoders,karvonen2024saebench}. Learned dictionaries exhibit \emph{absorption} \citep{chanin2024absorption} and feature \emph{splitting}, and residual-stream components remain uncaptured by any known variant \citep{engels2024dark}. What is missing across every paradigm is a theorem asserting that the discovered structure is an invariant of the model. 


We argue that closing this gap requires a mechanistic primitive that admits a \emph{proof of identifiability}, and we obtain one by changing what we take a transformer forward pass to \emph{be}. Let $\resid_\ell \in \R^{d}$ denote the residual-stream state at a fixed token position at layer $\ell$, and let $\ctrl_\ell$ collect the layer's exogenous inputs - the attention writes contributed by other token positions. The forward pass is then a discrete-time controlled dynamical system, the \emph{depth recurrence}
\begin{equation}
\label{eq:intro-recurrence}
    \resid_{\ell+1} \;=\; F(\resid_\ell, \ctrl_\ell)
    \;:=\; \resid_\ell + a_\ell(\resid_\ell, \ctrl_\ell)
          + m_\ell\bigl(\resid_\ell + a_\ell(\resid_\ell, \ctrl_\ell)\bigr),
\end{equation}
where $a_\ell(\resid_\ell, \ctrl_\ell)$ is the layer's attention sublayer output - the query/key/value mixing of $\resid_\ell$ against the control $\ctrl_\ell$ - and $m_\ell$ is the layer's MLP sublayer, applied to the post-attention residual $\resid_\ell + a_\ell$. Depth plays the role of time, attention writes enter as control inputs, and the MLP acts as the autonomous nonlinear dynamics. This viewpoint has precedent in the neural-ODE line \citep{weinan2017proposal,haber2017stable,chen2018neural}, but to our knowledge it has not been exploited for the identifiability of mechanistic structure.

Once the forward pass is viewed as a dynamical system, a classical object becomes available: the \emph{Koopman operator} $\opK$ \citep{koopman1931hamiltonian,mezic2005spectral,brunton2022modern}. Acting on observables $\psi$, it evolves them by composition with the dynamics,
\begin{equation}
    (\opK\,\psi)(\resid,\ctrl) \;=\; \psi\bigl(F(\resid,\ctrl)\bigr).
\end{equation}
Although $F$ is nonlinear in the state, $\opK$ is linear in $\psi$, transferring the nonlinearity to the lifting from states to observables. If a dictionary $\obs=(\psi_1,\dots,\psi_N)$ spans an invariant subspace, $\opK$ restricts to a finite matrix $A\in\C^{N\times N}$. We call the eigenpairs of $A$ the \emph{Koopman modes} of the transformer and the framework \textbf{Koopman spectral analysis (KSA)}. The eigenvalues of $A$ are properties of the underlying operator: unique up to permutation, invariant under any change of dictionary basis, and - this is the content of our main theorem - recoverable from finite data at a parametric rate.


The theory says that a coordinate-free object exists and can be estimated; the experiments say what that object is good for, and - equally important - what it is not good for. Both halves are reported here, including the measurements that bound the claim.

\begin{contribbox}
\paragraph{Contributions.}
\begin{itemize}[leftmargin=1.6em,itemsep=3pt]
\item We formalise dictionary learning for interpretability so that \emph{identifiability} is a well-posed question, isolating Koopman invariance as the property the standard SAE objective omits (\Cref{sec:problem}).
\item We show that a $\opK$-invariant dictionary induces a Koopman realisation $(A,B)$ that exists, is unique up to a change of basis, and whose spectrum is a coordinate-free invariant of the transformer (\Cref{thm:existence}, \Cref{sec:existence}).
\item Our main result proves that this spectrum is identifiable from $M$ calibration samples at the parametric rate $M^{-1/2}$, up to permutation - to our knowledge the first identifiability theorem for a mechanistic-interpretability primitive (\Cref{thm:identifiability}, \Cref{sec:identifiability}).
\item We sharpen the theorem's sample threshold. The spectral-gap factor that made the original threshold unreachable governs only the \emph{eigenvector} guarantee; stating the eigenvalue result in optimal-matching form removes it and lowers the threshold by nine orders of magnitude on GPT-2 small (\Cref{thm:gap-free-identifiability}).
\item We characterise the problem and not only the estimator: a minimax lower bound shows the $M^{-1/2}$ rate is optimal (\Cref{thm:minimax}), and a median-of-means variant extends the guarantee to heavy-tailed activations (\Cref{thm:robust}). The latter's prediction fails when tested, and the measurement says why - the lifting, not the residual stream, decides the tails (\Cref{sec:exp-rate}).
\item We prove a \emph{dissociation}: non-normality forces the activations' principal directions and the Koopman modes apart, since alignment would require perfect eigenvector conditioning $\kappa_2(V) = 1$; measured conditioning of $10^{1}$--$10^{2}$ makes the divergence unavoidable, and an explicit family shows the misalignment saturates at orthogonality (\Cref{thm:dissociation}, \Cref{prop:dissociation-quant}).
\item We derive practical consequences: SAE non-identifiability is structural rather than algorithmic, with an explicit invariance penalty as remedy (\Cref{cor:sae-non-identifiability}); cross-model universality becomes a testable spectral criterion (\Cref{cor:universality}); the intervention calculus of MI is algebraically complete on the spectral-projector algebra (\Cref{thm:completeness}); and balanced truncation of the realisation admits an instance-dependent error certificate strictly sharper than Enns--Glover in the low-effective-rank regime attention occupies (\Cref{thm:reduction}).
\item We evaluate on three pretrained transformers at $10^{5}$--$10^{6}$ calibration samples. The predicted $M^{-1/2}$ rate is observed on Qwen3-8B-Base; the invariance penalty of \eqref{eq:kinvariant-sae} reduces the split-half spectral distance by $\ResGammaTopkSplitPct\%$ at matched sparsity (\Cref{sec:experiments}).
\item We report what fails, because it bounds the claim: Koopman modes beat random directions but lose to principal components at predicting IOI ablation effects (\Cref{sec:exp-circuits}); the SAE invariance gap reverses under variance-based feature selection (\Cref{sec:exp-sae}); and the universality criterion declares two seed replicas of one architecture distinct (\Cref{sec:exp-universality}). Together these give the paper's central claim: \emph{the identifiable object and the legible object are not the same object}.
\end{itemize}
\end{contribbox}


\Cref{sec:related} places the work against the interpretability, Koopman, model-reduction and identifiability literatures. \Cref{sec:problem} formalises the problem and defines spectral identifiability. \Cref{sec:dynamics} constructs the depth recurrence, the Koopman realisation and the EDMDc estimator, and states the three assumptions. \Cref{sec:existence} proves existence, uniqueness and basis-independence. \Cref{sec:identifiability} contains the finite-sample identifiability theorem and its complete proof, together with the gap-free strengthening and the clustered-spectrum extension. \Cref{sec:optimality} proves the minimax lower bound and the heavy-tailed variant. \Cref{sec:dissociation} proves the modal--principal dissociation and quantifies it. \Cref{sec:implications} develops the consequences for MI: algebraic completeness, SAE non-identifiability, and universality. \Cref{sec:reduction} proves the instance-dependent reduction bound. \Cref{sec:experiments} reports the measurements on three pretrained models, in full. \Cref{sec:discussion} states the discussions. 

\section{Background and Related Work}
\label{sec:related}

Feature-centric interpretability decomposes activations into interpretable dictionaries, most prominently using sparse autoencoders and their extensions to sublayer computations \citep{elhage2021mathematical,bricken2023monosemanticity,cunningham2024sparse,rajamanoharan2024gated,rajamanoharan2024jumping,gao2024scaling,templeton2024scaling,lieberum2024gemma,he2024llamascope,dunefsky2024transcoders,ameisen2025circuit,lindsey2025biology}. Intervention-centric methods instead infer circuits through causal interventions such as activation patching, attribution patching, and automated circuit discovery \citep{vig2020causal,meng2022rome,conmy2023acdc,syed2023attribution,hanna2024faith,kramar2024atp,geiger2021causal,geiger2024das,wu2023interpretability}. The indirect-object-identification (IOI) circuit of \cite{wang2023ioi} is the best-characterised product of the second programme and serves as our semantic testbed in \Cref{sec:exp-circuits}.

A steady stream of results documents the instability of these decompositions across seeds, widths, and training runs \citep{braun2024identifying,chanin2024absorption,engels2024dark,karvonen2024saebench,paulo2025transcoders}. The prevailing framing treats this as a tuning or evaluation problem. \Cref{cor:sae-non-identifiability} gives a different reading: the variability is a structural consequence of an objective that never mentions the property identifiability requires.


Classical identifiability results in independent component analysis \citep{comon1994ica,hyvarinen2000ica}, its nonlinear variants \citep{hyvarinen2016nonlinear,khemakhem2020vae}, and disentanglement \citep{locatello2019challenging,scholkopf2021causal} identify latent factors under statistical independence or auxiliary-variable structure, and do so up to sign, scale \emph{and} permutation. Causal abstraction \citep{geiger2021causal,geiger2024das} identifies causal variables relative to a hypothesised graph. System identification \citep{ljung1999system,willems2005fundamental} identifies a linear system up to a similarity transformation, under persistent excitation.

Our result differs in what is identified and in how much ambiguity survives. We identify the \emph{spectrum} of a Koopman compression, and the only residual ambiguity is a permutation of an unordered multiset: there is no sign, scale, or rotational freedom left over, because eigenvalues are rigid under similarity (\Cref{rem:permutation}). The persistent-excitation condition we inherit from system identification (\Cref{ass:excitation}) plays its usual role.

Koopman theory \citep{koopman1931hamiltonian,mezic2005spectral,brunton2022modern} represents nonlinear dynamics linearly through an operator acting on observables. Data-driven estimators include dynamic mode decomposition \citep{schmid2010dmd}, extended DMD \citep{williams2015edmd}, DMD with control \citep{proctor2016dmdc}, and kernel \citep{williams2015kernel} and generator-based \citep{klus2020koopman} variants. \cite{korda2018convergence} establish asymptotic convergence of EDMD to the Koopman operator in the strong operator topology as the dictionary and sample size grow together. Machine-learning applications include network pruning \citep{redman2022operator}, sequence forecasting \citep{azencot2020forecasting}, and the analysis of iterative algorithms \citep{dietrich2020koopman}.

Our contribution sits at a different level of the same story. For a \emph{fixed} finite dictionary satisfying invariance, we identify the population limit as a specific coordinate-free object and prove a \emph{finite-sample} identifiability rate for its spectrum, with explicit constants, on the dynamical systems defined by transformer depth recurrences (\Cref{rem:korda} makes the comparison precise).


Balanced truncation \citep{moore1981principal,antoulas2005approximation} orders modes by joint controllability and observability and yields the classical Enns--Glover error bound \citep{glover1984all,alsaggaf1988model,gugercin2004survey}; frequency-weighted variants \citep{enns1984model} accommodate anisotropic input or output distributions. Because the classical bound is worst-case over inputs, it does not tighten when inputs concentrate on a low-dimensional subspace, as transformer attention writes empirically do \citep{dong2021attention,geshkovski2023emergence}. \Cref{thm:reduction} gives an instance-dependent bound that exploits this concentration and is strictly sharper in the low-effective-rank regime; \Cref{sec:experiments} measures an effective rank against an ambient control dimension of $4096$ on Qwen3-8B-Base.

The residual connection invites viewing a deep network as a discretised differential equation \citep{weinan2017proposal,haber2017stable,chen2018neural}. For transformers, this perspective has produced analyses of attention as a mean-field particle system \citep{geshkovski2023emergence}, of rank collapse with depth \citep{dong2021attention}, and of in-context learning as an emergent process \citep{olsson2022incontext,olah2020zoom,nanda2023progress}. These formalisms describe the dynamics but do not yield an identifiable mechanistic decomposition. \Cref{thm:existence} defines the Koopman compression as the object of study, and \Cref{thm:identifiability} identifies it from finite data.

\section{Problem Formulation}
\label{sec:problem}

\subsection{The dictionary-learning objective}
\label{sec:problem-sae}

Let $\resid \in \R^{d}$ be a residual-stream activation drawn from a distribution $\mu$ induced by running a transformer on a corpus. A sparse autoencoder learns a dictionary $D \in \R^{d\times N}$ with $N \gg d$ and an encoder producing sparse codes $z(\resid) \in \R^{N}$, by minimising
\begin{equation}
\label{eq:sae-objective}
    \mathcal{L}_{\mathrm{SAE}}(D)
    \;=\;
    \E_{\resid\sim\mu}\Bigl[\,
      \bigl\|\resid - D\,z(\resid)\bigr\|_2^{2}
      \;+\; \lambda\,\bigl\|z(\resid)\bigr\|_1
    \Bigr].
\end{equation}
The objective is a statement about \emph{one layer's activations in isolation}: reconstruct $\resid$, and do so sparsely --- with no reference to the maps that produced it or will consume it. This observation drives everything that follows.

To make the comparison across paradigms precise, we first fix what kind of object is under discussion.

\begin{definition}[Mechanistic primitive]
\label{def:primitive}
A \emph{mechanistic primitive} for a transformer $F$ is a triple $(\mathcal{D}, E, \mathcal{I})$ in which $\mathcal{D}$ is a finite index set, $E : \mathcal{X} \to \C^{\mathcal{D}}$ is a component-wise measurable encoding, and $\mathcal{I}$ is a set of bounded linear operators on the observable space, interpreted as interventions. Sparse autoencoders, transcoders, causal-abstraction methods, and KSA all instantiate this schema.
\end{definition}

\subsection{What identifiability would mean}
\label{sec:problem-identifiability}

Identifiability asks whether the object recovered is determined by the model or by the recovery procedure. For a dictionary this has two parts: the estimand must be well defined independently of coordinates, and it must be recoverable from finite data. The first part is a condition on the dictionary.

\begin{definition}[Dictionary richness]
\label{def:richness}
A mechanistic primitive $(\mathcal{D}, E, \mathcal{I})$ satisfies \emph{dictionary richness} if
\begin{itemize}[leftmargin=2.4em,itemsep=1pt]
\item[(DR1)] $\{e_d\}_{d\in\mathcal{D}}$ are linearly independent in $L^{2}(\mu)$;
\item[(DR2)] $\Hilb_E \coloneqq \spn\{e_d : d\in\mathcal{D}\}$ is \emph{$\opK$-invariant}: for every $\psi \in \Hilb_E$ and $\nu$-a.e.\ control $\ctrl$, the function $\resid \mapsto \psi(F(\resid,\ctrl))$ again lies in $\Hilb_E$.
\end{itemize}
\end{definition}

(DR1) is ordinary linear independence and is satisfied by essentially any trained dictionary. (DR2) is the substantive condition, and it is exactly the one \eqref{eq:sae-objective} does not mention: it demands that the span be closed under the dynamics, so that evolving a feature by one layer keeps it inside the dictionary. Neither condition presupposes KSA; both are properties of the primitive itself.

The second part is a statement about estimation.

\begin{definition}[Spectral identifiability]
\label{def:spectral-identifiability}
Let $\hat A_M$ be an estimator of the realisation $A$ from $M$ calibration samples. The primitive is \emph{spectrally identifiable at rate} $r(M)$ if there is a permutation $\pi_M$ of $\{1,\dots,N\}$ such that, with high probability,
\begin{equation}
\label{eq:def-identifiability}
    \max_{k}\bigl|\lambda_k(\hat A_M) - \lambda_{\pi_M(k)}(A)\bigr| \;=\; O\bigl(r(M)\bigr).
\end{equation}
\end{definition}

Permutation is the right and only quotient: eigenvalues form an unordered multiset, so no estimator recovers a labelling, and nothing weaker need be quotiented out. This is markedly stronger than what is available for SAEs, whose features are identified at best up to permutation \emph{and} sign \emph{and} scale, and in practice not at all.

Two things must now be supplied. \Cref{sec:dynamics} constructs a primitive that satisfies \Cref{def:richness} and exhibits the estimand; \Cref{sec:existence} shows the estimand is well defined; and \Cref{sec:identifiability} shows it is recoverable at the parametric rate, making \Cref{def:spectral-identifiability} non-vacuous.

\section{Transformer Depth Dynamics and the Koopman Realisation}
\label{sec:dynamics}

This section constructs the primitive that satisfies \Cref{def:richness}: the controlled depth recurrence, the observable space and controlled Koopman operator, the finite-dimensional realisation, the EDMDc estimator, the spectral decomposition and the definition of a Koopman circuit, and the three assumptions under which everything later is proved. 

\subsection{The depth recurrence as a controlled dynamical system}
\label{sec:dynamics-recurrence}

We consider decoder-only transformers with pre-normalisation and RMSNorm, the architectural configuration used by Llama~3 \citep{grattafiori2024llama3} and Gemma~2 \citep{team2024gemma2} and descended from the original transformer of \cite{vaswani2017attention}. Let $L \in \N$ denote the number of layers, $d \in \N$ the residual-stream dimension, $H \in \N$ the number of attention heads per layer, and $T \in \N$ the number of tokens in a prompt. For each prompt and each token position $t \in \{1,\dots,T\}$, let $\resid_\ell^{(t)} \in \R^{d}$ denote the residual-stream state at layer $\ell \in \{0,\dots,L\}$. The layer update decomposes as
\begin{align}
    \widetilde{\resid}_\ell^{(t)}
        &= \resid_\ell^{(t)}
          + \underbrace{\mathrm{Attn}_\ell\!\bigl(\mathrm{RMSNorm}(\resid_\ell^{(1:T)})\bigr)^{(t)}}_{\displaystyle a_\ell^{(t)}}, \label{eq:setup-attn}\\[2pt]
    \resid_{\ell+1}^{(t)}
        &= \widetilde{\resid}_\ell^{(t)}
          + \underbrace{\mathrm{MLP}_\ell\!\bigl(\mathrm{RMSNorm}(\widetilde{\resid}_\ell^{(t)})\bigr)}_{\displaystyle m_\ell^{(t)}}. \label{eq:setup-mlp}
\end{align}

Fix a token position $t$ and drop the superscript. The attention output $a_\ell$ depends on the residual-stream states at \emph{other} token positions in the same layer, which are themselves determined by their own trajectories. From the vantage point of the trajectory $\{\resid_\ell\}_{\ell=0}^{L}$ at token $t$, therefore, $a_\ell$ enters as an exogenous input. We denote this input by $\ctrl_\ell \in \R^{p}$ (with $p = d$ under the choice $\ctrl_\ell = a_\ell$; the ablation $\ctrl_\ell = (a_\ell, m_\ell)$ is considered separately).

\begin{definition}[Controlled depth recurrence]
\label{def:recurrence}
Fix a token position $t \in \{1,\dots,T\}$. Let $\mathcal{X} \subseteq \R^{d}$ be the state space and $\mathcal{U} \subseteq \R^{p}$ the control space. The \emph{controlled depth recurrence} at token $t$ is the discrete-time dynamical system
\begin{equation}
\label{eq:setup-recurrence}
    \resid_{\ell+1} = F(\resid_\ell,\ctrl_\ell) = \resid_\ell + \ctrl_\ell + m\!\bigl(\resid_\ell + \ctrl_\ell\bigr),
    \qquad \ell = 0,\dots,L-1,
\end{equation}
where $m(\cdot) = \mathrm{MLP}(\mathrm{RMSNorm}(\cdot))$ and $F : \mathcal{X} \times \mathcal{U} \to \mathcal{X}$ is the layer map.
\end{definition}

\begin{remark}[Cross-token coupling]
\label{rem:cross-token}
Cross-token dependence enters through the empirical distribution of $\{\ctrl_\ell\}$, not through $F$. This is the DMDc convention for exogenous inputs \citep{proctor2016dmdc}. Analyses that require modelling cross-token dynamics jointly can be handled by lifting to the $T$-token joint state $\resid_\ell \in \R^{Td}$; we do not require this for our results. The cost of the convention is measured directly in \Cref{sec:exp-exogeneity}.
\end{remark}

\paragraph{Data-generating process.}
The calibration corpus induces an empirical distribution over trajectories $\{(\resid_\ell,\ctrl_\ell)\}_{\ell=0}^{L-1}$. We assume this distribution converges, as the corpus grows, to a joint measure $\mu \otimes \nu$ on $\mathcal{X} \times \mathcal{U}$, where $\mu$ is an invariant measure for the state process and $\nu$ is a stationary distribution for the controls. Existence of such measures under mild regularity is standard \citep{brunton2022modern}; we work with them as given. Because residual-stream norms grow with depth, $\mu$ is in practice the layer marginal $\mu_\ell$ rather than one depth-invariant law; \Cref{sec:exp-stationarity} measures the growth and shows it is recovered as a Koopman mode once the dictionary can express it.

\subsection{Observables, the Koopman operator, and the finite-dimensional realisation}
\label{sec:dynamics-koopman}

The Koopman operator lifts the nonlinear map $F$ to a linear operator on an infinite-dimensional space of \emph{observables} \citep{koopman1931hamiltonian}.

\begin{definition}[Observable space]
\label{def:observable-space}
The observable space is $\Hilb \coloneqq L^{2}(\mathcal{X},\mu)$, the space of measurable $\psi : \mathcal{X} \to \C$ with $\int|\psi|^{2}\,d\mu < \infty$, equipped with the inner product $\langle \psi,\varphi\rangle_{\Hilb} = \int \psi\,\bar{\varphi}\,d\mu$. A \emph{vector-valued observable} is a stacking $\obs = (\psi_1,\dots,\psi_N)^{\top} : \mathcal{X} \to \C^{N}$ with each $\psi_i \in \Hilb$.
\end{definition}

\begin{definition}[Controlled Koopman operator]
\label{def:koopman}
The \emph{controlled Koopman operator} $\opK$ associated with $F$ sends an observable to its composition with the layer map:
\begin{equation}
\label{eq:setup-koopman}
    (\opK\,\psi)(\resid,\ctrl) \;\coloneqq\; \psi\!\bigl(F(\resid,\ctrl)\bigr),
    \qquad \psi \in \Hilb.
\end{equation}
Although $F$ is nonlinear in $\resid$, $\opK$ is linear in $\psi$.
\end{definition}

We write $\opK_{\ctrl}$ for the family of operators obtained by fixing the control, $(\opK_{\ctrl}\psi)(\resid) = \psi(F(\resid,\ctrl))$, and $\bar\opK \coloneqq \E_{\ctrl\sim\nu}[\opK_\ctrl]$ for the control average. Below, ``the Koopman operator'' without qualification means $\bar\opK$.

\begin{definition}[Koopman-invariant subspace]
\label{def:invariance}
Let $\Hilb_N \coloneqq \spn\{\psi_1,\dots,\psi_N\} \subseteq \Hilb$ be the linear span of a dictionary $\obs$. $\Hilb_N$ is \emph{$\opK$-invariant} if for every $\psi \in \Hilb_N$ and every $\ctrl \in \mathcal{U}$, the function $\resid \mapsto \psi(F(\resid,\ctrl))$ belongs to $\Hilb_N$ as a function of $\resid$.
\end{definition}

$\opK$-invariance is a substantive condition on the dictionary: it requires the finite-dimensional span to be closed under the Koopman flow. When it holds, the operator restricts to a finite-dimensional linear map $\opK|_{\Hilb_N} : \Hilb_N \to \Hilb_N$ admitting a matrix representation on any basis.

\begin{definition}[Koopman realisation]
\label{def:realisation}
Assume $\Hilb_N$ is $\opK$-invariant. The \emph{Koopman realisation} of $F$ on $\obs$ is the pair $(A,B) \in \C^{N\times N} \times \C^{N \times p}$ satisfying
\begin{equation}
\label{eq:setup-realisation}
    \obs\bigl(F(\resid,\ctrl)\bigr)
    \;=\; A\,\obs(\resid) \;+\; B\,\ctrl
    \qquad (\mu\otimes\nu)\text{-a.e.}
\end{equation}
Here $A$ is the matrix representation of $\opK|_{\Hilb_N}$ in the basis $\{\psi_i\}$ and $B$ encodes the linear response of $\obs$ to the control input. When $\Hilb_N$ is not $\opK$-invariant, we take $(A,B)$ to be the population least-squares approximation
\begin{equation}
\label{eq:setup-realisation-ls}
    (A,B) \;\coloneqq\; \argmin_{A',B'}\;
    \E_{(\resid,\ctrl) \sim \mu\otimes\nu}
    \bigl\|\obs\bigl(F(\resid,\ctrl)\bigr) - A'\,\obs(\resid) - B'\,\ctrl\bigr\|_{2}^{2},
\end{equation}
and denote the projection residual by $\varepsilon(\resid,\ctrl) \coloneqq \obs(F(\resid,\ctrl)) - A\,\obs(\resid) - B\,\ctrl$.
\end{definition}

The realisation is defined at the level of an abstract basis of $\Hilb_N$. Different bases yield different matrix representations related by similarity: if $\obs'(\resid) = T\,\obs(\resid)$ for invertible $T \in \C^{N \times N}$, then $A' = TAT^{-1}$ and $B' = TB$. \Cref{thm:existence} formalises that these are the \emph{only} ambiguities in the definition of $(A,B)$.

\begin{definition}[Read-out]
\label{def:readout}
A \emph{read-out} is a linear operator $C \in \C^{q \times N}$ that maps the observable at any layer to a $q$-dimensional target: $y_\ell = C\,\obs(\resid_\ell) \in \C^{q}$. Common choices include $C = W_{U}^{\top}P$ (projection onto specified logit directions) or $C$ encoding a probe direction.
\end{definition}

The triple $(A,B,C)$ constitutes the KSA realisation, a discrete-time linear time-invariant (LTI) system on $\C^{N}$. We denote its input--output map by $G : \{\ctrl_\ell\} \mapsto \{y_\ell\}$ and its transfer function by $\hat{G}(z) = C\,(zI - A)^{-1}\,B$, with $\|\hat{G}\|_{\Hinf}$ the Hardy $\Hinf$-norm on the closed unit disk. \Cref{sec:reduction} bounds the error of truncating this system to $r \ll N$ modes.

\begin{figure*}
\centering
\includegraphics[width=\textwidth]{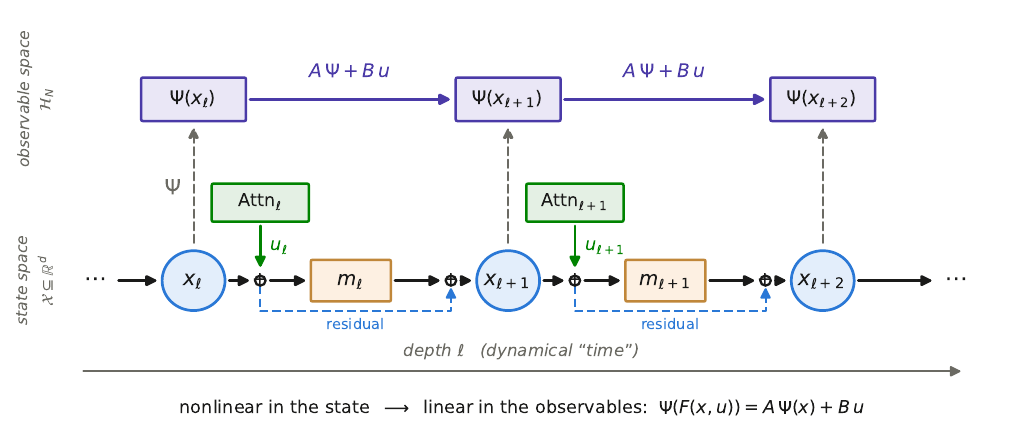}
\caption{Transformer forward pass as a controlled dynamical system (bottom) and its Koopman lift (top). With a Koopman-invariant dictionary the lifted dynamics become exactly linear, $\obs(F(\resid,\ctrl)) = A\,\obs(\resid) + B\,\ctrl$. The basis-independent spectrum of $A$ is the identifiable object (\Cref{thm:existence}).}
\label{fig:schematic}
\end{figure*}

\subsection{The EDMDc estimator}
\label{sec:dynamics-edmdc}

The Koopman realisation $(A,B)$ is not directly accessible; we estimate it from calibration data using the extended dynamic mode decomposition with control (EDMDc) of \cite{williams2015edmd,proctor2016dmdc}.

\paragraph{Calibration data.}
Let $\mathcal{D}$ be a calibration corpus consisting of $M$ layer--token samples $\{(\resid_\ell^{(i)},\ctrl_\ell^{(i)})\}_{i=1}^{M}$ obtained by running the transformer on a corpus of prompts and caching residual-stream states and control inputs at the analysis position. Assemble the \emph{snapshot matrices}
\begin{align}
\label{eq:setup-snapshots}
    X   &\coloneqq \bigl[\,\obs(\resid_0^{(1)}),\dots,\obs(\resid_{L-1}^{(1)}),\obs(\resid_0^{(2)}),\dots\bigr] \in \C^{N\times M}, \\
    Y   &\coloneqq \bigl[\,\obs(\resid_1^{(1)}),\dots,\obs(\resid_{L}^{(1)}),\obs(\resid_1^{(2)}),\dots\bigr] \in \C^{N\times M}, \\
    \Xi &\coloneqq \bigl[\,\ctrl_0^{(1)},\dots,\ctrl_{L-1}^{(1)},\ctrl_0^{(2)},\dots\bigr] \in \C^{p\times M}.
\end{align}
The columns of $Y$ are the next-layer observables corresponding to the columns of $X$; $\Xi$ collects the control inputs applied between $X$ and $Y$.

\begin{definition}[EDMDc estimator]
\label{def:edmdc}
Fix a regularisation parameter $\gamma \geq 0$. The \emph{EDMDc estimator} on $M$ samples is
\begin{equation}
\label{eq:setup-edmdc}
\begin{split}
    (\hat A_M,\hat B_M)
    \;\coloneqq\; \argmin_{A' \in \C^{N\times N},\, B' \in \C^{N \times p}}\;
    &\|Y - A'X - B'\Xi\|_{F}^{2} \\
    &+ \gamma\bigl(\|A'\|_{F}^{2} + \|B'\|_{F}^{2}\bigr).
\end{split}
\end{equation}

When $\gamma = 0$ and the stacked snapshot matrix $\bigl[\begin{smallmatrix}X\\\Xi\end{smallmatrix}\bigr]$ has full row rank $N+p$, the solution is
\begin{equation}
\label{eq:setup-edmdc-closed}
    [\hat A_M,\;\hat B_M]
    \;=\; Y\,\begin{bmatrix}X\\ \Xi\end{bmatrix}^{\!\dagger}.
\end{equation}
\end{definition}

The estimator is the empirical counterpart of the population minimiser \eqref{eq:setup-realisation-ls}. Under $\opK$-invariance and persistent excitation it converges to $(A,B)$; the finite-sample rate is the subject of \Cref{thm:identifiability}. All experiments use $\gamma = 0$, the unregularised pseudo-inverse solution.

\subsection{Spectral decomposition and the definition of a Koopman circuit}
\label{sec:dynamics-spectrum}

We now attach the interpretive apparatus to the estimated realisation.

\paragraph{Eigendecomposition.}
Assume $\hat A_M \in \C^{N \times N}$ is diagonalisable (the Jordan-form extension is discussed in \Cref{rem:jordan}). There exist right eigenvectors $\{v_k\}_{k=1}^{N}$ and left eigenvectors $\{\phi_k\}_{k=1}^{N}$ in $\C^{N}$ such that
\begin{align}
\label{eq:setup-eig-right}
    \hat A_M \, v_k &= \lambda_k\, v_k, \\
\label{eq:setup-eig-left}
    \phi_k^{\top}\, \hat A_M &= \lambda_k\, \phi_k^{\top},
\end{align}
biorthogonally normalised so that $\phi_j^{\top} v_k = \delta_{jk}$. The eigendecomposition is $\hat A_M = \sum_{k=1}^{N} \lambda_k\, v_k \phi_k^{\top}$.

\begin{definition}[Spectral projector]
\label{def:spectral-projector}
The \emph{spectral projector} associated with the eigenpair $(\lambda_k,v_k,\phi_k)$ is
\begin{equation}
\label{eq:setup-projector}
    \Proj_k \;\coloneqq\; v_k\,\phi_k^{\top} \;\in\; \C^{N\times N}.
\end{equation}
The family $\{\Proj_k\}_{k=1}^{N}$ satisfies $\Proj_j\Proj_k = \delta_{jk}\Proj_k$ and $\sum_k\Proj_k = I_N$.
\end{definition}

\paragraph{Koopman eigenfunctions.}
Corresponding to each left eigenvector $\phi_k \in \C^{N}$ is a scalar observable $\varphi_k \in \Hilb_N$ obtained by contraction with the dictionary,
\begin{equation}
\label{eq:setup-eigenfunction}
    \varphi_k(\resid) \;\coloneqq\; \phi_k^{\top}\,\obs(\resid),
\end{equation}
which is a \emph{Koopman eigenfunction} of $\opK|_{\Hilb_N}$ in the classical sense: on autonomous dynamics ($\ctrl = 0$), $\varphi_k(F(\resid,0)) = \lambda_k\,\varphi_k(\resid)$ $\mu$-a.e. We distinguish notationally between the eigenfunction $\varphi_k : \mathcal{X}\to\C$ and its left-eigenvector representative $\phi_k \in \C^{N}$, a coordinate vector in the dictionary basis.

\begin{definition}[Koopman circuit]
\label{def:circuit}
A \emph{Koopman circuit} associated with the realisation $(\hat A_M,\hat B_M,C)$ is a quadruple $(\lambda_k, \phi_k, v_k, \Proj_k)$ consisting of an eigenvalue $\lambda_k \in \C$, its left eigenvector $\phi_k$ (equivalently its eigenfunction $\varphi_k$), its right eigenvector $v_k$ (the \emph{Koopman mode}), and its spectral projector $\Proj_k = v_k\phi_k^{\top}$.
\end{definition}

The interpretive taxonomy attached to the eigenvalue is the classical one: $|\lambda_k|>1$ amplifying, $|\lambda_k|\approx 1$ transport, $|\lambda_k|<1$ decaying, $\arg\lambda_k \neq 0$ rotational. Modes live in observable space $\C^N$; \Cref{sec:exp-readout} gives the fitted linear map back to residual-stream coordinates $\R^{d}$ that every mode-level empirical claim depends on.

\subsection{Assumptions}
\label{sec:dynamics-assumptions}

All results below rest on three assumptions, stated here and discussed immediately after.

\begin{assumption}[Dictionary $\opK$-invariance]
\label{ass:invariance}
The dictionary span $\Hilb_N = \spn\{\psi_1,\dots,\psi_N\}$ is $\opK$-invariant in the sense of \Cref{def:invariance}. Equivalently, for every $\psi \in \Hilb_N$ and $\nu$-almost every $\ctrl \in \mathcal{U}$, the composition $\psi \circ F(\cdot,\ctrl)$ belongs to $\Hilb_N$. This is (DR2) of \Cref{def:richness}.
\end{assumption}

\begin{assumption}[Persistent excitation]
\label{ass:excitation}
There exists $\eta > 0$ such that the empirical control covariance $\Sigma_M \coloneqq \frac{1}{M}\Xi\Xi^{\top} \in \C^{p \times p}$ satisfies
\begin{equation}
\label{eq:setup-excitation}
    \lambda_{\min}\!\bigl(\Sigma_M\bigr) \;\geq\; \eta
\end{equation}
with probability at least $1 - \delta_M$, where $\delta_M \to 0$ as $M \to \infty$. Equivalently, the control sequence excites all $p$ input directions in the limit.
\end{assumption}

\begin{assumption}[Spectral separation]
\label{ass:separation}
There exists $\Delta > 0$ such that the eigenvalues of the true Koopman compression $A$ are pairwise separated:
\begin{equation}
\label{eq:setup-separation}
    \bigl|\lambda_i(A) - \lambda_j(A)\bigr| \;\geq\; \Delta
    \qquad \text{for all } i \neq j.
\end{equation}
\end{assumption}

\begin{remark}[On \Cref{ass:invariance}]
\label{rem:invariance}
\Cref{ass:invariance} is the substantive richness condition. In practice, transformer-aligned dictionaries built from the model's own principal directions and random Fourier features \emph{approximate} the condition well but do not satisfy it exactly. The theorems are stated under exact invariance; \Cref{rem:approximate-invariance} and \Cref{rem:ident-approx} give the extension to the approximate case, in which projection-residual bias terms enter the bounds additively. \Cref{sec:exp-sae} measures the residual $\|\varepsilon\|$ for public SAEs directly rather than assuming it small.
\end{remark}

\begin{remark}[On \Cref{ass:excitation}]
\label{rem:excitation}
Persistent excitation is the classical identifiability requirement in system identification \citep{ljung1999system,willems2005fundamental}. It ensures the empirical control Gramian $\Sigma_M$ is invertible for sufficiently large $M$, which is necessary for the input-to-observable map $B$ to be recovered. For transformer analyses the condition follows if the calibration corpus is sufficiently diverse across attention patterns.
\end{remark}

\begin{remark}[On \Cref{ass:separation}]
\label{rem:separation}
Spectral separation is the strongest assumption. It excludes eigenvalue degeneracies that would render individual modes non-identifiable, though \emph{clusters} of near-degenerate modes remain identifiable via the cluster-level version of the theorem (\Cref{cor:cluster-ident}). \Cref{thm:gap-free-identifiability} shows the eigenvalue guarantee survives dropping this assumption entirely, at the cost of a dimensional factor.
\end{remark}

\subsection{Three modelling conventions, all measured}
\label{sec:dynamics-conventions}

Three modelling choices define the estimand, and two of them constrain what it can be. We state them here and measure each of them in \Cref{sec:experiments}, rather than leaving them implicit.

\emph{First, the realisation is depth-indexed.} Since $m_\ell$ is layer-dependent, we fit $\hat A_\ell$ separately at each analysed layer and the spectrum is a family $\ell\mapsto\sigma(A_\ell)$. This is resolved rather than assumed: adjacent-layer spectral distance exceeds the split-half resolution floor by median factors of $\ResGptDriftRatio$, $\ResGemmaDriftRatio$ and $\ResQwenDriftRatio$ across the suite, and a pooled $A$ predicts held-out transitions $\ResGptPoolPenalty\%$, $\ResGemmaPoolPenalty\%$ and $\ResQwenPoolPenalty\%$ worse (\Cref{sec:exp-depth-homogeneity}). \Cref{thm:identifiability} is proved at fixed $\ell$ and needs no depth-homogeneity.

\emph{Second, the realisation is defined relative to the layer marginal.} Residual-stream norm grows with depth ($\ResGptNormGrowth\times$ on GPT-2 small across the analysed range), so a single invariant law $\mu$ is not available and the per-layer fit enforces the marginal $\mu_\ell$ instead. Adding $\log\|\resid_\ell\|$ to the dictionary then recovers that growth as its own real mode on all three models ($\ResGptNormMode$, $\ResGemmaNormMode$, $\ResQwenNormMode$ against measured growth $\ResGptNormEmpirical$, $\ResGemmaNormEmpirical$, $\ResQwenNormEmpirical$) --- a prediction that could have failed and did not (\Cref{sec:exp-stationarity}).

\emph{Third, and least innocuous, exogeneity of the control is a choice.} The write $\ctrl_\ell$ is computed from the same residual stream, so treating it as exogenous is a modelling convention rather than a fact. The largest canonical correlation between $\obs(\resid_\ell)$ and $\ctrl_\ell$ is $\rho_1 = \ResGptRhoOne$ (GPT-2 small), $\ResGemmaRhoOne$ (Gemma-2-2B) and $\ResQwenRhoOne$ (Qwen3-8B-Base), though the controls are mostly unexplained by the state ($R^2 = \ResGptCtrlRsq$, $\ResGemmaCtrlRsq$, $\ResQwenCtrlRsq$). Residualising the control against the lifted state (Frisch--Waugh--Lovell) shifts the spectrum by $\ResGptExogShift\times$, $\ResGemmaExogShift\times$ and $\ResQwenExogShift\times$ the split-half floor, so the two conventions yield genuinely different estimands at our resolution. We report the naive realisation as primary and the residualised version alongside (\Cref{sec:exp-exogeneity}).

\section{Existence and Uniqueness of the Koopman Realisation}
\label{sec:existence}

This section establishes the foundational claim on which everything later rests: given a $\opK$-invariant dictionary, the object we call ``the Koopman realisation'' - the matrix $A$ associated with the operator's action on the dictionary span - exists, is uniquely determined up to a change of basis, and has a spectrum $\sigma(A)$ that is a coordinate-free property of the pair $(F,\Hilb_N)$. Uniqueness up to similarity is the correct invariance property: it says the eigenvalues, spectral projectors and modal structure of $A$ are attributes of the transformer and dictionary, not of the particular ordering or scaling of the basis vectors $\{\psi_i\}$. Without this, \Cref{def:spectral-identifiability} would have no estimand to speak of.

The proof proceeds in three preparatory lemmas followed by consolidation. \Cref{lem:pointwise} shows that under \Cref{ass:invariance} the pointwise Koopman matrix $A_{\ctrl}$ exists and is unique for each control value. \Cref{lem:basis-change} establishes the similarity transformation under change of basis. \Cref{lem:spectrum-embed} establishes the spectral inclusion $\sigma(A) \subseteq \sigma(\bar\opK)$ via standard operator-theoretic facts about invariant subspaces. We then assemble the theorem and derive a corollary formalising the coordinate-free invariants of Koopman circuits.

\subsection{Regularity conditions and statement}
\label{sec:existence-statement}

We work throughout under \Cref{ass:invariance} and the linear independence condition
\begin{equation}
\label{eq:existence-R1}
\text{(R1)}\qquad
    G_{\obs} \;\coloneqq\; \E_{\mu}\!\bigl[\obs(\resid)\,\obs(\resid)^{*}\bigr]
    \;\succ\; 0,
\end{equation}
which asserts that $\{\psi_i\}_{i=1}^{N}$ are linearly independent as elements of $L^{2}(\mu)$ - that is, (DR1) of \Cref{def:richness}. We further impose the standard state--control decorrelation condition
\begin{equation}
\label{eq:existence-R2}
\text{(R2)}\qquad
    \E_{(\resid,\ctrl)\sim\mu\otimes\nu}\!\bigl[\obs(\resid)\,\ctrl^{*}\bigr]
    \;=\; \E_{\mu}[\obs(\resid)]\;\E_{\nu}[\ctrl]^{*},
\end{equation}
which follows automatically when $\resid$ and $\ctrl$ are independent under the joint measure. Without loss of generality we centre the controls, $\E_{\nu}[\ctrl] = 0$, so that (R2) becomes $\E[\obs(\resid)\,\ctrl^{*}] = 0$.

\begin{theorem}[Existence and uniqueness of the Koopman realisation]
\label{thm:existence}
Assume \Cref{ass:invariance} together with \eqref{eq:existence-R1} and \eqref{eq:existence-R2}. Then:
\begin{enumerate}
\item[(a)] \emph{(Existence and uniqueness.)} There exists a unique matrix $A \in \C^{N\times N}$ satisfying
\begin{equation}
\label{eq:existence-A-def}
    A\,\obs(\resid)
    \;=\;
    \E_{\ctrl \sim \nu}\!\bigl[\,\obs\bigl(F(\resid,\ctrl)\bigr)\,\bigr]
    \qquad \mu\text{-a.e.}\ \resid \in \mathcal{X}.
\end{equation}
Equivalently, $A$ is the matrix representation of the bounded linear operator $\bar{\opK}|_{\Hilb_N}$ on the basis $\{\psi_i\}_{i=1}^{N}$.

\item[(b)] \emph{(Basis-change equivariance.)} Under any change of basis $\obs'(\resid) = T\,\obs(\resid)$ with $T \in GL(N,\C)$, the realisation transforms as $A' = T A T^{-1}$.

\item[(c)] \emph{(Spectral invariance.)} The spectrum $\sigma(A) \subseteq \C$ is invariant under basis change and constitutes a coordinate-free invariant of the pair $(F,\Hilb_N)$.

\item[(d)] \emph{(Spectral embedding.)} $\sigma(A) \subseteq \sigma(\bar{\opK})$.

\item[(e)] \emph{(Agreement with the least-squares realisation.)} The matrix $A$ coincides with the population minimiser $A_{\mathrm{LS}}$ of the least-squares problem \eqref{eq:setup-realisation-ls}.
\end{enumerate}
\end{theorem}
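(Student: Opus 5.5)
The plan follows the three preparatory lemmas announced above, taking (a) first because (b)--(e) all reduce to it.

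For (a), fix a control value $\ctrl$. By \Cref{ass:invariance}, each component $\psi_i\circ F(\cdot,\ctrl)$ lies in $\Hilb_N$, so there are coefficients $(A_\ctrl)_{ij}$ with $\psi_i(F(\resid,\ctrl))=\sum_j (A_\ctrl)_{ij}\psi_j(\resid)$ $\mu$-a.e. These coefficients are unique because (R1) gives linear independence in $L^2(\mu)$. Concretely, we obtain the formula
\begin{equation*}
A_\ctrl \;=\; \E_\mu\bigl[\obs(F(\resid,\ctrl))\,\obs(\resid)^{*}\bigr]\,G_{\obs}^{-1},
\end{equation*}
by multiplying the identity by $\obs(\resid)^*$, integrating, and inverting $G_\obs\succ 0$. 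This is \Cref{lem:pointwise}. The formula also shows that $\ctrl\mapsto A_\ctrl$ is measurable. Averaging over $\nu$ and using Fubini then gives $A\coloneqq\E_\nu[A_\ctrl]$, which satisfies \eqref{eq:existence-A-def}.

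Integrability of $A_\ctrl$ needs care. I will assume, or extract from the standing $L^2$ setting, that $\E_{\mu\otimes\nu}\|\obs(F(\resid,\ctrl))\|^2<\infty$. Then Cauchy--Schwarz bounds $\|A_\ctrl\|$ by an integrable function of $\ctrl$.

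Uniqueness of $A$ follows the same way. If $A\obs = A'\obs$ $\mu$-a.e., then $(A-A')G_\obs=0$, and hence $A=A'$. The same computation identifies $A$ as the matrix of $\bar\opK|_{\Hilb_N}$ in the basis $\{\psi_i\}$. Since $\bar\opK\psi_i=\E_\nu[\psi_i\circ F(\cdot,\ctrl)]$ and $\Hilb_N$ is a closed subspace, $\bar\opK$ maps $\Hilb_N$ into itself. Boundedness is automatic on a finite-dimensional space.

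For (b), substitute $\obs'=T\obs$ into \eqref{eq:existence-A-def}. This gives $A'T\obs=TA\obs$ $\mu$-a.e. The new Gram matrix $G_{\obs'}=TG_\obs T^*$ is still $\succ0$, so uniqueness from (a) forces $A'T=TA$, that is, $A'=TAT^{-1}$ (\Cref{lem:basis-change}).

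Part (c) is then immediate, since similar matrices share a characteristic polynomial. The spectrum depends only on the operator $\bar\opK|_{\Hilb_N}$, which is determined by $(F,\Hilb_N)$ alone.

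For (d) (\Cref{lem:spectrum-embed}), take $\lambda\in\sigma(A)$ with a right eigenvector $c$, so $Ac=\lambda c$. The obvious observable $\varphi=c^{\top}\obs$ corresponds to a left eigenvector, so I use the transpose convention instead. Write $\bar\opK\psi_i=\sum_j A_{ij}\psi_j$ and let $w$ be a left eigenvector, $w^\top A=\lambda w^\top$. Then $\varphi\coloneqq w^\top\obs$ satisfies $\bar\opK\varphi=\lambda\varphi$, and $\varphi\neq0$ by (R1). So $\lambda$ is an eigenvalue of $\bar\opK$, hence lies in $\sigma(\bar\opK)$.

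For (e), the normal equations of \eqref{eq:setup-realisation-ls} read
\begin{align*}
A'G_\obs + B'\,\E[\ctrl\obs^*] &= \E[\obs(F)\obs^*],\\
A'\,\E[\obs\ctrl^*] + B'\,\E[\ctrl\ctrl^*] &= \E[\obs(F)\ctrl^*].
\end{align*}
By (R2) with centred controls, the cross terms vanish. The first equation then gives $A_{\mathrm{LS}}=\E[\obs(F)\obs^*]G_\obs^{-1}$, with the expectation over $\mu\otimes\nu$. By Fubini this equals $\E_\nu[A_\ctrl]=A$.

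I expect the main obstacle to be exactly this Fubini step. The LS solution is also unique, since the block covariance is block-diagonal with $G_\obs\succ0$. If $\E\ctrl\ctrl^*$ is singular, $B$ may fail to be unique, but $A$ is still unique, and that is all the statement claims.

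The $B$-part deserves a comment. \Cref{def:realisation} posits the affine form $A\obs+B\ctrl$, whereas \eqref{eq:existence-A-def} averages the control out. Statement (e) concerns only $A$, so I will not need affine dependence on $\ctrl$. What the argument does need is the integrability hypothesis, so that the population quantities in (a) and (e) coincide.
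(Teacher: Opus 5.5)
Your proof is correct. For (a), (c) and (e) it follows the paper: the pointwise matrix $A_\ctrl$ of \Cref{lem:pointwise}, its average over $\nu$, uniqueness by right-multiplying with $\obs^{*}$ and inverting $G_\obs$, and the normal equations decoupled by (R2) and centring.

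You depart from the paper in (b) and, more substantively, in (d).

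For (b), the paper conjugates each $A_\ctrl$ through its closed form and then averages. You work directly with the averaged identity, $A'T\obs = TA\obs$ $\mu$-a.e., and invoke uniqueness. The two are equivalent; yours is slightly shorter.

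For (d), the paper splits $\Hilb = \Hilb_N\oplus\Hilb_N^{\perp}$, writes $\bar\opK$ in block upper-triangular form, and cites the fact that the spectrum of such an operator is the union of the spectra of its diagonal blocks. You instead take a left eigenvector $w^{\top}A = \lambda w^{\top}$ and check that $w^{\top}\obs$ is an eigenfunction of $\bar\opK$. It is nonzero because its squared $L^2(\mu)$ norm is $w^{\top}G_\obs\bar w>0$. Your route is more elementary and gives more:
\begin{itemize}
\item it places $\sigma(A)$ inside the point spectrum of $\bar\opK$;
\item it produces the explicit eigenfunction that \eqref{eq:setup-eigenfunction} later uses;
\item it needs $\bar\opK$ only on $\Hilb_N$.
\end{itemize}
The paper's route needs $\bar\opK$ to be bounded on all of $L^2(\mu)$, which it takes for granted. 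It also relies on a union formula that is not an identity for triangular operator matrices in general. For example, the unilateral shift, viewed as the restriction of the bilateral shift to an invariant subspace, has strictly larger spectrum than the bilateral shift. The formula holds here only because $\dim\Hilb_N<\infty$, which is exactly the fact your argument uses directly.

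Your hypothesis $\E_{\mu\otimes\nu}\|\obs(F(\resid,\ctrl))\|^{2}<\infty$ is a genuine tightening, not an extra cost. It is needed anyway for \eqref{eq:setup-realisation-ls} to be finite. By Cauchy--Schwarz and Jensen it makes $\ctrl\mapsto\|A_\ctrl\|$ $\nu$-integrable, and it licenses the Fubini step in (e). The paper's justification, boundedness of $A_\ctrl$ on compact subsets of $\mathcal{U}$, does not give integrability when $\nu$ has unbounded support.
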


Parts (a)--(c) constitute the existence and uniqueness statement and make \Cref{def:spectral-identifiability} well posed by supplying an estimand independent of the dictionary's basis. Part~(d) is the spectral-inclusion claim: the recovered eigenvalues are genuine Koopman eigenvalues of the transformer's depth dynamics, not artefacts of the finite truncation. Part~(e) connects the abstract compression to the estimable object of \Cref{def:edmdc}: it is $A_{\mathrm{LS}}$ that EDMDc targets, and (e) shows the two coincide under the stated conditions.

\subsection{Preparatory lemmas}
\label{sec:existence-lemmas}

\begin{lemma}[Pointwise Koopman matrix]
\label{lem:pointwise}
Under \Cref{ass:invariance} and \eqref{eq:existence-R1}, for each $\ctrl \in \mathcal{U}$ there exists a unique matrix $A_{\ctrl} \in \C^{N\times N}$ such that
\begin{equation}
\label{eq:existence-Au}
    \obs\bigl(F(\resid,\ctrl)\bigr) \;=\; A_{\ctrl}\,\obs(\resid)
    \qquad \mu\text{-a.e.}\ \resid \in \mathcal{X}.
\end{equation}
The matrix admits the closed form
\begin{equation}
\label{eq:existence-Au-closed}
    A_{\ctrl}
    \;=\;
    \E_{\mu}\!\bigl[\obs\bigl(F(\resid,\ctrl)\bigr)\,\obs(\resid)^{*}\bigr]\,G_{\obs}^{-1}.
\end{equation}
\end{lemma}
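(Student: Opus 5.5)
The argument is a coordinate computation in the finite-dimensional space $\Hilb_N$, carried out separately for each fixed control value $\ctrl$, with (R1) supplying both uniqueness and the closed form.

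\emph{Existence.} By \Cref{ass:invariance}, for each basis element $\psi_i$ the function $\resid \mapsto \psi_i(F(\resid,\ctrl))$ lies in $\Hilb_N$. Because of (DR1), equivalently \eqref{eq:existence-R1}, the $\psi_j$ form a basis of $\Hilb_N$. Hence there are unique coefficients $a_{ij}(\ctrl)\in\C$ with
\[
\psi_i\circ F(\cdot,\ctrl)=\sum_j a_{ij}(\ctrl)\,\psi_j
\]
as elements of $L^2(\mu)$, that is, $\mu$-a.e. Stacking the rows gives a matrix $A_\ctrl$ with $\obs(F(\resid,\ctrl))=A_\ctrl\obs(\resid)$ for $\mu$-a.e.\ $\resid$.

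One subtlety needs a sentence. \Cref{def:invariance} requires the composition to land in $\Hilb_N$ for every $\ctrl$, so the lemma holds for each $\ctrl\in\mathcal{U}$. The $\nu$-a.e.\ version in \Cref{ass:invariance} would give the lemma only for $\nu$-a.e.\ $\ctrl$. I will use the every-$\ctrl$ form of \Cref{def:invariance}, and note that the $\nu$-a.e.\ version suffices for the later averaging in \Cref{thm:existence}.

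\emph{Uniqueness.} Suppose $A_\ctrl$ and $A'_\ctrl$ both satisfy \eqref{eq:existence-Au}, and set $D=A_\ctrl-A'_\ctrl$. Then $D\obs(\resid)=0$ for $\mu$-a.e.\ $\resid$. Multiplying on the right by $\obs(\resid)^*$ and taking expectations gives $D\,G_\obs=0$. Since $G_\obs\succ0$ is invertible, $D=0$. Equivalently, each row of $D$ is a coefficient vector of a linear combination of the $\psi_j$ that vanishes in $L^2(\mu)$, and (DR1) forces it to be zero.

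\emph{Closed form.} Multiply \eqref{eq:existence-Au} on the right by $\obs(\resid)^*$ and integrate against $\mu$:
\[
\E_\mu\bigl[\obs(F(\resid,\ctrl))\obs(\resid)^*\bigr]=A_\ctrl\,G_\obs .
\]
Inverting $G_\obs$ yields \eqref{eq:existence-Au-closed}.

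The only analytic step is checking that the left-hand expectation is finite. Each $\psi_i\circ F(\cdot,\ctrl)$ lies in $\Hilb_N\subset L^2(\mu)$ by invariance, and each $\psi_j$ lies in $L^2(\mu)$. Cauchy--Schwarz then gives integrability of every entry of $\obs(F(\resid,\ctrl))\obs(\resid)^*$. This also avoids any need to assume that $\mu$ is invariant under $F(\cdot,\ctrl)$.

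Nothing here is a real obstacle. The step needing the most care is this integrability check, together with being explicit that the equalities are in $L^2(\mu)$, hence $\mu$-a.e., rather than pointwise.
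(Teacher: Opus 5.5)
Your proposal is correct and follows the paper's proof essentially step for step. For existence you expand each $\psi_i\circ F(\cdot,\ctrl)$ in the basis of $\Hilb_N$ guaranteed by (R1); for the closed form and for uniqueness you right-multiply by $\obs(\resid)^{*}$, integrate against $\mu$, and invert $G_{\obs}$. Your two additions, the Cauchy--Schwarz integrability check and the observation that \Cref{ass:invariance} as stated yields the lemma only for $\nu$-a.e.\ $\ctrl$, are accurate refinements of points the paper passes over silently.
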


\begin{proof}
Fix $\ctrl \in \mathcal{U}$. By \Cref{ass:invariance}, for each $i \in \{1,\dots,N\}$ the function $\resid \mapsto \psi_i\bigl(F(\resid,\ctrl)\bigr)$ belongs to $\Hilb_N$. Since $\{\psi_j\}_{j=1}^{N}$ is a basis of $\Hilb_N$ under (R1), there is a unique tuple $(a_{i1},\dots,a_{iN}) \in \C^{N}$ with $\psi_i(F(\resid,\ctrl)) = \sum_{j=1}^{N} a_{ij}\,\psi_j(\resid)$ $\mu$-a.e. Stacking these coefficients into the $i$-th row of $A_{\ctrl}$ yields \eqref{eq:existence-Au}.

To derive \eqref{eq:existence-Au-closed}, right-multiply \eqref{eq:existence-Au} by $\obs(\resid)^{*}$ and take expectation under $\mu$:
\begin{equation}
    \E_{\mu}\!\bigl[\obs(F(\resid,\ctrl))\,\obs(\resid)^{*}\bigr]
    \;=\; A_{\ctrl}\,\E_{\mu}\!\bigl[\obs(\resid)\,\obs(\resid)^{*}\bigr]
    \;=\; A_{\ctrl}\,G_{\obs}.
\end{equation}
Since $G_{\obs} \succ 0$ by (R1), $A_{\ctrl}$ is uniquely determined as \eqref{eq:existence-Au-closed}.
\end{proof}

\begin{lemma}[Basis-change equivariance]
\label{lem:basis-change}
Under a change of dictionary $\obs'(\resid) = T\,\obs(\resid)$ with $T \in GL(N,\C)$, the pointwise Koopman matrix transforms as $A'_{\ctrl} = T\,A_{\ctrl}\,T^{-1}$ for every $\ctrl \in \mathcal{U}$. Consequently the averaged matrix $A = \E_{\nu}[A_{\ctrl}]$ transforms as $A' = T\,A\,T^{-1}$.
\end{lemma}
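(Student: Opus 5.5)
The plan is to derive the transformation law directly from the defining identity of Lemma~\ref{lem:pointwise}, using its uniqueness clause. Fix $\ctrl \in \mathcal{U}$ and let $\obs' = T\obs$ with $T \in GL(N,\C)$. I will first check that $\obs'$ satisfies the hypotheses of Lemma~\ref{lem:pointwise}, so that $A'_{\ctrl}$ is well defined. Its span coincides with $\Hilb_N$ because $T$ is invertible, so \Cref{ass:invariance} is inherited. Its Gram matrix is $G_{\obs'} = T\,G_{\obs}\,T^{*} \succ 0$, so (R1) is inherited as well.

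Next I apply $T$ to \eqref{eq:existence-Au}:
\begin{equation*}
    \obs'\bigl(F(\resid,\ctrl)\bigr)
    = T A_{\ctrl}\obs(\resid)
    = T A_{\ctrl} T^{-1}\obs'(\resid)
    \qquad \mu\text{-a.e.}
\end{equation*}
The matrix $TA_{\ctrl}T^{-1}$ therefore satisfies the defining relation for the primed dictionary. By the uniqueness part of Lemma~\ref{lem:pointwise}, it equals $A'_{\ctrl}$. As a sanity check, the same conclusion follows from the closed form \eqref{eq:existence-Au-closed}:
\begin{equation*}
    A'_{\ctrl}
    = T\,\E_{\mu}\bigl[\obs(F)\obs^{*}\bigr]\,T^{*}\,\bigl(T^{*}\bigr)^{-1}G_{\obs}^{-1}T^{-1}
    = T A_{\ctrl} T^{-1}.
\end{equation*}

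For the averaged statement, I take $A = \E_{\nu}[A_{\ctrl}]$ and use linearity of the expectation together with the fact that $T$ does not depend on $\ctrl$. This gives
\begin{equation*}
    A' = \E_{\nu}\bigl[TA_{\ctrl}T^{-1}\bigr] = T\,\E_{\nu}[A_{\ctrl}]\,T^{-1} = TAT^{-1}.
\end{equation*}

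The only point needing care is that $\ctrl \mapsto A_{\ctrl}$ must be measurable and $\nu$-integrable, so that the expectation exists. Measurability follows from the closed form, since $A_{\ctrl}$ is a measurable matrix function of $\ctrl$ built from $\mu$-expectations. For integrability I would assume, or note as implicit in the definition of $\bar\opK$, that $\E_{\nu}\|A_{\ctrl}\| < \infty$. Integrability transfers from $\obs$ to $\obs'$ because $\|TA_{\ctrl}T^{-1}\| \le \kappa(T)\,\|A_{\ctrl}\|$, where $\kappa(T) = \|T\|\,\|T^{-1}\|$. This step is bookkeeping rather than a real obstacle.
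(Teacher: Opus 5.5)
Your proof is correct, and your closed-form computation is exactly the paper's argument: the paper computes $G_{\obs'} = T\,G_{\obs}\,T^{*}$, substitutes this into \eqref{eq:existence-Au-closed}, and then notes that averaging over $\nu$ preserves the similarity. Your main route through the uniqueness clause of Lemma~\ref{lem:pointwise} is an equally valid and slightly more elementary variant that does not need the Gram matrix, and your explicit check that $\obs'$ inherits \Cref{ass:invariance} and (R1) fills in a step the paper leaves implicit.
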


\begin{proof}
Under the new basis the Gram matrix is $G_{\obs'} = \E_{\mu}[T\,\obs\,\obs^{*}\,T^{*}] = T\,G_{\obs}\,T^{*}$. Applying \eqref{eq:existence-Au-closed} to the new dictionary,
\begin{align*}
    A'_{\ctrl}
    &= \E_{\mu}\!\bigl[T\,\obs(F(\resid,\ctrl))\,(T\,\obs(\resid))^{*}\bigr]\,(T\,G_{\obs}\,T^{*})^{-1} \\
    &= T\,\E_{\mu}\!\bigl[\obs(F(\resid,\ctrl))\,\obs(\resid)^{*}\bigr]\,T^{*}\,(T^{*})^{-1}\,G_{\obs}^{-1}\,T^{-1} \\
    &= T\,A_{\ctrl}\,T^{-1}.
\end{align*}
Averaging over $\nu$ preserves the similarity: $A' = T\,A\,T^{-1}$.
\end{proof}

\begin{lemma}[Spectral embedding]
\label{lem:spectrum-embed}
Under \Cref{ass:invariance}, the finite-dimensional subspace $\Hilb_N$ is $\bar{\opK}$-invariant, and the spectrum of the restriction obeys
\begin{equation}
\label{eq:existence-embed}
    \sigma\!\bigl(\bar{\opK}|_{\Hilb_N}\bigr) \;\subseteq\; \sigma(\bar{\opK}).
\end{equation}
Moreover the matrix $A$ of \eqref{eq:existence-A-def} is a matrix representation of $\bar{\opK}|_{\Hilb_N}$, so $\sigma(A) = \sigma(\bar{\opK}|_{\Hilb_N})$, whence $\sigma(A) \subseteq \sigma(\bar{\opK})$.
\end{lemma}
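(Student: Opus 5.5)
The plan has three steps: show that $\Hilb_N$ is $\bar\opK$-invariant, identify $A$ with the matrix of $\bar\opK|_{\Hilb_N}$, and then obtain the spectral inclusion from the finite dimensionality of $\Hilb_N$.

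\emph{Invariance.} Take $\psi = c^{\top}\obs \in \Hilb_N$. \Cref{lem:pointwise} gives, for $\nu$-a.e.\ $\ctrl$, the identity $\opK_\ctrl\psi = c^{\top}A_\ctrl\obs$. Averaging over $\nu$ yields $\bar\opK\psi = c^{\top}\E_\nu[A_\ctrl]\,\obs$, which lies in $\Hilb_N$. This step needs $\E_\nu[A_\ctrl]$ to exist. I would check this from the closed form \eqref{eq:existence-Au-closed}: each entry of $A_\ctrl$ is bounded by $\|\obs\circ F(\cdot,\ctrl)\|_{L^2(\mu)}\|\obs\|_{L^2(\mu)}\|G_\obs^{-1}\|$, and I would assume this is $\nu$-integrable, which is implicit in $\bar\opK$ being defined on $\Hilb$. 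With that, Fubini lets the expectation pass through the finite sum.

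\emph{Matrix representation.} The map $\psi\mapsto\bar\opK\psi$ restricted to $\Hilb_N$ sends the coefficient vector $c$ to $A^{\top}c$, where $A = \E_\nu[A_\ctrl]$ is exactly the matrix of \eqref{eq:existence-A-def}. By (R1) the coordinate map $c\mapsto c^{\top}\obs$ is an isomorphism $\C^N\to\Hilb_N$. Hence $\bar\opK|_{\Hilb_N}$ is similar to $A^{\top}$, and so
\[
\sigma\bigl(\bar\opK|_{\Hilb_N}\bigr)=\sigma(A^{\top})=\sigma(A).
\]

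\emph{Inclusion.} In finite dimensions the spectrum consists only of eigenvalues. So let $\lambda\in\sigma(A)$, choose a left eigenvector $c$ with $c^{\top}A=\lambda c^{\top}$, and set $\varphi=c^{\top}\obs$. Then $\varphi\neq0$ by (R1), and $\bar\opK\varphi=\lambda\varphi$. Thus $\lambda$ is an eigenvalue of $\bar\opK$ on $\Hilb$, and in particular $\lambda\in\sigma(\bar\opK)$. This direct eigenfunction argument avoids general facts about spectra of restrictions; those facts can fail for infinite-dimensional invariant subspaces, but they are unnecessary here.

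The main obstacle is making sure $\bar\opK$ is a well-defined bounded operator on all of $\Hilb=L^2(\mu)$, so that $\sigma(\bar\opK)$ makes sense. Note that $\mu$-invariance alone does not give this, because $F(\cdot,\ctrl)$ need not preserve $\mu$ for each fixed $\ctrl$. I would first try to handle this by interpreting $\sigma(\bar\opK)$ in the sense of the operator's domain containing $\Hilb_N$. Since the argument only uses the point spectrum, the inclusion holds for any closed extension of $\bar\opK$, and I would state this explicitly.
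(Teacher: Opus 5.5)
Your proof is correct, and it reaches the inclusion by a different route from the paper.

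\textbf{Where the two proofs agree.} The first two steps match the paper's in substance. The paper gets $\bar\opK$-invariance from a Bochner integral of $\opK_\ctrl\psi$ inside the closed subspace $\Hilb_N$, rather than through the matrices $A_\ctrl$. It also says the matrix of $\bar\opK|_{\Hilb_N}$ ``is precisely $A$'', where you correctly get $A^{\top}$ in coefficient coordinates. Neither point affects the spectrum.

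\textbf{Where they differ.} The paper splits $\Hilb=\Hilb_N\oplus\Hilb_N^{\perp}$ and writes $\bar\opK$ in block upper-triangular form. It then invokes $\sigma(\bar\opK)=\sigma(\bar\opK|_{\Hilb_N})\cup\sigma(\text{compression to }\Hilb_N^{\perp})$. You instead turn each left eigenvector $c$ of $A$ into an explicit eigenfunction $c^{\top}\obs$ of $\bar\opK$.

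\textbf{What each route buys.} If its premises are granted, the paper's route says more, since it places the rest of $\sigma(\bar\opK)$ in the compression. But it presupposes that $\bar\opK$ is bounded on all of $\Hilb$. The identity it cites is also false for general block-triangular operators: only $\sigma(T)\subseteq\sigma(T_{11})\cup\sigma(T_{22})$ holds in general. The standard counterexample is the bilateral shift on $\ell^2(\mathbb{Z})$ restricted to the invariant subspace $\ell^2(\N)$. The identity holds here only because the $(1,1)$ block is finite-dimensional, and the reason it holds is exactly your eigenvector argument. Your route is more elementary. It gives the sharper conclusion $\sigma(A)\subseteq\sigma_p(\bar\opK)$, with eigenfunctions of the form \eqref{eq:setup-eigenfunction}. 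It also survives passage to closed unbounded extensions.

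\textbf{The obstacle you flag.} It is milder than you suggest. Under the paper's data-generating assumption that $\mu$ is invariant for the state process, i.e.\ $F_{\#}(\mu\otimes\nu)=\mu$, Jensen gives $\|\bar\opK\psi\|_{L^2(\mu)}\le\|\psi\|_{L^2(\mu)}$. So $\bar\opK$ is a contraction on $\Hilb$, even though no individual $\opK_\ctrl$ need be bounded. The same computation gives $\int\|\obs\circ F(\cdot,\ctrl)\|_{L^2(\mu)}\,d\nu(\ctrl)\le(\E_{\mu}\|\obs\|_2^2)^{1/2}<\infty$. That is the integrability of $\ctrl\mapsto A_\ctrl$ you assumed; the paper merely asserts it as ``uniform boundedness''.
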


\begin{proof}
Fix $\psi \in \Hilb_N$. By \Cref{ass:invariance}, for each $\ctrl \in \mathcal{U}$ we have $\opK_{\ctrl}\,\psi \in \Hilb_N$. Because $\Hilb_N$ is finite-dimensional and hence closed, the Bochner integral $\bar{\opK}\,\psi = \int \opK_{\ctrl}\,\psi\,d\nu(\ctrl)$ takes values in $\Hilb_N$ (the integrand is $\nu$-measurable and uniformly bounded in the finite-dimensional norm). This establishes $\bar{\opK}$-invariance of $\Hilb_N$.

Since $\Hilb_N$ is finite-dimensional it is complemented in $\Hilb$: $\Hilb = \Hilb_N \oplus \Hilb_N^{\perp}$. With respect to this decomposition, $\bar{\opK}$ has block-triangular form
\begin{equation}
    \bar{\opK}
    \;=\;
    \begin{pmatrix}
        \bar{\opK}|_{\Hilb_N} & * \\
        0                     & \bar{\opK}|_{\Hilb_N^{\perp}\to\Hilb_N^{\perp}}
    \end{pmatrix},
\end{equation}
because the $(2,1)$ block vanishes by invariance. A standard result in operator theory then gives $\sigma(\bar{\opK}) = \sigma(\bar{\opK}|_{\Hilb_N}) \cup \sigma(\bar{\opK}|_{\Hilb_N^{\perp}\to\Hilb_N^{\perp}})$ \citep[Ch.~III, \S4]{kato1995perturbation}, so $\sigma(\bar{\opK}|_{\Hilb_N}) \subseteq \sigma(\bar{\opK})$, which is \eqref{eq:existence-embed}.

Finally, expressing $\bar{\opK}|_{\Hilb_N}$ in the basis $\{\psi_i\}_{i=1}^{N}$ produces a matrix whose action satisfies \eqref{eq:existence-A-def}, and this matrix is precisely $A$. Since spectra of finite-dimensional operators equal spectra of their matrix representations, $\sigma(A) = \sigma(\bar{\opK}|_{\Hilb_N})$.
\end{proof}

\subsection{Proof of \texorpdfstring{\Cref{thm:existence}}{Theorem 5.1}}
\label{sec:existence-proof}

\begin{proof}[Proof of \Cref{thm:existence}]
Existence of $A$ satisfying \eqref{eq:existence-A-def} follows from \Cref{lem:pointwise} by averaging over $\ctrl \sim \nu$: for $\mu$-a.e.\ $\resid \in \mathcal{X}$,
\begin{equation}
    \E_{\ctrl\sim\nu}\!\bigl[\obs(F(\resid,\ctrl))\bigr]
    \;=\; \E_{\nu}[A_{\ctrl}\,\obs(\resid)]
    \;=\; \E_{\nu}[A_{\ctrl}]\,\obs(\resid)
    \;=\; A\,\obs(\resid),
\end{equation}
where $A \coloneqq \E_{\nu}[A_{\ctrl}]$. The average is well defined in $\C^{N\times N}$ because the family $\{A_{\ctrl}\}_{\ctrl\in\mathcal{U}}$ is $\nu$-measurable by the closed form \eqref{eq:existence-Au-closed} and bounded on compact subsets of $\mathcal{U}$; the Bochner integral therefore converges. Uniqueness of $A$ follows by right-multiplying \eqref{eq:existence-A-def} by $\obs(\resid)^{*}$, taking expectation under $\mu$, and applying $G_{\obs} \succ 0$. This proves (a).

Part (b) is \Cref{lem:basis-change}. Part (c) follows because $\sigma(TAT^{-1}) = \sigma(A)$ for any $T \in GL(N,\C)$, so the spectrum depends only on the similarity class of $A$ and not on the choice of basis for $\Hilb_N$. Part (d) is \Cref{lem:spectrum-embed}.

It remains to prove (e). The population least-squares realisation $(A_{\mathrm{LS}}, B_{\mathrm{LS}})$ of \Cref{def:realisation} solves
\begin{equation}
    \min_{A',B'}\;
    \E_{(\resid,\ctrl)\sim\mu\otimes\nu}
    \bigl\|\obs(F(\resid,\ctrl)) - A'\,\obs(\resid) - B'\,\ctrl\bigr\|_{2}^{2}.
\end{equation}
The first-order optimality conditions of this quadratic problem are
\begin{align*}
    A_{\mathrm{LS}}\,G_{\obs} + B_{\mathrm{LS}}\,\E[\ctrl\,\obs(\resid)^{*}]
        &= \E\!\bigl[\obs(F(\resid,\ctrl))\,\obs(\resid)^{*}\bigr],\\
    A_{\mathrm{LS}}\,\E[\obs(\resid)\,\ctrl^{*}] + B_{\mathrm{LS}}\,G_{\ctrl}
        &= \E\!\bigl[\obs(F(\resid,\ctrl))\,\ctrl^{*}\bigr],
\end{align*}
where $G_{\ctrl} \coloneqq \E_{\nu}[\ctrl\ctrl^{*}]$. By (R2) and the centring $\E_{\nu}[\ctrl] = 0$, the cross-terms $\E[\ctrl\,\obs(\resid)^{*}]$ and $\E[\obs(\resid)\,\ctrl^{*}]$ vanish, so the equations decouple. Solving the first for $A_{\mathrm{LS}}$ and using \Cref{lem:pointwise},
\begin{align*}
    A_{\mathrm{LS}}
    &= \E\!\bigl[\obs(F(\resid,\ctrl))\,\obs(\resid)^{*}\bigr]\,G_{\obs}^{-1} \\
    &= \E_{\nu}\!\bigl[\,\E_{\mu}[A_{\ctrl}\,\obs(\resid)\,\obs(\resid)^{*}]\,\bigr]\,G_{\obs}^{-1} \\
    &= \E_{\nu}[A_{\ctrl}\,G_{\obs}]\,G_{\obs}^{-1}
    \;=\; \E_{\nu}[A_{\ctrl}]
    \;=\; A.
\end{align*}
This proves $A_{\mathrm{LS}} = A$ and completes the proof.
\end{proof}

\subsection{Coordinate-free invariants of Koopman circuits}
\label{sec:existence-consequences}

\Cref{thm:existence} implies that every spectral attribute of $A$ preserved under similarity is a well-defined invariant of the pair $(F,\Hilb_N)$.

\begin{corollary}[Coordinate-free invariants of the Koopman realisation]
\label{cor:invariants}
Under the hypotheses of \Cref{thm:existence}, the following are coordinate-free invariants of $(F,\Hilb_N)$:
\begin{enumerate}[leftmargin=2.4em,itemsep=1pt]
\item[(i)] the multiset of eigenvalues $\{\lambda_k\}_{k=1}^{N}$ of $A$, with algebraic multiplicities;
\item[(ii)] for each distinct eigenvalue $\lambda$, the spectral projector $\Proj_{\lambda}$ onto the corresponding generalised eigenspace, in the sense that under basis change $T$ it transforms as $\Proj'_{\lambda} = T\,\Proj_{\lambda}\,T^{-1}$;
\item[(iii)] the number and sizes of Jordan blocks associated with each eigenvalue;
\item[(iv)] the algebra $\Alg(A) \subseteq \C^{N\times N}$ generated by the spectral projectors under composition and complex linear combination, considered as an abstract commutative $*$-algebra.
\end{enumerate}
\end{corollary}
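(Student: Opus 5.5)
The corollary follows from \Cref{thm:existence}(b): any two realisations of $(F,\Hilb_N)$ in different bases are related by $A' = TAT^{-1}$ with $T \in GL(N,\C)$. I will therefore prove each item by showing that the attribute is preserved, or transported equivariantly, under similarity. Together with uniqueness in \Cref{thm:existence}(a), this makes each attribute a function of the pair $(F,\Hilb_N)$ alone.

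For (i), I use $\det(zI - TAT^{-1}) = \det\bigl(T(zI-A)T^{-1}\bigr) = \det(zI-A)$. Similar matrices therefore share a characteristic polynomial, and hence the multiset of roots with algebraic multiplicities.

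For (ii), I express the spectral projector through the Riesz integral $\Proj_\lambda = \frac{1}{2\pi i}\oint_{\Gamma_\lambda}(zI-A)^{-1}\,dz$, where $\Gamma_\lambda$ is a small circle isolating $\lambda$. This contour can be chosen the same for $A$ and $A'$ by part (i). The resolvent identity $(zI - TAT^{-1})^{-1} = T(zI-A)^{-1}T^{-1}$ then passes through the integral and gives $\Proj'_\lambda = T\Proj_\lambda T^{-1}$. As a consistency check, $\Proj_\lambda$ is the identity on the generalised eigenspace $\ker(A-\lambda I)^N$ and zero on the other generalised eigenspaces. This matches \Cref{def:spectral-projector} in the diagonalisable case.

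For (iii), I use that the Jordan structure at $\lambda$ is determined by the sequence $\dim\ker(A-\lambda I)^j$, $j=1,\dots,N$. Since $(A'-\lambda I)^j = T(A-\lambda I)^jT^{-1}$, these kernel dimensions are unchanged, so the number and sizes of the blocks are invariant.

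For (iv), I define the map $\Phi_T : X \mapsto TXT^{-1}$. It is a unital algebra automorphism of $\C^{N\times N}$, and by (ii) it sends each $\Proj_\lambda$ to $\Proj'_\lambda$. It therefore maps the algebra generated by $\{\Proj_\lambda\}$ onto the algebra generated by $\{\Proj'_\lambda\}$, so $\Alg(A)\cong\Alg(A')$ as abstract algebras. Commutativity holds because $\Proj_\lambda\Proj_\mu = \delta_{\lambda\mu}\Proj_\lambda$. Hence $\Alg(A)$ is the span of the orthogonal idempotents $\{\Proj_\lambda\}$, isomorphic to $\C^{s}$ with $s = |\sigma(A)|$.

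The main obstacle is the ``$*$-algebra'' qualifier in (iv). $\Phi_T$ commutes with the conjugate transpose only when $T$ is unitary, and the $\Proj_\lambda$ are generally not Hermitian when $A$ is non-normal. My plan is to read the statement abstractly. I will equip $\Alg(A)\cong\C^{s}$ with the involution $\sum_\lambda c_\lambda\Proj_\lambda \mapsto \sum_\lambda \bar c_\lambda \Proj_\lambda$, which is intrinsic because the $\Proj_\lambda$ are determined canonically by (ii). $\Phi_T$ intertwines these involutions, so the isomorphism is a $*$-isomorphism in this abstract sense. This is precisely what ``considered as an abstract commutative $*$-algebra'' permits, and I would note explicitly that the involution is not the ambient matrix adjoint.
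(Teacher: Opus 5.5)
Your proposal is correct and follows essentially the same route as the paper's proof. Both arguments use similarity invariance from \Cref{thm:existence}(b), push the resolvent identity $(zI-TAT^{-1})^{-1}=T(zI-A)^{-1}T^{-1}$ through the Riesz integral, invoke similarity-invariance of the Jordan structure, and transport the projector algebra by $X\mapsto TXT^{-1}$; your characteristic-polynomial and $\dim\ker(A-\lambda I)^j$ arguments fill in steps the paper states in one line.

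Your handling of the $*$-qualifier is more careful than the paper's proof, which says conjugation by $T$ ``preserves the $*$-algebra structure'' without naming the involution. The intrinsic involution $\sum_\lambda c_\lambda\Proj_\lambda\mapsto\sum_\lambda\bar c_\lambda\Proj_\lambda$ you adopt makes that claim true for non-unitary $T$. It is the same pullback of pointwise conjugation the paper itself uses in \Cref{thm:completeness}(b).
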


\begin{proof}
Under $A \mapsto TAT^{-1}$: eigenvalues are preserved, $\sigma(TAT^{-1}) = \sigma(A)$. Riesz projectors $\Proj_{\lambda} = \frac{1}{2\pi i}\oint_{\Gamma_\lambda} (zI - A)^{-1}\,dz$ around a small contour $\Gamma_\lambda$ enclosing $\lambda$ transform as $T\Proj_{\lambda}T^{-1}$, because $(zI - TAT^{-1})^{-1} = T(zI-A)^{-1}T^{-1}$. Jordan block counts are invariants of similarity classes. Finally, the algebra generated by $\{T\Proj_{\lambda}T^{-1}\}$ is isomorphic to that generated by $\{\Proj_{\lambda}\}$ via $M \mapsto TMT^{-1}$, which preserves the $*$-algebra structure.
\end{proof}

\Cref{cor:invariants} formalises the coordinate-free content of a Koopman circuit (\Cref{def:circuit}). A circuit is not a tuple of numbers in a chosen basis: it is an equivalence class of such tuples under the similarity action of $GL(N,\C)$, equivalently an object of the abstract algebra $\Alg(A)$. \Cref{thm:completeness} shows that this algebra has the structure $\Alg(A) \cong \C^{r^{*}}$, where $r^{*}$ is the number of distinct eigenvalues, and that every first-order intervention on the transformer's depth dynamics has a unique representative in it.

\subsection{Remarks}
\label{sec:existence-remarks}

\begin{remark}[Jordan form and non-diagonalisability]
\label{rem:jordan}
\Cref{thm:existence} does not assume diagonalisability of $A$. When $A$ has non-trivial Jordan blocks, the eigenpair notation of \Cref{def:circuit} generalises to Jordan chains and the spectral projectors of \Cref{cor:invariants}(ii) become Riesz projectors onto generalised eigenspaces; all statements of the corollary remain valid. Generic matrices - those in a Zariski-dense subset of $\C^{N\times N}$ - are diagonalisable, and empirically estimated Koopman matrices at the calibration scales considered here have not, in our experience, exhibited exact Jordan degeneracies. The diagonalisable case is therefore the main object of study below, with Jordan extensions handled by the routine substitution of Riesz projectors for rank-one projectors, at the cost of an eigenvector rate degraded by a factor scaling with the largest block size.
\end{remark}

\begin{remark}[Approximate $\opK$-invariance]
\label{rem:approximate-invariance}
When \Cref{ass:invariance} fails, the projection residual $\varepsilon(\resid,\ctrl) = \obs(F(\resid,\ctrl)) - A_{\mathrm{LS}}\obs(\resid) - B_{\mathrm{LS}}\ctrl$ of \Cref{def:realisation} is non-zero. The population least-squares realisation $A_{\mathrm{LS}}$ still exists, as the minimiser is well defined under (R1), but does not coincide with any Koopman compression on a genuinely invariant subspace. Writing $A^{\dagger} \coloneqq A_{\mathrm{LS}}$ and letting $A^{*}$ denote the compression onto the nearest $\opK$-invariant subspace of dimension $N$ in the operator-norm sense, standard spectral perturbation \citep{kato1995perturbation} gives
\begin{equation}
\label{eq:existence-approx}
    \|A^{\dagger} - A^{*}\|_{2}
    \;\lesssim\;
    \frac{\|\varepsilon\|_{L^{2}(\mu\otimes\nu)}}{\lambda_{\min}(G_{\obs})}.
\end{equation}
\Cref{thm:identifiability} extends to this approximate regime with an additive bias term proportional to $\|\varepsilon\|_{L^{2}}$ (\Cref{rem:ident-approx}). The practical consequence is that KSA is not model-agnostic - the dictionary must be co-designed with the architecture - but within a fixed dictionary family it is uniform across corpus, seed and hyperparameter. \Cref{sec:exp-sae} measures $\|\varepsilon\|$ for public SAEs directly rather than assuming it small.
\end{remark}

\begin{remark}[Relation to the EDMD convergence theorem]
\label{rem:korda}
The classical EDMD convergence theorem of \cite{korda2018convergence} establishes that as $M\to\infty$ and $N\to\infty$, the empirical EDMD estimator converges to the Koopman operator in the strong operator topology. Their result identifies the estimand asymptotically: for a growing family of dictionaries, the population limit converges to $\opK$ itself. \Cref{thm:existence} sits at a different level of the same story: for a \emph{fixed} finite dictionary satisfying invariance, we identify the population limit as a specific coordinate-free object $A$ and characterise its invariance properties. The finite-sample rate at which the EDMDc estimator $\hat A_M$ approaches this $A$ is the subject of \Cref{sec:identifiability}.
\end{remark}

\Cref{thm:existence} establishes \emph{what} is to be identified. The next section proves that it is identifiable at the parametric rate from finite calibration data.

\section{Transformers Are Spectrally Identifiable}
\label{sec:identifiability}

This section contains the paper's main result and its complete proof. Informally: record the residual-stream state, the attention write, and the next-layer state over a calibration corpus, and fit the best linear map from lifted states plus controls to lifted next-states. The eigenvalues of the fitted map converge to those of the true Koopman realisation at the optimal $M^{-1/2}$ rate, with permutation as the only remaining ambiguity. Different datasets, seeds, or dictionary bases therefore recover the same spectrum up to a known error.

\subsection{Regularity conditions and statement}
\label{sec:ident-statement}

We work under \Cref{ass:invariance,ass:excitation,ass:separation}, the regularity conditions \eqref{eq:existence-R1}--\eqref{eq:existence-R2} of \Cref{sec:existence-statement}, and two additional finite-sample conditions on tails and non-normality:
\begin{equation} \label{eq:ident-R3} \begin{aligned} \text{(R3):}\quad &\obs(\resid)\text{ is $L$-sub-Gaussian under $\mu$ and $\ctrl$ is $L$-sub-Gaussian}\\ &\qquad\text{under $\nu$.} \end{aligned} \end{equation} \begin{equation} \label{eq:ident-R4} \begin{aligned} \text{(R4):}\quad &A\text{ is diagonalisable with eigenvector matrix $V$ satisfying}\\ &\qquad \kappa_2(V) \leq \kappa_0. \end{aligned} \end{equation}

where the $L$-sub-Gaussian norm and the $\ell_2$ condition number $\kappa_2(V) = \|V\|_2 \|V^{-1}\|_2$ are as in \cite{vershynin2018hdp,stewart1990matrix}. Recall that $\lambda_{\min}(G_{\obs}) \geq \eta_{\obs} > 0$ under \eqref{eq:existence-R1}, and $\lambda_{\min}(G_{\ctrl}) \geq \eta$ under \Cref{ass:excitation}.

\begin{theorem}[Spectral identifiability of the Koopman realisation]
\label{thm:identifiability}
Assume \Cref{ass:invariance,ass:excitation,ass:separation}, the regularity conditions \eqref{eq:existence-R1}--\eqref{eq:existence-R2} and \eqref{eq:ident-R3}--\eqref{eq:ident-R4}, and centre the controls ($\E_{\nu}[\ctrl] = 0$). Let $\hat A_M$ be the EDMDc estimator of \Cref{def:edmdc} with regularisation $\gamma = 0$, computed from $M$ layer--token calibration samples, and normalise eigenvectors to unit $\ell_2$-norm. Define the sample-size threshold
\begin{equation}
\label{eq:ident-M0}
    M_0
    \;\coloneqq\;
    \frac{c_{0}\,\kappa_0^{2}\,L^{4}\,(1 + \|A\|_{2})^{2}}
         {\eta_{\obs}^{2}\,\Delta^{2}}
    \,\bigl(N + \log(1/\delta)\bigr),
\end{equation}
with $c_{0}$ a universal constant. For every $\delta \in (0,1/2)$ and every $M \geq M_0$ there exists a permutation $\pi_M$ of $\{1,\dots,N\}$, determined by $\hat A_M$, such that with probability at least $1-\delta$:
\begin{enumerate}
\item[(i)] \emph{Eigenvalue identifiability:}
\begin{equation}
\label{eq:ident-eigenvalue}
\begin{split}
    \max_{k}\,\bigl|\lambda_k(\hat A_M) - \lambda_{\pi_M(k)}(A)\bigr|
    &\leq
    \frac{c_{1}\,\kappa_0\,L^{2}\,(1 + \|A\|_{2})}{\eta_{\obs}} \\
    &\quad \times \sqrt{\frac{N + \log(1/\delta)}{M}}.
\end{split}
\end{equation}

\item[(ii)] \emph{Eigenvector identifiability:}
With appropriate phase choice, and denoting by
$\hat v_k = v_k(\hat A_M)$ and $v_{\pi_M(k)} = v_{\pi_M(k)}(A)$,
\begin{equation}
\label{eq:ident-eigenvector}
\begin{split}
    \max_{k}\,\bigl\|\hat v_k - v_{\pi_M(k)}\bigr\|_{2}
    &\leq
    \frac{c_{2}\,\kappa_0^{2}\,L^{2}\,(1 + \|A\|_{2})}{\eta_{\obs}\,\Delta} \\
    &\quad \times \sqrt{\frac{N + \log(1/\delta)}{M}}.
\end{split}
\end{equation}
\end{enumerate}
The constants $c_{1}, c_{2} > 0$ are universal (independent of the problem parameters).
\end{theorem}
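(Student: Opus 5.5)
The argument splits into two stages. First, a deterministic-plus-concentration bound on the operator-norm error $\|\hat A_M - A\|_2$. Second, a transfer of that bound to eigenvalues via Bauer--Fike and to eigenvectors via a first-order resolvent argument.

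\textbf{Stage 1: the linear-regression error.} Under \Cref{ass:invariance} the realisation is exact, so $Y = AX + B\Xi$ holds columnwise with zero residual. The only deviation of $[\hat A_M,\hat B_M]$ from $[A,B]$ therefore comes from sampling and conditioning. To make the argument robust to the approximate case (\Cref{rem:ident-approx}), I will write $Y = AX + B\Xi + E$, where $E$ stacks the projection residuals; it vanishes here.

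Set $Z = \bigl[\begin{smallmatrix}X\\ \Xi\end{smallmatrix}\bigr]$ and $\hat\Sigma = \frac1M ZZ^*$. The closed form \eqref{eq:setup-edmdc-closed} gives
\[
[\hat A_M - A,\ \hat B_M - B] = \tfrac1M E Z^*\hat\Sigma^{-1}.
\]
The key matrix is the population second moment $\Sigma = \mathrm{diag}(G_\obs, G_\ctrl)$, which is block-diagonal by (R2) and the centring. By \eqref{eq:existence-R1} and \Cref{ass:excitation}, $\lambda_{\min}(\Sigma)\ge \min(\eta_\obs,\eta)$.

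I will use the standard sub-Gaussian covariance estimate \citep{vershynin2018hdp}, applied with (R3) and a net argument on the $(N+p)$-sphere:
\[
\|\hat\Sigma - \Sigma\|_2 \lesssim L^2\sqrt{(N+\log(1/\delta))/M}.
\]
For $M\ge M_0$ this is at most $\eta_\obs/2$, so $\lambda_{\min}(\hat\Sigma)\ge\eta_\obs/2$. The dimension $p$ is absorbed into the constant, or equivalently treated as $p\lesssim N$.

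To get the numerator $L^2(1+\|A\|_2)$, I will not rely on $E=0$ alone. Instead I will bound the cross-moment error
\[
\Bigl\|\tfrac1M YZ^* - \E[\obs(F)Z^*]\Bigr\|_2 ,
\]
whose columns are sub-Gaussian with parameter $\lesssim L(1+\|A\|_2)$, since $\obs(F(\resid,\ctrl)) = A\obs(\resid)+B\ctrl$. Combining this with the perturbation identity $\hat\Sigma^{-1}-\Sigma^{-1} = \hat\Sigma^{-1}(\Sigma-\hat\Sigma)\Sigma^{-1}$ will give
\[
\|\hat A_M - A\|_2 \lesssim \frac{L^2(1+\|A\|_2)}{\eta_\obs}\sqrt{\frac{N+\log(1/\delta)}{M}} =: \epsilon_M
\]
on an event of probability $1-\delta$. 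A union bound covers the two concentration events and the excitation event of \Cref{ass:excitation}.

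\textbf{Stage 2: eigenvalues.} Bauer--Fike with (R4) places every eigenvalue of $\hat A_M$ in $\bigcup_j D(\lambda_j(A),\kappa_0\epsilon_M)$. The threshold $M_0$ is chosen exactly so that $\kappa_0\epsilon_M < \Delta/2$, which makes these discs pairwise disjoint by \Cref{ass:separation}.

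The main obstacle is showing that each disc contains exactly one eigenvalue, which is what yields a genuine permutation. I will handle it with a continuity/homotopy argument. Along $A_t = A + t(\hat A_M - A)$, $t\in[0,1]$, each $A_t$ still satisfies the Bauer--Fike inclusion in the same disjoint discs. Eigenvalues vary continuously in $t$, so the count in each disc is constant, equal to $1$ at $t=0$. The resulting matching $\pi_M$ then satisfies (i) with $c_1$ absorbing the constants.

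\textbf{Stage 2: eigenvectors.} For (ii), I will write $\hat\Proj_k - \Proj_{\pi(k)}$ as a Riesz contour integral over the circle of radius $\Delta/2$ around $\lambda_{\pi(k)}$. Along that circle,
\[
\|(zI-A)^{-1}\|_2 \le \kappa_0\cdot 2/\Delta ,
\]
and since the perturbation is small the perturbed resolvent obeys the same bound up to a factor $2$. This gives $\|\hat\Proj_k-\Proj_{\pi(k)}\|_2\lesssim \kappa_0^2\epsilon_M/\Delta$. Taking $\hat v_k = \hat\Proj_k v_{\pi(k)}/\|\cdot\|$, with phase fixed so that $v_{\pi(k)}^*\hat v_k\ge 0$, converts the projector bound into the unit-vector bound (ii). The lower bound $\|\hat\Proj_k v_{\pi(k)}\|\ge 1/2$ holds once again because $M\ge M_0$.
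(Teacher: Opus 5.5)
Your proposal follows the paper's proof essentially step for step. The steps are Gramian concentration, least-squares stability through the inverse-perturbation identity, Bauer--Fike plus the homotopy $A_t = A + t(\hat A_M - A)$ to obtain a genuine permutation, and the Riesz-projector resolvent bound on the circle of radius $\Delta/2$; your joint treatment of the stacked design $Z$ plays the role of the paper's decoupling $\hat A_M = \hat C_M \hat G_M^{-1} + R_M$.

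Two local slips need fixing, though neither breaks the argument. First, $\lambda_{\min}(\Sigma)\ge\min(\eta_{\Psi},\eta)$ only yields $\lambda_{\min}(\hat\Sigma)\ge\min(\eta_{\Psi},\eta)/2$, so $\eta$ enters your constants, and so does $\|B\|_2$ through $[A,B]$ (the same dependence the paper's proof leaves inside its remainder $R_M$). Second, $\|\hat\Pi_k v_{\pi(k)}\|_2\ge 1/2$ follows only when $\kappa_0^2\epsilon_M/\Delta$ is small, not from $M\ge M_0$ alone; in the complementary case (ii) holds trivially, since two unit vectors are at distance at most $2$.
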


The permutation $\pi_M$ depends on the sample, reflecting the fact that unlabelled eigenvalues have no intrinsic ordering; the theorem guarantees that the \emph{multiset} of empirical eigenvalues approaches the true multiset, and that the empirical eigenvectors align with the true ones under the induced matching. Both rates are $M^{-1/2}$: asymptotically the spectrum is recovered as fast as any parametric quantity from i.i.d.\ data.

The constants are interpretable. The factor $\kappa_0$ measures non-normality of the realisation and enters the eigenvalue bound linearly and the eigenvector bound quadratically; $\eta_{\obs}$ measures how well the dictionary spans $\Hilb_N$ and appears in the denominator, which is why every dictionary in our experiments is whitened (unwhitened we measure $\eta_{\obs}\approx 7\times10^{-3}$, whitened $0.68$--$0.84$; see \Cref{sec:exp-sae}); $\Delta$ enters only the eigenvector bound and the threshold; and the threshold $M_0$ scales linearly in $N$.

\begin{remark}[Why permutation is unavoidable and sufficient]
\label{rem:permutation}
The eigenvalues of $A$ form a multiset; any estimator returns them in some order determined by numerical linear algebra, so a labelling cannot be recovered and permutation must be quotiented out. Conversely nothing further need be quotiented: unlike ICA or SAE dictionaries, there is no residual sign, scale, or rotational freedom, because eigenvalues of a linear operator are rigid under similarity (\Cref{thm:existence}(c)). This is the sense in which \Cref{def:spectral-identifiability} is stronger than the identifiability available for feature dictionaries.
\end{remark}

\subsection{Proof strategy}
\label{sec:ident-strategy}

The proof combines three lines of classical machinery.
\begin{enumerate}[leftmargin=1.8em,itemsep=2pt]
\item \emph{Matrix concentration.} Under sub-Gaussian tails on the dictionary and controls - a mild moment condition on the calibration corpus - the empirical Gramians concentrate around their population counterparts at rate $M^{-1/2}$, with an explicit constant scaling as $\sqrt{N}$ in the dictionary dimension \citep{vershynin2018hdp,tropp2015matrix}.
\item \emph{Least-squares stability.} The EDMDc estimator is a smooth matrix-valued function of the empirical Gramians on the event that the state Gramian is well conditioned - an event guaranteed by \Cref{ass:excitation}, \eqref{eq:existence-R1}, and the concentration of step~1. Perturbation of the least-squares solution inherits the $M^{-1/2}$ rate.
\item \emph{Spectral perturbation.} Under \Cref{ass:separation}, an operator-norm bound on $\hat A_M - A$ translates via Bauer--Fike \citep{bauer1960norms} into a permutation-matching eigenvalue bound, and via Davis--Kahan \citep{davis1970rotation,stewart1990matrix} into an eigenvector bound with rate scaled by $1/\Delta$.
\end{enumerate}
The three ingredients are individually classical; the technical content is the assembly and constant-tracking for the transformer Koopman setting, and in particular the identification of the correct regularity conditions on the calibration distribution and the dictionary that make the finite-sample rate meaningful.

Throughout we write $z \coloneqq \bigl(\obs(\resid)^{\top}, \ctrl^{\top}\bigr)^{\top} \in \C^{N+p}$ for the stacked design vector, $G_Z \coloneqq \E[z z^{*}]$ for its population Gramian and $\hat G_Z \coloneqq \frac1M\sum_i z_i z_i^{*}$ for its empirical counterpart. Under (R2) and centring, $G_Z$ is block diagonal with blocks $G_{\obs}$ and $G_{\ctrl}$.

\subsection{Step 1: concentration of the empirical Gramians}
\label{sec:ident-concentration}

Define the empirical state Gramian and cross-Gramian
\begin{equation}
\label{eq:ident-empirical-Grams}
    \hat G_M \;\coloneqq\; \tfrac{1}{M}\,X X^{*} \in \C^{N\times N},
    \qquad
    \hat C_M \;\coloneqq\; \tfrac{1}{M}\,Y X^{*} \in \C^{N\times N},
\end{equation}
and their population counterparts $G_{\obs} = \E_{\mu}[\obs \obs^{*}]$ and $C_{\obs} = \E_{(\resid,\ctrl)}[\obs(F(\resid,\ctrl))\,\obs(\resid)^{*}]$. By \Cref{thm:existence}(e) and \Cref{lem:pointwise}, $A = C_{\obs} G_{\obs}^{-1}$.

\begin{lemma}[Sub-Gaussian Gramian concentration]
\label{lem:gramian-concentration}
Under \eqref{eq:existence-R1} and \eqref{eq:ident-R3} there is a universal constant $c_{3} > 0$ such that for every $\delta \in (0,1/2)$ and every $M$,
\begin{equation}
\label{eq:ident-Gram-conc}
    \bigl\|\hat G_M - G_{\obs}\bigr\|_{2}
    \;\leq\;
    c_{3}\,L^{2}\,\Biggl(\sqrt{\frac{N + \log(1/\delta)}{M}}
    + \frac{N + \log(1/\delta)}{M}\Biggr)
\end{equation}
with probability at least $1 - \delta$. The same bound holds for $\|\hat C_M - C_{\obs}\|_{2}$, for $\|\hat \Sigma_M - G_{\ctrl}\|_{2}$ with $\hat \Sigma_M \coloneqq \Xi\Xi^{*}/M$, and for $\|\hat G_Z - G_Z\|_2$ with $N$ replaced by $N+p$.
\end{lemma}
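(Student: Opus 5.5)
The plan is the standard covariance-estimation argument for sub-Gaussian vectors: an $\epsilon$-net on the sphere combined with Bernstein's inequality for sub-exponential scalars, applied to the Hermitian quadratic forms $\langle (\hat G_M - G_{\obs})w, w\rangle$. This is essentially \citep[Thm.~4.6.1 / Ex.~4.7.3]{vershynin2018hdp}. First, I reduce the complex case to the real one. Writing $\obs = \obs_R + i\,\obs_I$ and embedding $\C^N \hookrightarrow \R^{2N}$ turns the Hermitian matrix $\hat G_M - G_{\obs}$ into a real symmetric $2N\times 2N$ matrix with the same operator norm. This only changes $N$ by a factor of two, which is absorbed into $c_3$.

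Next, fix a unit vector $w$. By (R3), the scalar $\langle \obs(\resid), w\rangle$ is $L$-sub-Gaussian, so $|\langle \obs, w\rangle|^2 - w^* G_{\obs} w$ is centred and sub-exponential with $\psi_1$-norm at most $CL^2$. Bernstein's inequality then gives
\begin{equation*}
\Pr\Bigl(\bigl|w^*(\hat G_M - G_{\obs})w\bigr| \ge CL^2\bigl(\sqrt{t/M} + t/M\bigr)\Bigr) \le 2e^{-t}.
\end{equation*}
Take a $1/4$-net $\mathcal{N}$ of the unit sphere with $|\mathcal{N}| \le 9^{2N}$. For a Hermitian matrix $S$ the net gives $\|S\|_2 \le 2\max_{w\in\mathcal{N}} |w^* S w|$. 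A union bound with $t = c(N + \log(1/\delta))$ then yields \eqref{eq:ident-Gram-conc}. The bounds for $\hat\Sigma_M$ (with $p$ in place of $N$) and for $\hat G_Z$ follow verbatim, since $z$ is sub-Gaussian with parameter $\lesssim L$ because each block is. Here $N$ becomes $N+p$.

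Independence across the $M$ samples is implicitly assumed. The columns of $X$ come from layer--token samples, so I take the i.i.d.\ (or at least independent-sample) structure that the statement uses, as in the definition of the data-generating process.

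The main obstacle is $\hat C_M - C_{\obs}$, which is neither Hermitian nor a quadratic form in a single vector. My plan is to handle it with a bilinear net argument. For unit vectors $v, w$, the product $\langle \obs(F(\resid,\ctrl)), v\rangle\,\overline{\langle \obs(\resid), w\rangle}$ is sub-exponential with norm at most $\|\cdot\|_{\psi_2}\|\cdot\|_{\psi_2}$. This requires $\obs(F(\resid,\ctrl))$ to be sub-Gaussian. Under \Cref{ass:invariance}, with the realisation $\obs(F) = A_\ctrl \obs(\resid)$ from \Cref{lem:pointwise}, that holds with parameter $\lesssim \|A\|L$, or directly at parameter $L$ if I read (R3) as applying to the next-layer marginal $\mu_{\ell+1}$. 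I would adopt the latter reading, since $\obs(F(\resid,\ctrl))$ is itself a dictionary evaluation at a state and the paper works with the layer marginal. Bernstein's inequality then controls each bilinear form. The bound $\|S\|_2 \le \tfrac{1}{1-2\epsilon}\max_{v,w\in\mathcal N}|v^*Sw|$ with $\epsilon = 1/4$, together with a union over $|\mathcal N|^2 \le 81^{2N}$ pairs, gives the same form of bound with a larger universal constant. An alternative route is to apply the Hermitian result to the stacked vector $(\obs(\resid)^\top, \obs(F)^\top)^\top$ and read off the off-diagonal block, whose norm is bounded by that of the whole matrix. This is cleaner, and I would use it if the bilinear bookkeeping becomes messy.
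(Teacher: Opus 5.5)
Your proof is correct. For $\hat G_M$, $\hat\Sigma_M$ and $\hat G_Z$ it follows the paper's route. The paper cites Vershynin's sub-Gaussian covariance theorem, and your $\tfrac14$-net plus scalar Bernstein argument is the proof of that theorem. You also state it in the non-isotropic form the lemma needs, with error measured against the absolute proxy $L^2$ rather than relative to $\|G_{\obs}\|_2$.

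The genuine difference is the cross-Gramian. The paper applies matrix Bernstein \citep[Thm.~6.1.1]{tropp2015matrix} to the outer products $\obs(F(\resid,\ctrl))\,\obs(\resid)^{*}$. You instead use a bilinear net or, more cleanly, read $\hat C_M - C_{\obs}$ off as an off-diagonal block of the Gramian error of the stacked vector $(\obs(\resid),\obs(F(\resid,\ctrl)))\in\C^{2N}$. By the triangle inequality, that vector has $\psi_2$-proxy at most $\sqrt{2}\,L$.

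Stacking reduces all four claims to one covariance inequality and gives exactly the stated form, at the cost of $N\mapsto 2N$. The paper's route, taken literally, needs more than it says. Tropp's Theorem~6.1.1 assumes almost-surely bounded summands, so sub-exponential outer products require truncation or a $\psi_1$ variant. Its output is also of order $L^{2}\sqrt{N\log(N/\delta)/M}$, a logarithmic factor worse than $\sqrt{(N+\log(1/\delta))/M}$. What matrix Bernstein buys in exchange is a variance proxy that can be sharpened to an effective-rank quantity, which your plain net argument does not exploit.

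You are also right that the cross-Gramian bound needs $\obs(F(\resid,\ctrl))$ itself to be $L$-sub-Gaussian. The paper's ``by (R3)'' leaves this implicit. It holds when $\mu$ is invariant, as the data-generating process posits, so that the pushforward of $\mu\otimes\nu$ under $F$ is again $\mu$.

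Your fallback through \Cref{lem:pointwise} has a slip. Since $A=\E_{\nu}[A_{\ctrl}]$, the proxy would be $\sup_{\ctrl}\|A_{\ctrl}\|_2\,L$ rather than $\|A\|_2 L$, and that factor would end up in $c_3$. The invariant-measure reading you adopt is therefore the one that proves the lemma with a universal constant.
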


\begin{proof}
The bound for $\hat G_M$ is the sub-Gaussian sample-covariance concentration inequality of \cite[Theorem~4.6.1]{vershynin2018hdp}, applied to the i.i.d.\ observations $\obs(\resid_i) \in \C^{N}$ with sub-Gaussian norm $\|\obs\|_{\psi_2} \leq L$. For $\hat C_M$, apply the same theorem to the cross-outer-product $\obs(F(\resid,\ctrl))\,\obs(\resid)^{*}$; this is a sub-exponential random matrix by (R3), and matrix Bernstein \citep[Theorem~6.1.1]{tropp2015matrix} yields \eqref{eq:ident-Gram-conc} with the same rate. For $\hat\Sigma_M$ and $\hat G_Z$, apply the theorem again to $\ctrl_i$ and to the stacked $z_i$ respectively.
\end{proof}

For $M \geq N + \log(1/\delta)$ the leading term in \eqref{eq:ident-Gram-conc} dominates; we absorb the second-order term into $c_{3}$ and use henceforth
\begin{equation}
\label{eq:ident-Gram-conc-leading}
    \bigl\|\hat G_M - G_{\obs}\bigr\|_{2},\;
    \bigl\|\hat C_M - C_{\obs}\bigr\|_{2},\;
    \bigl\|\hat \Sigma_M - G_{\ctrl}\bigr\|_{2}
    \;\leq\;
    c_{3}\,L^{2}\,\sqrt{\tfrac{N + \log(1/\delta)}{M}}.
\end{equation}

\subsection{Step 2: operator-norm bound on the EDMDc estimator}
\label{sec:ident-operator-norm}

By (R2), centring, and \Cref{thm:existence}(e), the population EDMDc problem decouples into state and control blocks: $A = C_{\obs} G_{\obs}^{-1}$ and $B = C_{\obs,\ctrl} G_{\ctrl}^{-1}$ where $C_{\obs,\ctrl} = \E[\obs(F)\,\ctrl^{*}]$. For finite $M$ the empirical cross-terms $X\Xi^{*}/M$ and $\Xi X^{*}/M$ concentrate around $0$ at rate $M^{-1/2}$ by \Cref{lem:gramian-concentration}, so the joint EDMDc estimator satisfies
\begin{equation}
\label{eq:ident-Ahat-decouple}
    \hat A_M \;=\; \hat C_M\,\hat G_M^{-1} \;+\; R_M,
    \qquad \|R_M\|_{2} \;=\; O_p\!\bigl(M^{-1/2}\bigr),
\end{equation}
with the remainder $R_M$ absorbing the effect of the empirical cross-terms. We prove the following for the leading term; the same rate holds for $\hat A_M$ itself by absorbing $R_M$ into the constant.

\begin{lemma}[Operator-norm rate for the EDMDc estimator]
\label{lem:operator-norm}
Under the hypotheses of \Cref{thm:identifiability} there is a constant $c_{4}$, depending only on universal constants, such that for every $\delta \in (0,1/2)$ and every $M$ satisfying
\begin{equation}
\label{eq:ident-M-cond-gramian}
    M \;\geq\; \frac{4\,c_{3}^{2}\,L^{4}}{\eta_{\obs}^{2}}\,\bigl(N + \log(1/\delta)\bigr),
\end{equation}
with probability at least $1 - \delta$,
\begin{equation}
\label{eq:ident-Ahat-rate}
    \bigl\|\hat A_M - A\bigr\|_{2}
    \;\leq\;
    \frac{c_{4}\,L^{2}\,(1 + \|A\|_{2})}{\eta_{\obs}}
    \,\sqrt{\tfrac{N + \log(1/\delta)}{M}}.
\end{equation}
\end{lemma}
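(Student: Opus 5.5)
The plan is a standard least-squares perturbation argument built on the decomposition $A = C_{\obs}G_{\obs}^{-1}$ and the concentration of \Cref{lem:gramian-concentration}. Write $\epsilon_M \coloneqq c_{3}L^{2}\sqrt{(N+\log(1/\delta))/M}$. By a union bound over the three events of \eqref{eq:ident-Gram-conc-leading} (replacing $\delta$ by $\delta/3$, which only changes constants), we may work on an event of probability at least $1-\delta$ on which
\[
\|\hat G_M - G_{\obs}\|_2 \le \epsilon_M,\qquad \|\hat C_M - C_{\obs}\|_2 \le \epsilon_M .
\]
First I would control the inverse Gramian. The sample-size condition \eqref{eq:ident-M-cond-gramian} is exactly $\epsilon_M \le \eta_{\obs}/2$. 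Weyl's inequality then gives $\lambda_{\min}(\hat G_M) \ge \eta_{\obs}/2$, so $\hat G_M$ is invertible with $\|\hat G_M^{-1}\|_2 \le 2/\eta_{\obs}$.

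Next comes the algebraic decomposition. Since $C_{\obs} = A G_{\obs}$, I would write
\[
\hat C_M\hat G_M^{-1} - A \;=\; \bigl(\hat C_M - A\hat G_M\bigr)\hat G_M^{-1}
\;=\; \bigl[(\hat C_M - C_{\obs}) - A(\hat G_M - G_{\obs})\bigr]\hat G_M^{-1}.
\]
Taking operator norms gives
\[
\|\hat C_M\hat G_M^{-1} - A\|_2 \;\le\; \frac{2}{\eta_{\obs}}\,(1+\|A\|_2)\,\epsilon_M ,
\]
which is \eqref{eq:ident-Ahat-rate} for the leading term, with $c_4 = 2c_3$. This route is preferable to writing $\hat G_M^{-1} - G_{\obs}^{-1}$ explicitly: it avoids a $1/\eta_{\obs}^2$ factor and keeps the stated dependence $(1+\|A\|_2)/\eta_{\obs}$.

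The main obstacle is the remainder $R_M$ in \eqref{eq:ident-Ahat-decouple}, coming from the empirical cross-Gramian $\hat S_M \coloneqq X\Xi^{*}/M$. I would handle it by block inversion (Schur complement) of $\hat G_Z$. The $A$-block of $Y[X;\Xi]^\dagger$ equals
\[
\bigl(\hat C_M - \hat D_M\hat\Sigma_M^{-1}\hat S_M^{*}\bigr)\bigl(\hat G_M - \hat S_M\hat\Sigma_M^{-1}\hat S_M^{*}\bigr)^{-1},
\qquad \hat D_M \coloneqq Y\Xi^{*}/M .
\]
The cross term has population value zero by (R2) and centring, so the $\hat G_Z$ bound gives $\|\hat S_M\|_2 \le \epsilon_M$ (with $N+p$ in place of $N$). \Cref{ass:excitation} gives $\|\hat\Sigma_M^{-1}\|_2 \le 1/\eta$ on the same event. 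The quantity $\|\hat D_M\|_2$ is bounded by a constant times $L^2$.

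It follows that the correction terms $\hat D_M\hat\Sigma_M^{-1}\hat S_M^{*}$ and $\hat S_M\hat\Sigma_M^{-1}\hat S_M^{*}$ are $O(\epsilon_M)$ and $O(\epsilon_M^2)$ respectively. I would then rerun the same decomposition with the perturbed Gramian $\hat G_M - \hat S_M\hat\Sigma_M^{-1}\hat S_M^{*}$, still well conditioned for $M$ large, and with the perturbed cross-Gramian. This yields the same $M^{-1/2}$ rate, but with constants involving $1/\eta$ and $N+p$, which I would absorb into $c_4$ as the text indicates. I expect making this absorption honest to be the delicate point, since it requires $\eta$ and $p$ to be treated as fixed. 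If that is unacceptable, I would state the bound for the leading term $\hat C_M\hat G_M^{-1}$, as the lemma's preamble permits.
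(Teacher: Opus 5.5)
Your treatment of the leading term is the paper's proof. It uses the same Weyl step $\|\hat G_M^{-1}\|_2\le 2/\eta_{\obs}$ from \eqref{eq:ident-M-cond-gramian} and the same identity $\hat C_M\hat G_M^{-1}-A=(\Delta_C-A\Delta_G)\hat G_M^{-1}$. The paper reaches that identity through the resolvent identity, but it immediately collapses $C_{\obs}G_{\obs}^{-1}$ to $A$, so that route also costs no $1/\eta_{\obs}^{2}$; the two derivations coincide.

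The real difference is the remainder $R_M$ of \eqref{eq:ident-Ahat-decouple}. The paper only asserts that $R_M$ obeys the same bound on an event of probability $1-\delta/2$, and then sets $c_4=4c_3$. Your Schur-complement expression for the $A$-block of $Y[X;\Xi]^{\dagger}$ is correct, and it shows what that absorption actually costs:
\begin{itemize}
\item factors of $1/\eta$ through $\hat\Sigma_M^{-1}$;
\item the dimension $N+p$ through $\|\hat S_M\|_2$;
\item an extra $\eta$-dependent sample-size condition, needed to keep $\hat G_M-\hat S_M\hat\Sigma_M^{-1}\hat S_M^{*}$ well conditioned;
\item if you invoke \Cref{ass:excitation} directly rather than concentration of $\hat\Sigma_M$, a failure probability $\delta_M$ that $\delta$ does not control.
\end{itemize}
Your caveat is therefore accurate, and it applies equally to the paper. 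With a universal $c_4$ and only \eqref{eq:ident-M-cond-gramian}, what is rigorously proved is the bound for $\hat C_M\hat G_M^{-1}$.

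If you want the full estimator, one refinement helps. Write $\mathcal{E}\coloneqq Y-AX-B\Xi$. The numerator $\hat C_M-A\hat G_M-(\hat D_M-A\hat S_M)\hat\Sigma_M^{-1}\hat S_M^{*}$ of your Schur form equals $\tfrac1M \mathcal{E}X^{*}-\tfrac1M \mathcal{E}\Xi^{*}\hat\Sigma_M^{-1}\hat S_M^{*}$. This is because the first-order piece $B\hat S_M^{*}$ of the correction cancels a matching $B\hat S_M^{*}$ hidden in $\hat C_M-A\hat G_M$. Hence $1/\eta$ enters only at second order and through the threshold, and the first-order constant picks up $\|B\|_2$ instead. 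Under the exact realisation \eqref{eq:setup-realisation} one even has $\mathcal{E}=0$, so $\hat A_M=A$ as soon as $[X;\Xi]$ has full row rank.
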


\begin{proof}
Write $\hat C_M = C_{\obs} + \Delta_C$ and $\hat G_M = G_{\obs} + \Delta_G$, with $\|\Delta_C\|_{2}, \|\Delta_G\|_{2}$ bounded by \eqref{eq:ident-Gram-conc-leading} on an event $E$ of probability at least $1 - \delta/2$. On $E$, condition \eqref{eq:ident-M-cond-gramian} gives $\|\Delta_G\|_{2} \leq \eta_{\obs}/2$, whence Weyl's inequality implies $\lambda_{\min}(\hat G_M) \geq \eta_{\obs}/2$ and thus $\|\hat G_M^{-1}\|_{2} \leq 2/\eta_{\obs}$.

From the resolvent identity $\hat G_M^{-1} - G_{\obs}^{-1} = -G_{\obs}^{-1}(\hat G_M - G_{\obs})\hat G_M^{-1}$ we obtain
\begin{equation}
    \hat C_M\,\hat G_M^{-1} - A
    \;=\;
    (\hat C_M - C_{\obs})\,\hat G_M^{-1} \;-\; C_{\obs}\,G_{\obs}^{-1}(\hat G_M - G_{\obs})\,\hat G_M^{-1},
\end{equation}
and since $C_{\obs}\,G_{\obs}^{-1} = A$,
\begin{equation}
    \hat C_M\,\hat G_M^{-1} - A
    \;=\;
    \Delta_C\,\hat G_M^{-1} \;-\; A\,\Delta_G\,\hat G_M^{-1}.
\end{equation}
Taking operator norms and using $\|\hat G_M^{-1}\|_{2} \leq 2/\eta_{\obs}$,

\begin{equation}
\begin{aligned}
\bigl\|\hat C_M \hat G_M^{-1} - A\bigr\|_2
&\leq
\frac{2}{\eta_{\obs}}
\bigl(\|\Delta_C\|_2 + \|A\|_2\|\Delta_G\|_2\bigr) \\
&\leq
\frac{2c_3L^2}{\eta_{\obs}}
(1+\|A\|_2)
\sqrt{\frac{N+\log(2/\delta)}{M}}.
\end{aligned}
\end{equation}

Absorbing the remainder $R_M$ of \eqref{eq:ident-Ahat-decouple}, also bounded by the same rate on an event of probability at least $1 - \delta/2$, via a union bound yields \eqref{eq:ident-Ahat-rate} with $c_{4} = 4 c_{3}$.
\end{proof}

\Cref{lem:operator-norm} is the workhorse: it converts the $M^{-1/2}$ Gramian concentration of Step~1 into the same rate on the estimator itself. The dependence on $\eta_{\obs}$ and $\|A\|_{2}$ reflects, respectively, how well the dictionary spans $\Hilb_N$ and how strongly the true Koopman flow acts within it.

\subsection{Step 3: from operator norm to spectral perturbation}
\label{sec:ident-perturbation}

We now translate the operator-norm bound into eigenvalue and eigenvector rates. Two classical results suffice: Bauer--Fike for eigenvalues and Davis--Kahan, in a form suitable for non-normal matrices, for eigenvectors.

\begin{lemma}[Eigenvalue perturbation with matching]
\label{lem:eigenvalue-matching}
Let $A \in \C^{N\times N}$ be diagonalisable with eigenvector matrix $V$ satisfying $\kappa_{2}(V) \leq \kappa_{0}$ (condition \eqref{eq:ident-R4}), and let $A$ satisfy \Cref{ass:separation} with gap $\Delta$. For any $\hat A \in \C^{N\times N}$ with $\|\hat A - A\|_{2} \leq \Delta/(2\kappa_{0})$ there exists a unique permutation $\pi$ of $\{1,\dots,N\}$ such that
\begin{equation}
\label{eq:ident-eigenvalue-perm}
    \max_{k}\,\bigl|\lambda_{k}(\hat A) - \lambda_{\pi(k)}(A)\bigr|
    \;\leq\; \kappa_{0}\,\|\hat A - A\|_{2}.
\end{equation}
\end{lemma}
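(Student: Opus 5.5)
The plan is Bauer--Fike to localise each eigenvalue of $\hat A$, followed by a continuity (homotopy) argument to turn the localisation into a genuine bijection. Write $E \coloneqq \hat A - A$ and $\varepsilon \coloneqq \kappa_0\|E\|_2$. By hypothesis $\varepsilon \le \Delta/2$.

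The Bauer--Fike theorem \citep{bauer1960norms} gives, for every eigenvalue $\mu$ of $\hat A$,
\[
\min_j |\mu - \lambda_j(A)| \le \kappa_2(V)\,\|E\|_2 \le \varepsilon .
\]
Hence $\sigma(\hat A) \subseteq \bigcup_j \bar D_j$, where $\bar D_j$ is the closed disc of radius $\varepsilon$ about $\lambda_j(A)$. By \Cref{ass:separation} the centres are at least $\Delta$ apart. If $\varepsilon<\Delta/2$ the discs are pairwise disjoint. In the boundary case $\varepsilon=\Delta/2$ I would either run the argument with any $\varepsilon'\in(\varepsilon,\Delta/2)$, which fails when $\varepsilon=\Delta/2$ exactly, or simply state the lemma with strict inequality. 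I would flag this edge case rather than dwell on it, since in \Cref{thm:identifiability} the condition $M\ge M_0$ with $c_0$ large enough gives strict inequality anyway.

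Disjointness alone does not give a bijection: two eigenvalues of $\hat A$ could in principle sit in one disc and none in another. The step that closes this gap is a counting argument. Consider the path $A(t) = A + tE$ for $t\in[0,1]$. Since $\|tE\|_2 \le \|E\|_2$, Bauer--Fike applies uniformly in $t$, so $\sigma(A(t)) \subseteq \bigcup_j \bar D_j$ for all $t$. The eigenvalues of $A(t)$ are roots of the characteristic polynomial, whose coefficients depend continuously on $t$. Therefore the number of eigenvalues, counted with algebraic multiplicity, inside each $\bar D_j$ is constant in $t$. An equivalent way to see this is that the Riesz projector $\frac{1}{2\pi i}\oint_{\partial D_j'}(z-A(t))^{-1}dz$ over a slightly enlarged disc $D_j'$ is continuous in $t$, and its rank is integer-valued. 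At $t=0$ each disc contains exactly one eigenvalue, namely $\lambda_j(A)$. So at $t=1$ each $\bar D_j$ contains exactly one eigenvalue of $\hat A$.

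Define $\pi(k)$ to be the unique $j$ with $\lambda_k(\hat A)\in \bar D_j$. The map $\pi$ is a bijection because each disc holds exactly one eigenvalue of $\hat A$, and by construction $|\lambda_k(\hat A)-\lambda_{\pi(k)}(A)|\le\varepsilon$, which is \eqref{eq:ident-eigenvalue-perm}. The same count shows $\hat A$ has simple eigenvalues, so the labelling $k$ is unambiguous. For uniqueness: any permutation $\pi'$ satisfying \eqref{eq:ident-eigenvalue-perm} must send $\lambda_k(\hat A)$ to a centre within distance $\varepsilon$, and by disjointness only one such centre exists, so $\pi'=\pi$.

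I expect the main obstacle to be the homotopy/counting step. It needs Bauer--Fike along the entire segment, which holds because the bound uses only $\kappa_2(V)$ of the unperturbed $A$, together with continuity of eigenvalues in the multiset sense. That continuity is standard; I would cite \citep[Ch.~II]{kato1995perturbation} rather than reprove it.
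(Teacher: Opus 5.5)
Your proof is correct and takes the same route as the paper's: Bauer--Fike to localise each eigenvalue of $\hat A$, then continuity along $A + s(\hat A - A)$ to upgrade that localisation to a bijection, with your disc-counting/Riesz-rank formulation making precise the paper's informal ``eigenvalues cannot cross'' step. Your boundary-case flag is justified, because at equality the uniqueness claim (and the paper's ``unique nearest neighbour'') genuinely fails: for example, $A=\mathrm{diag}(0,\Delta)$, $\kappa_0=1$, $\hat A=\tfrac{\Delta}{2}I$ has $\|\hat A - A\|_2=\Delta/2$ and admits both permutations, so the strict-inequality version you propose is the right statement.
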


\begin{proof}
By Bauer--Fike \citep{bauer1960norms}, for every eigenvalue $\hat\lambda$ of $\hat A$ we have $\min_{k} |\hat\lambda - \lambda_{k}(A)| \leq \kappa_{0}\,\|\hat A - A\|_{2}$. Combined with the hypothesis $\|\hat A - A\|_{2} \leq \Delta/(2\kappa_{0})$, each $\hat\lambda$ lies within $\Delta/2$ of a unique $\lambda_{k}(A)$, because pairs of true eigenvalues are separated by $\Delta$. Define $\pi$ by mapping $\hat\lambda_{i}$ to the index of its unique nearest neighbour. Uniqueness of the matching follows by continuity: parameterising $A_{s} = A + s(\hat A - A)$ for $s \in [0,1]$, the eigenvalues of $A_{s}$ vary continuously in $s$ and, by the same Bauer--Fike bound plus the separation, cannot cross. Hence the map induced by continuous deformation is a permutation, and applying Bauer--Fike to this matching gives the bound.
\end{proof}

\begin{lemma}[Eigenvector perturbation, non-normal case]
\label{lem:eigenvector-perturbation}
Under the hypotheses of \Cref{lem:eigenvalue-matching}, for the permutation $\pi$ of that lemma and for each $k$, the right eigenvectors satisfy - with appropriate normalisation and phase choice -
\begin{equation}
\label{eq:ident-eigenvector-bound}
    \bigl\|v_{k}(\hat A) - v_{\pi(k)}(A)\bigr\|_{2}
    \;\leq\;
    \frac{4\,\kappa_{0}^{2}\,\|\hat A - A\|_{2}}{\Delta}.
\end{equation}
\end{lemma}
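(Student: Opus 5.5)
The plan is to reduce to the diagonal case by working in the eigenbasis of $A$. Write $A = V\Lambda V^{-1}$ with $\Lambda = \mathrm{diag}(\lambda_1,\dots,\lambda_N)$ and set $E \coloneqq \|\hat A - A\|_2$. Define the transformed perturbation $\tilde E \coloneqq V^{-1}(\hat A - A)V$, so that $\|\tilde E\|_2 \le \kappa_0 E$. The matrix $\tilde B \coloneqq \Lambda + \tilde E$ is similar to $\hat A$. An eigenvector $w$ of $\tilde B$ therefore gives an eigenvector $Vw$ of $\hat A$, while the eigenvectors of $\Lambda$ are the standard basis vectors $e_k$, which map to $v_k(A) \propto V e_k$. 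By \Cref{lem:eigenvalue-matching}, the eigenvalue $\hat\lambda \coloneqq \lambda_k(\hat A)$ satisfies $|\hat\lambda - \lambda_{\pi(k)}| \le \kappa_0 E$. Relabel so that $\pi(k) = k$.

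\emph{Step 1 (diagonal-coordinate bound).} Normalise $w = e_k + r$ with $e_k^{\top} r = 0$. This is legitimate because the $k$-th component of $w$ is nonzero, which the estimate below confirms a posteriori. Let $P_\perp$ denote the coordinate projection that removes the $k$-th entry. Projecting $(\hat\lambda I - \Lambda)w = \tilde E w$ onto the other coordinates gives
\[
r = (\hat\lambda I - \Lambda)_\perp^{-1} P_\perp \tilde E (e_k + r).
\]
For $j \ne k$ we have $|\hat\lambda - \lambda_j| \ge \Delta - \kappa_0 E$ by \Cref{ass:separation}. With $q \coloneqq \kappa_0 E/(\Delta - \kappa_0 E)$ this yields $\|r\|_2 \le q(1 + \|r\|_2)$, hence
\[
\|r\|_2 \le \frac{q}{1-q} = \frac{\kappa_0 E}{\Delta - 2\kappa_0 E}.
\]

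\emph{Step 2 (back to original coordinates).} Use the elementary inequality $\bigl\| a/\|a\| - b/\|b\| \bigr\| \le 2\|a - b\|/\|b\|$ with $a = V(e_k + r)$ and $b = V e_k$. Since $\|V e_k\| \ge 1/\|V^{-1}\|$ and $\|Vr\| \le \|V\|\,\|r\|$, the unit eigenvectors satisfy
\[
\|v_k(\hat A) - v_k(A)\|_2 \le 2\kappa_0 \|r\|_2 \lesssim \frac{\kappa_0^2 E}{\Delta}.
\]
Here the phase is fixed by requiring the $e_k$-coordinate of $V^{-1}v_k(\hat A)$ to be real and positive.

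\emph{Main obstacle.} The hypothesis $\kappa_0 E \le \Delta/2$ from \Cref{lem:eigenvalue-matching} is exactly borderline for Step 1: it allows $1 - q = 0$, where the bound on $r$ degenerates. I would handle this by splitting into cases.

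1. If $4\kappa_0^2 E/\Delta \ge 2$, the claim is trivial, because two unit vectors differ by at most $2$ in norm.
2. Otherwise $\kappa_0 E \le \kappa_0^2 E < \Delta/2$. Because $\kappa_0 \ge 1$, this leaves a genuine margin as soon as the trivial regime is tightened to $4\kappa_0^2 E/\Delta \ge \sqrt2$ (with the phase-optimal distance). That gives $\kappa_0 E \le \sqrt2\,\Delta/4$, and then $\|r\|_2 \le \kappa_0 E/((1-\sqrt2/2)\Delta)$.

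Tracking constants this way gives roughly $6.8\,\kappa_0^2 E/\Delta$. Reaching the stated constant $4$ requires sharper bookkeeping. One route is to use the resolvent of $\Lambda$ at $\hat\lambda$ directly. Another is to replace the crude normalisation inequality by the exact angle bound $\sin\angle(a,b) \le \|a - b\|/\|b\|$. If the constant cannot be matched exactly, I would state it as a universal constant, which is all that \Cref{thm:identifiability}(ii) uses.
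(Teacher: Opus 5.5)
Your argument is sound, and it takes a different route from the paper's. The paper uses Riesz projectors. It integrates the resolvent difference over the circle of radius $\Delta/2$ about $\lambda_{\pi(k)}(A)$, and bounds $\|(zI-A)^{-1}\|_2\le 2\kappa_0/\Delta$ and $\|(zI-\hat A)^{-1}\|_2\le 4\kappa_0/\Delta$ on that circle. This gives $\|\Pi_k(\hat A)-\Pi_k(A)\|_2\le 4\kappa_0^2\|\hat A-A\|_2/\Delta$, which it then converts into an eigenvector distance. You instead pass to the eigenbasis of $A$, control the off-$k$ coordinates of the perturbed eigenvector by a contraction, and map back through $V$.

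The contour argument controls the whole oblique projector, hence left eigenvectors too. It also carries over verbatim to clusters and Jordan blocks, which is what \Cref{cor:cluster-ident} relies on. Your argument is more elementary, and unlike the paper's it actually handles the borderline you flagged. Two steps in the paper fail there:
\begin{itemize}
\item Its bound on $\|(zI-\hat A)^{-1}\|_2$ needs $\kappa_0\|\hat A-A\|_2\le\Delta/4$, not $\Delta/2$. At $\Delta/2$ the circle can pass through an eigenvalue of $\hat A$, e.g.\ $A=\mathrm{diag}(0,\Delta)$, $\hat A=\mathrm{diag}(\Delta/2,\Delta)$.
\item Its last step, $\min_\theta\|\hat v-e^{i\theta}v\|_2\le\|\hat\Pi-\Pi\|_2$, fails by up to a factor $\sqrt2$ even for orthogonal rank-one projectors, where the two sides are $2\sin(\phi/2)$ and $\sin\phi$.
\end{itemize}
Your case split on the trivial regime, using that phase-aligned unit vectors are within $\sqrt2$, is the right device. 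The paper's chain, repaired this way, only yields $4\sqrt2$.

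What your proposal still lacks is the stated constant. As written you get about $6.8$. The exact angle bound alone does not reach $4$: combined with your $\|r\|_2\le q/(1-q)$ and the $\sqrt2$ trivial regime, the best it yields is $3\sqrt2\approx 4.24$.

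What closes the gap is $r\perp e_k$. This gives $\|e_k+r\|_2=\sqrt{1+\|r\|_2^2}$, so your contraction reads $\|r\|_2\le q\sqrt{1+\|r\|_2^2}$. Hence $\|r\|_2\le q/\sqrt{1-q^2}=t/\sqrt{1-2t}$, where $t=\kappa_0\|\hat A-A\|_2/\Delta$.

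Keeping Step~2 unchanged then gives $\|v_k(\hat A)-v_k(A)\|_2\le 2\kappa_0 t/\sqrt{1-2t}$. In the non-trivial regime $t\le\kappa_0 t<\sqrt2/4$, so this is at most $\bigl(2/\sqrt{1-\sqrt2/2}\bigr)\kappa_0^2\|\hat A-A\|_2/\Delta\approx 3.70\,\kappa_0^2\|\hat A-A\|_2/\Delta$. That proves the lemma with the stated constant $4$. Adding the angle bound on top lowers the overall constant further, to $2+\sqrt2$.
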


\begin{proof}
Let $\lambda = \lambda_{\pi(k)}(A)$, $v = v_{\pi(k)}(A)$, $\hat\lambda = \lambda_{k}(\hat A)$, $\hat v = v_{k}(\hat A)$. Consider a circular contour $\Gamma_{\lambda}$ of radius $\Delta/2$ centred at $\lambda$; by \Cref{lem:eigenvalue-matching} both $\lambda$ and $\hat\lambda$ lie strictly inside $\Gamma_{\lambda}$ and no other eigenvalue of $A$ or $\hat A$ lies inside. The rank-one Riesz projectors $\Proj_{k}(A) = \frac{1}{2\pi i}\oint_{\Gamma_{\lambda}}(zI - A)^{-1}\,dz$ and $\Proj_{k}(\hat A)$ are therefore well defined.

Resolvent perturbation \citep[Ch.~V, Thm.~4.10]{kato1995perturbation} gives
\begin{equation}
    \bigl\|\Proj_{k}(\hat A) - \Proj_{k}(A)\bigr\|_{2}
    \;\leq\;
    \frac{|\Gamma_{\lambda}|}{2\pi}\,
    \max_{z\in\Gamma_{\lambda}}\,\bigl\|(zI - \hat A)^{-1} - (zI - A)^{-1}\bigr\|_{2},
\end{equation}
where $|\Gamma_{\lambda}| = \pi\Delta$ is the contour length. Using the identity $(zI-\hat A)^{-1} - (zI-A)^{-1} = (zI-\hat A)^{-1}(\hat A - A)(zI-A)^{-1}$, the bound $\|(zI-A)^{-1}\|_{2} \leq \kappa_{0}/(\Delta/2) = 2\kappa_{0}/\Delta$ for $z \in \Gamma_{\lambda}$ (Bauer--Fike again), and similarly $\|(zI-\hat A)^{-1}\|_{2} \leq 4\kappa_{0}/\Delta$ under the assumed gap condition, we obtain
\begin{equation}
    \bigl\|\Proj_{k}(\hat A) - \Proj_{k}(A)\bigr\|_{2}
    \;\leq\; \tfrac{\pi\Delta}{2\pi}\cdot\tfrac{8\kappa_{0}^{2}}{\Delta^{2}}\,\|\hat A - A\|_{2}
    \;=\; \tfrac{4\kappa_{0}^{2}}{\Delta}\,\|\hat A - A\|_{2}.
\end{equation}
For rank-one projectors $\Proj = v\phi^{*}$ with $\phi^{*} v = 1$ and $\|v\|_{2} = 1$, standard identities relate the projector distance to the vector distance under phase-optimal choice: $\|\hat v - e^{i\theta} v\|_{2} \leq \|\hat\Proj - \Proj\|_{2}$ for the optimal $\theta$ \citep[Cor.~V.4.11]{stewart1990matrix}. Choosing $\hat v$ with this optimal phase gives \eqref{eq:ident-eigenvector-bound}.
\end{proof}

Together, \Cref{lem:eigenvalue-matching,lem:eigenvector-perturbation} convert an operator-norm bound of size $\epsilon$ on $\hat A - A$ into an eigenvalue rate of $\kappa_{0}\epsilon$ and an eigenvector rate of $4\kappa_{0}^{2}\epsilon/\Delta$, provided $\epsilon \leq \Delta/(2\kappa_{0})$.

\subsection{Proof of \texorpdfstring{\Cref{thm:identifiability}}{the identifiability theorem}}
\label{sec:ident-proof}

\begin{proof}[Proof of \Cref{thm:identifiability}]
Fix $\delta \in (0,1/2)$ and let $M \geq M_0$ with $M_0$ as in \eqref{eq:ident-M0}. We first show $M_0$ is large enough that all preparatory bounds hold with sufficient probability.

Apply \Cref{lem:operator-norm} with confidence parameter $\delta/2$: on an event $E_1$ of probability at least $1 - \delta/2$,
\begin{equation}
    \bigl\|\hat A_M - A\bigr\|_{2}
    \;\leq\;
    \underbrace{\tfrac{c_{4}\,L^{2}(1 + \|A\|_{2})}{\eta_{\obs}}\,
                 \sqrt{\tfrac{N + \log(2/\delta)}{M}}}_{\displaystyle =:\;\epsilon_{M}}.
\end{equation}
The condition $M \geq M_0$ in \eqref{eq:ident-M0} is chosen precisely so that on $E_1$ we have $\epsilon_M \leq \Delta / (2\kappa_0)$, the hypothesis of \Cref{lem:eigenvalue-matching}. Verifying this: solving $\epsilon_M \leq \Delta/(2\kappa_0)$ for $M$ gives
\begin{equation}
    M \;\geq\; \frac{4 c_{4}^{2}\kappa_0^{2} L^{4}(1+\|A\|_2)^{2}}{\eta_{\obs}^{2}\Delta^{2}}\bigl(N+\log(2/\delta)\bigr),
\end{equation}
which is exactly \eqref{eq:ident-M0} with $c_0 = 4c_4^2$ and $\log(2/\delta)$ absorbed into a constant increase of $c_0$.

On $E_1$, apply \Cref{lem:eigenvalue-matching}: there is a unique permutation $\pi_M$ with $\max_k |\lambda_k(\hat A_M) - \lambda_{\pi_M(k)}(A)| \leq \kappa_0 \epsilon_M$, which is \eqref{eq:ident-eigenvalue} with $c_1 = c_4$. Apply \Cref{lem:eigenvector-perturbation} on $E_1$ with the same $\pi_M$: for each $k$, $\|v_k(\hat A_M) - v_{\pi_M(k)}(A)\|_2 \leq 4\kappa_0^2 \epsilon_M/\Delta$, which is \eqref{eq:ident-eigenvector} with $c_2 = 4 c_4$.

Both conclusions hold on the single event $E_1$, so no further union bound is needed and the probability is at least $1-\delta$. The constants $c_0, c_1, c_2$ are absolute, depending only on the constant appearing in \Cref{lem:gramian-concentration}.
\end{proof}

\subsection{A gap-free eigenvalue guarantee, and the resulting split of \texorpdfstring{$M_0$}{M0}}
\label{sec:ident-gap-free}

\Cref{lem:eigenvalue-matching} obtains a \emph{bijective} matching by requiring $\|\hat A - A\|_2 \leq \Delta/(2\kappa_0)$, and that hypothesis is the sole origin of the $\Delta^{-2}$ factor in the threshold \eqref{eq:ident-M0}. It is worth being precise about where the spectral gap does and does not enter, because the answer changes which threshold an experiment should be compared against - and with the measured $\Delta \sim 10^{-3}$ the difference is the difference between a reachable and an unreachable guarantee.

Tracing the three steps: Step~1 (concentration of $\hat G_M$) involves no spectral quantity of $A$ at all; Step~2 (\Cref{lem:operator-norm}) likewise involves none; and Bauer--Fike itself is gap-free, asserting only that every eigenvalue of $\hat A$ lies within $\kappa_0\|\hat A - A\|_2$ of \emph{some} eigenvalue of $A$. The gap is used exclusively to upgrade that ``some'' to a one-to-one correspondence, and again in \Cref{lem:eigenvector-perturbation}, where it is genuinely unavoidable: without separation, eigenvectors are not individually identifiable, since an arbitrarily small perturbation rotates them within a near-degenerate eigenspace.

The bijection can instead be obtained with no gap condition whatsoever, at the cost of a dimensional factor, by measuring the discrepancy in the \emph{optimal-matching} (Wasserstein-$\infty$) metric.

\begin{lemma}[Gap-free optimal matching]
\label{lem:elsner}
Let $A, \hat A \in \C^{N \times N}$ with $A$ diagonalisable and $\kappa_2(V) \leq \kappa_0$. Then
\begin{equation}
\label{eq:elsner}
    \min_{\pi \in S_N} \max_{k}
    \bigl|\lambda_k(\hat A) - \lambda_{\pi(k)}(A)\bigr|
    \;\leq\;
    (2N-1)\,\kappa_{0}\,\|\hat A - A\|_{2},
\end{equation}
where the minimum runs over all permutations of $\{1,\dots,N\}$. No separation hypothesis is required.
\end{lemma}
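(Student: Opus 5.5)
The plan is the classical continuity-plus-connected-components argument (the Elsner/Ostrowski style matching theorem), driven by Bauer--Fike rather than by any gap condition. Set $\epsilon \coloneqq \kappa_0\|\hat A - A\|_2$ and consider the homotopy $A_s = A + s(\hat A - A)$, $s \in [0,1]$. Since $\|A_s - A\|_2 \le \|\hat A - A\|_2$, Bauer--Fike (as already used in \Cref{lem:eigenvalue-matching}) places every eigenvalue of every $A_s$ in the union of closed discs $\mathcal{B} \coloneqq \bigcup_{j} D(\lambda_j(A), \epsilon)$. Only the conditioning of $V$, the eigenvector matrix of the fixed matrix $A$, enters, so no hypothesis on $\hat A$ or $A_s$ is needed.

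Next, decompose $\mathcal{B}$ into its connected components $\mathcal{B}_1,\dots,\mathcal{B}_r$. Each component $\mathcal{B}_i$ is a union of some number $m_i$ of discs, and the component is connected. Hence any two points in it are joined by a chain of at most $m_i$ overlapping discs of radius $\epsilon$, and its diameter is at most $2m_i\epsilon \le 2N\epsilon$. I will sharpen this to $(2m_i - 1)\epsilon$ by measuring from a disc centre $\lambda_j(A)$ to an arbitrary point: that path crosses at most $m_i - 1$ centre-to-centre links of length $\le 2\epsilon$ plus one final radius.

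The counting step uses continuity of the eigenvalues of $A_s$ in $s$, as a multiset, by continuity of polynomial roots in their coefficients. Together with the disjointness of the components, this shows that the number of eigenvalues of $A_s$ (with algebraic multiplicity) lying in $\mathcal{B}_i$ is constant in $s$. At $s=0$ that number is $m_i$, counting the centres in $\mathcal{B}_i$ with multiplicity. To make this rigorous I would use the argument principle on a contour separating $\mathcal{B}_i$ from the other components. The integer-valued count $\frac{1}{2\pi i}\oint \tr (zI - A_s)^{-1}\,dz$ is continuous in $s$ because no eigenvalue touches the contour, so it is constant.

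The matching then follows by pairing, inside each component, the $m_i$ eigenvalues of $\hat A$ with the $m_i$ eigenvalues of $A$ arbitrarily. This gives a global permutation $\pi$ with each matched pair $(\lambda_k(\hat A), \lambda_{\pi(k)}(A))$ in a common component. Since one endpoint is a disc centre, the distance is at most $(2m_i - 1)\epsilon \le (2N-1)\kappa_0\|\hat A - A\|_2$. The minimum over $\pi$ is no larger, which is \eqref{eq:elsner}.

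The main obstacle is the counting step. It requires care with multiplicities and with components that may touch only at boundary points. To avoid this, I would first run the argument with radius $\epsilon' > \epsilon$ slightly enlarged, so that components are separated by positive distance and the contour exists, and then let $\epsilon' \downarrow \epsilon$. Because the set of permutations is finite, some $\pi$ works along a sequence $\epsilon' \downarrow \epsilon$, and the bound passes to the limit.
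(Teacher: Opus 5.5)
Your argument is correct. Confining every eigenvalue of $A_s = A + s(\hat A - A)$ to the union of $\epsilon$-discs via Bauer--Fike, counting eigenvalues per connected component by continuity, and using the centre-to-point chain bound $(2m_i-1)\epsilon$ is the standard proof of the $(2N-1)\,\kappa_0$ optimal-matching bound that the paper only invokes by citation, so your write-up makes that citation self-contained. The $\epsilon'\downarrow\epsilon$ enlargement is superfluous: distinct components of a finite union of closed discs are disjoint compact sets, hence already at positive distance (discs touching at a boundary point lie in the same component), though the separating contour may need to be a cycle rather than a single closed curve when one component sits inside a hole of another.
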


\begin{proof}
This is the Elsner--Bhatia optimal-matching bound for the spectral variation of diagonalisable matrices \citep{elsner1985optimal,bhatia1997matrix}, applied with the diagonalising similarity of $A$; the factor $\kappa_0$ is the conditioning of that similarity and $(2N-1)$ is the standard dimensional constant. Because the statement already quantifies over all permutations, no argument is needed to establish that a bijection exists - which is precisely the step in \Cref{lem:eigenvalue-matching} that consumed the gap.
\end{proof}

This is the form that matches our experimental protocol: the estimator reported throughout \Cref{sec:exp-rate} is the Hungarian-matched distance, that is, exactly the left-hand side of \eqref{eq:elsner} minimised over permutations. \Cref{lem:elsner}, and not \Cref{lem:eigenvalue-matching}, is therefore the result the measurement should be compared against, and the threshold governing the eigenvalue guarantee carries no $\Delta$. Splitting \eqref{eq:ident-M0} accordingly,
\begin{align}
\label{eq:M0-eig}
    M_0^{\mathrm{eig}}
    &\;\coloneqq\;
    \frac{c_{0}\,\kappa_0^{2}\,L^{4}\,(1 + \|A\|_{2})^{2}}
         {\eta_{\obs}^{2}}
    \,\bigl(N + \log(1/\delta)\bigr), \\
\label{eq:M0-vec}
    M_0^{\mathrm{vec}}
    &\;\coloneqq\;
    \frac{c_{0}'\,\kappa_0^{2}}{\Delta^{2}}\; M_0^{\mathrm{eig}},
\end{align}
so that $M_0 = M_0^{\mathrm{vec}}$ up to constants, and $M_0^{\mathrm{eig}}$ alone suffices for \eqref{eq:ident-eigenvalue} once the maximum over $k$ is read in the optimal-matching sense.

\begin{theorem}[Eigenvalue identifiability without a spectral gap]
\label{thm:gap-free-identifiability}
Under the hypotheses of \Cref{thm:identifiability} but with \Cref{ass:separation} omitted, for every $\delta \in (0,1/2)$ and every $M \geq M_0^{\mathrm{eig}}$ as in \eqref{eq:M0-eig}, with probability at least $1-\delta$,
\begin{equation}
\label{eq:gap-free-rate}
\begin{split}
    \min_{\pi \in S_N} \max_{k}\,
    &\bigl|\lambda_k(\hat A_M) - \lambda_{\pi(k)}(A)\bigr| \\
    &\leq
    \frac{c_{1}(2N-1)\,\kappa_0\,L^{2}(1 + \|A\|_{2})}{\eta_{\obs}}
    \sqrt{\frac{N + \log(1/\delta)}{M}}.
\end{split}
\end{equation}
The rate in $M$ is unchanged at $M^{-1/2}$; only the constant and the threshold differ. \Cref{ass:separation} remains necessary for the eigenvector statement \eqref{eq:ident-eigenvector} and for the \emph{labelled} (as opposed to optimally matched) form of the eigenvalue statement.
\end{theorem}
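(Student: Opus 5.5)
The plan is to reuse Steps~1 and~2 of the proof of \Cref{thm:identifiability} unchanged and to replace only Step~3. The key observation is that neither \Cref{lem:gramian-concentration} nor \Cref{lem:operator-norm} uses \Cref{ass:separation}. Their hypotheses are (R1)--(R3), persistent excitation, centring, and the sample condition \eqref{eq:ident-M-cond-gramian}. The separation gap $\Delta$ and the conditioning bound (R4) appear nowhere in those two steps.

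First, apply \Cref{lem:operator-norm} at confidence level $\delta$. This needs $M \geq 4c_3^2 L^4 \eta_{\obs}^{-2}(N+\log(1/\delta))$. That condition is implied by $M \geq M_0^{\mathrm{eig}}$ of \eqref{eq:M0-eig}, since $\kappa_0 \geq 1$ and $(1+\|A\|_2)^2 \geq 1$, once we take $c_0 \geq 4c_3^2$ (and enlarge $c_0$ to absorb $\log(2/\delta)$ versus $\log(1/\delta)$, exactly as in the proof of \Cref{thm:identifiability}). On an event $E$ of probability at least $1-\delta$ we then have $\|\hat A_M - A\|_2 \leq \epsilon_M$, where $\epsilon_M$ is the right-hand side of \eqref{eq:ident-Ahat-rate}.

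Second, on $E$, invoke \Cref{lem:elsner} in place of \Cref{lem:eigenvalue-matching}. It needs only diagonalisability of $A$ with $\kappa_2(V) \leq \kappa_0$, which (R4) supplies, and it has no smallness requirement on $\|\hat A_M - A\|_2$. Substituting gives
\[
\min_{\pi}\max_k|\lambda_k(\hat A_M)-\lambda_{\pi(k)}(A)| \leq (2N-1)\kappa_0\epsilon_M ,
\]
which is \eqref{eq:gap-free-rate} with $c_1 = c_4$. Since everything happens on the single event $E$, no union bound is needed.

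Third, justify the closing sentence of the statement. Without a gap, the permutation achieving the minimum need not be unique. Near-degenerate eigenvalues can swap labels under arbitrarily small perturbations, so the labelled form of \Cref{lem:eigenvalue-matching} fails. Likewise the Riesz-contour argument of \Cref{lem:eigenvector-perturbation} has no contour of radius $\Delta/2$ isolating a single eigenvalue. A $2\times 2$ example with $A=\mathrm{diag}(\lambda,\lambda+s)$, $s\to0$, perturbed by an off-diagonal $\epsilon$, shows eigenvectors rotating by $O(1)$ as $s/\epsilon \to 0$. This establishes necessity of separation for those parts.

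The main point requiring care is the role of $M_0^{\mathrm{eig}}$. Strictly, the proof needs only the Gramian-conditioning threshold \eqref{eq:ident-M-cond-gramian}, so the factors $\kappa_0^2(1+\|A\|_2)^2$ in $M_0^{\mathrm{eig}}$ are harmless extra slack. It should be stated that $M_0^{\mathrm{eig}}$ dominates that threshold, not that it is tight. A second caveat concerns \Cref{lem:elsner}'s constant: the Elsner-type bound for diagonalisable matrices is usually stated in the form $(2N-1)\kappa_0\|\hat A-A\|_2$. I would cite it as given and not re-derive it. If only the weaker Bhatia--Elsner--Krause form were available, only the dimensional factor in \eqref{eq:gap-free-rate} would change, and the $M^{-1/2}$ rate would survive.
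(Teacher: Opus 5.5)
Your proposal is correct and is essentially the paper's proof: you reuse \Cref{lem:gramian-concentration,lem:operator-norm}, which never invoke \Cref{ass:separation}, replace \Cref{lem:eigenvalue-matching} by the gap-free \Cref{lem:elsner}, and observe that $M \geq M_0^{\mathrm{eig}}$ serves only to secure the Gramian-conditioning event of \eqref{eq:ident-M-cond-gramian}. One correction to your closing hedge: the general Bhatia--Elsner--Krause bound scales as $\|\hat A_M - A\|_2^{1/N}$, so falling back on it would degrade the rate to $M^{-1/(2N)}$, not merely change the dimensional factor; it is the diagonalisability in (R4), through the Bauer--Fike discs, that keeps the dependence on $\|\hat A_M - A\|_2$ linear.
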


\begin{proof}
Combine \Cref{lem:operator-norm} with \Cref{lem:elsner} in place of \Cref{lem:eigenvalue-matching}. The condition $M \geq M_0^{\mathrm{eig}}$ is used only to ensure the event $E_1$ of \Cref{sec:ident-concentration} holds, on which $\hat G_M$ is invertible and the operator-norm bound applies; no further condition on $\|\hat A_M - A\|_2$ relative to $\Delta$ is needed, because \Cref{lem:elsner} has no such hypothesis.
\end{proof}

Two consequences deserve emphasis. First, the separation between the two thresholds is a factor $\kappa_0^2/\Delta^2$, and with the empirically measured $\Delta \sim 10^{-3}$ this is six orders of magnitude or more; on GPT-2 small the measured values give $M_0^{\mathrm{eig}} = \ResGptSpecMZero$ against $M_0^{\mathrm{vec}} = \ResGptSpecMZeroVec$, a gap of nine orders of magnitude. Reporting one threshold for two guarantees with radically different sample requirements understates the strength of the eigenvalue result, and makes an experimentally accessible guarantee look unreachable. Second, the improvement is not free: the $(2N-1)$ factor in \eqref{eq:elsner} means the matching distance degrades linearly in the dictionary size $N$. That is itself a testable prediction, and we test it in \Cref{sec:exp-rate} by fitting the rate at $N \in \{8,16,32,64,128\}$ and comparing the observed $N$-dependence against the bound.

\subsection{Extensions and remarks}
\label{sec:ident-remarks}

\begin{remark}[Parametric rate and the $\sqrt{N}$ prefactor]
\label{rem:ident-parametric}
The dependence of the rate on the calibration size is $M^{-1/2}$ - the parametric rate - once other quantities are fixed. Prefactors depend polynomially on the problem constants $(L, \|A\|_2, \eta_{\obs}, \Delta, \kappa_0)$ and linearly on $N$ under the square root. The $\sqrt{N + \log(1/\delta)}$ prefactor is optimal for sub-Gaussian sample-covariance estimation \citep[\S4.6]{vershynin2018hdp} and cannot in general be improved without stronger structural assumptions on the dictionary. Whether it is necessary for the \emph{spectral} problem is open; \Cref{thm:minimax} shows the lower bound is dimension-free, so the two do not currently meet.
\end{remark}

\begin{remark}[Necessity of the spectral gap]
\label{rem:ident-gap}
\Cref{ass:separation} enters the eigenvector rate \eqref{eq:ident-eigenvector} through the $1/\Delta$ factor and determines the threshold $M_0$ in \eqref{eq:ident-M0}. For eigenvectors the gap is essential: without it, eigenvectors are non-identifiable individually, because arbitrarily small perturbations can rotate them within the corresponding eigenspace. For eigenvalues it is removable, as \Cref{thm:gap-free-identifiability} shows. The intermediate case - nearly degenerate eigenvalues grouped into clusters - is covered next.
\end{remark}

\begin{corollary}[Cluster-level identifiability]
\label{cor:cluster-ident}
Suppose \Cref{ass:separation} is relaxed to a cluster separation condition: the eigenvalues of $A$ partition into $r$ clusters $\Lambda_1, \dots, \Lambda_r$ with intra-cluster diameter at most $\Delta_{\mathrm{in}}$ and inter-cluster gap at least $\Delta_{\mathrm{out}} > 2\Delta_{\mathrm{in}}$. Then, under the remaining hypotheses of \Cref{thm:identifiability}, the eigenvalues of $\hat A_M$ partition into $r$ empirical clusters $\hat\Lambda_1, \dots, \hat\Lambda_r$, and there is a permutation of cluster labels such that the Hausdorff distance $d_H(\hat\Lambda_j, \Lambda_j)$ obeys the rate \eqref{eq:ident-eigenvalue}; the invariant subspaces spanned by the corresponding eigenvectors satisfy a $\sin\Theta$ rate analogous to \eqref{eq:ident-eigenvector} with $\Delta$ replaced by $\Delta_{\mathrm{out}} - 2\Delta_{\mathrm{in}}$.
\end{corollary}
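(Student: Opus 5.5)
The plan is to reuse Steps~1--2 verbatim and change only Step~3: replace the single-eigenvalue contour of \Cref{lem:eigenvector-perturbation} with one contour per cluster. \Cref{lem:operator-norm} gives, on an event $E_1$ of probability at least $1-\delta$, the bound $\|\hat A_M - A\|_2 \le \epsilon_M$ with the same $\epsilon_M$ as in the proof of \Cref{thm:identifiability}; cluster separation never enters that step. The threshold is then chosen so that $\kappa_0\epsilon_M < (\Delta_{\mathrm{out}} - 2\Delta_{\mathrm{in}})/2$. This is the analogue of the condition $\epsilon_M \le \Delta/(2\kappa_0)$, with $\Delta$ replaced by the effective gap $\Delta_{\mathrm{eff}} := \Delta_{\mathrm{out}} - 2\Delta_{\mathrm{in}}$.

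\emph{Cluster formation and matching.} By Bauer--Fike, every eigenvalue of $\hat A_M$ lies within $\kappa_0\epsilon_M$ of some eigenvalue of $A$, and hence inside the neighbourhood $U_j := \{z : \operatorname{dist}(z,\Lambda_j) < \kappa_0\epsilon_M\}$ of some cluster. Since $\kappa_0\epsilon_M < \Delta_{\mathrm{out}}/2$, these neighbourhoods are pairwise disjoint, so the empirical eigenvalues split as $\hat\Lambda_j := \sigma(\hat A_M)\cap U_j$.

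To show that $\hat\Lambda_j$ has exactly $|\Lambda_j|$ elements counted with multiplicity, I would use the homotopy $A_s = A + s(\hat A_M - A)$ as in \Cref{lem:eigenvalue-matching}. For every $s$, the same Bauer--Fike bound confines $\sigma(A_s)$ to $\bigcup_j U_j$. Eigenvalues move continuously in $s$, so none can jump between disjoint components, and the counts are preserved. The cluster labels are then matched by the identity, or by whatever labelling of $\Lambda_j$ was fixed.

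\emph{Hausdorff bound.} The inclusion $\hat\Lambda_j \subseteq U_j$ gives one direction: every point of $\hat\Lambda_j$ is within $\kappa_0\epsilon_M$ of $\Lambda_j$. For the reverse direction I would apply Bauer--Fike with the roles of $A$ and $\hat A_M$ swapped. That would require conditioning control of $\hat A_M$, which is not assumed. Instead I would use the Elsner--Bhatia optimal matching of \Cref{lem:elsner} restricted to the component: it yields a bijection $\Lambda_j \leftrightarrow \hat\Lambda_j$ with distortion at most $(2N-1)\kappa_0\epsilon_M$.

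This gives $d_H \lesssim \kappa_0\epsilon_M$ up to the dimensional factor. I read ``obeys the rate \eqref{eq:ident-eigenvalue}'' as equality of the $M^{-1/2}$ rate with the constant $c_1$ allowed to change, exactly as in \Cref{thm:gap-free-identifiability}. This is the weakest point of the argument. Removing the $N$ factor would need a sharper one-sided argument, for example via the Riesz projector below and the trace of $\hat A_M \Proj_j$, and I would try that first.

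\emph{Subspaces.} Take $\Gamma_j$ to be the boundary of the $\Delta_{\mathrm{out}}/2$-neighbourhood of $\Lambda_j$. Every point of $\Gamma_j$ is at distance at least $\Delta_{\mathrm{out}}/2 - \Delta_{\mathrm{in}} = \Delta_{\mathrm{eff}}/2$ from every eigenvalue of $A$; this is where the effective gap enters. Hence $\|(zI-A)^{-1}\|_2 \le 2\kappa_0/\Delta_{\mathrm{eff}}$ on $\Gamma_j$. By a Neumann series, $\|(zI-\hat A_M)^{-1}\|_2 \le 4\kappa_0/\Delta_{\mathrm{eff}}$ once $\epsilon_M \le \Delta_{\mathrm{eff}}/(4\kappa_0)$.

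The resolvent identity then bounds the difference of Riesz projectors as
\[
\|\hat\Proj_j - \Proj_j\|_2 \le \frac{|\Gamma_j|}{2\pi}\cdot \frac{8\kappa_0^2}{\Delta_{\mathrm{eff}}^2}\,\epsilon_M,
\]
where the contour length is $|\Gamma_j| \lesssim \Delta_{\mathrm{in}} + \Delta_{\mathrm{out}}$. This leaves a ratio $(\Delta_{\mathrm{out}}+\Delta_{\mathrm{in}})/\Delta_{\mathrm{eff}}$. It is bounded by an absolute constant only when $\Delta_{\mathrm{in}}$ is a fixed fraction below $\Delta_{\mathrm{out}}/2$, and controlling this ratio is the main obstacle. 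If needed, I would shrink $\Gamma_j$ to a union of small circles around the members of $\Lambda_j$ to keep the length proportional to $\Delta_{\mathrm{eff}}$.

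Finally, the projector perturbation controls $\|\sin\Theta\|$ between the ranges of $\hat\Proj_j$ and $\Proj_j$. For oblique projectors this costs an extra factor of at most $\|\Proj_j\|_2 \le \kappa_0$, giving the claimed $\sin\Theta$ rate with $\Delta$ replaced by $\Delta_{\mathrm{eff}}$.
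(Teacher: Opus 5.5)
Your proposal is sound, and for the subspace claim it follows the paper's route. The paper's sketch also builds Riesz projectors over contours enclosing $\Lambda_j$, letting $\Delta_{\mathrm{out}}-2\Delta_{\mathrm{in}}$ play the role of $\Delta$ in \Cref{lem:eigenvector-perturbation}. For the eigenvalue half the paper gives no argument at all. Your Bauer--Fike confinement plus the homotopy count is what actually makes the $\hat\Lambda_j$ well defined with $|\hat\Lambda_j|=|\Lambda_j|$, so there you are more complete than the paper. Three of the points you flagged deserve correction.

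\emph{The length/distance ratio is not an obstacle; you undercounted the distance.} A point of your $\Gamma_j$ is at distance exactly $\Delta_{\mathrm{out}}/2$ from $\Lambda_j$. By the triangle inequality through its nearest point of $\Lambda_j$, it is also at least $\Delta_{\mathrm{out}}/2$ from every other cluster. If you would rather not bound the length of that curve, use the circle centred at any $\lambda_0\in\Lambda_j$ with radius $(\Delta_{\mathrm{out}}+\Delta_{\mathrm{in}})/2$. It lies at distance at least $(\Delta_{\mathrm{out}}-\Delta_{\mathrm{in}})/2>\Delta_{\mathrm{out}}/4$ from $\sigma(A)$ and has length below $\tfrac32\pi\Delta_{\mathrm{out}}$. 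For $\epsilon_M\le\Delta_{\mathrm{out}}/(8\kappa_0)$ your resolvent computation then gives $\|\hat\Proj_j-\Proj_j\|_2\le 24\kappa_0^2\epsilon_M/\Delta_{\mathrm{out}}\le 24\kappa_0^2\epsilon_M/(\Delta_{\mathrm{out}}-2\Delta_{\mathrm{in}})$, with an absolute constant. The paper's sketch tracks only the distance and never the contour length, so it glosses over exactly the ratio you noticed.

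Also drop the extra $\kappa_0$ in your last step. For a unit $x$ in the range of $\hat\Proj_j$, $\mathrm{dist}(x,\mathrm{ran}\,\Proj_j)\le\|(\hat\Proj_j-\Proj_j)x\|_2$. Hence $\|\sin\Theta\|_2\le\|\hat\Proj_j-\Proj_j\|_2$, and you keep the $\kappa_0^2$ of \eqref{eq:ident-eigenvector}.

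\emph{Do not try to remove the dimensional factor from the Hausdorff bound.} The directed half, $\max_{\hat\lambda\in\hat\Lambda_j}\mathrm{dist}(\hat\lambda,\Lambda_j)\le\kappa_0\epsilon_M$, is dimension-free. The reverse half cannot be obtained from $\|\hat A_M-A\|_2\le\epsilon_M$ alone, even for normal $A$. Here is a counterexample:
\begin{itemize}
\item Let $L_{jk}=1/|j-k|$ for $j\neq k$, with zero diagonal, and let $T$ be the strictly upper triangular part of $-L$.
\item Put $A=-L$ and $\hat A=2T$. Then $\hat A-A$ is the Toeplitz matrix with off-diagonal entries $1/(j-k)$, of norm at most $\pi$ by Hilbert's inequality.
\item $\hat A$ is nilpotent, so $\sigma(\hat A)=\{0\}$.
\item Since $\mathbf 1^{\top}L\mathbf 1/n\ge 2\ln n-2$, we get $\lambda_{\min}(A)\le-(2\ln n-2)$.
\end{itemize}
So an eigenvalue of the symmetric matrix $A$ (with $\kappa_0=1$) sits at distance $\gtrsim(2/\pi)\ln n\,\|\hat A-A\|_2$ from $\sigma(\hat A)$. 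Rescaled into a single cluster, this forces a loss growing with cluster size. The trace-of-$\hat A_M\Proj_j$ idea will therefore fail, and your reading of ``obeys the rate'' is the only provable one.

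What you can sharpen is $2N-1$ to $2|\Lambda_j|-1$. Each connected component of $\bigcup_k\bar D(\lambda_k(A),\kappa_0\epsilon_M)$ lies inside one $U_j$. By your homotopy argument it contains equally many eigenvalues of $A$ and of $\hat A_M$. Hence every $\lambda\in\Lambda_j$ has a point of $\hat\Lambda_j$ within $(2|\Lambda_j|-1)\kappa_0\epsilon_M$.

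This component argument should also replace ``\Cref{lem:elsner} restricted to the component''. That lemma's optimal matching is global and need not respect clusters once $(2N-1)\kappa_0\epsilon_M$ exceeds the inter-cluster gap.
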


\begin{proof}[Proof sketch]
Apply Davis--Kahan for subspaces \citep{davis1970rotation,stewart1990matrix} to the cluster spectral projectors $\Proj_{\Lambda_j}$ defined by integration along contours that enclose $\Lambda_j$ but exclude the other clusters. The gap $\Delta_{\mathrm{out}} - 2\Delta_{\mathrm{in}}$ appears as the minimum distance from $\Lambda_j$ to the exterior contours, playing the role of $\Delta$ in \Cref{lem:eigenvector-perturbation}.
\end{proof}

\Cref{cor:cluster-ident} says that identifiability survives the failure of exact simple-eigenvalue separation: near-degenerate modes cluster together and each cluster is identified as a joint object. This matters empirically, since \Cref{ass:separation} may hold only coarsely on real fits.

\begin{remark}[Approximate $\opK$-invariance and bias]
\label{rem:ident-approx}
The theorem is stated under \Cref{ass:invariance}. When invariance holds only approximately (\Cref{rem:approximate-invariance}), \Cref{thm:identifiability} extends with an additive bias term equal to the perturbation bound \eqref{eq:existence-approx} in each of \eqref{eq:ident-eigenvalue}--\eqref{eq:ident-eigenvector}. The bias does \emph{not} vanish as $M \to \infty$; it reflects mis-specification of the dictionary and can only be reduced by enriching $\obs$. This is exactly the failure mode \Cref{cor:sae-non-identifiability} attributes to SAE dictionaries, and the invariance penalty \eqref{eq:kinvariant-sae} is the corresponding remedy; \Cref{sec:exp-penalty} measures how much of the bias it removes.
\end{remark}

\begin{remark}[Relation to prior identifiability guarantees]
\label{rem:ident-related}
The rate we prove is qualitatively different from prior identifiability results in representation learning. Classical ICA \citep{comon1994ica} and its nonlinear variants \citep{hyvarinen2016nonlinear,khemakhem2020vae} identify latent factors under statistical independence or auxiliary-variable structure, up to sign, scale and permutation. Causal abstraction \citep{geiger2021causal,geiger2024das} identifies causal variables relative to a hypothesised graph. Our result identifies the spectrum of the Koopman compression directly, with permutation the only ambiguity. It is consistent with the asymptotic EDMD convergence result of \cite{korda2018convergence}; the finite-sample rate and its dependence on $(\Delta, \kappa_0, \eta_{\obs}, L, \|A\|_2)$ is, to our knowledge, new.
\end{remark}

\section{Optimality of the Rate, and Robustness to Heavy Tails}
\label{sec:optimality}

\Cref{thm:identifiability} is a statement about one estimator. This section characterises the \emph{problem}: no procedure beats $M^{-1/2}$, and the guarantee survives the failure of the sub-Gaussian condition (R3) that real residual streams are documented to violate.

\subsection{A matching minimax lower bound}
\label{sec:optimality-minimax}

We work in the noisy-invariance observation model
\begin{equation}
\label{eq:minimax-model}
    y_i \;=\; A\,\obs(\resid_i) + B\,\ctrl_i + e_i, \quad e_i \sim \mathcal{N}(0,\sigma^2 I_N).
\end{equation}
Write $\mathcal A(\Delta,\kappa_0)$ for the class of diagonalisable $A$ with simple spectrum separated by $\Delta$ and $\kappa_2(V)\le\kappa_0$, and $g_1 \coloneqq \E_\mu|\psi_1(\resid)|^2$.

\begin{theorem}[Minimax optimality in $M$]
\label{thm:minimax}
For every $M$ with $\sigma/\sqrt{2 g_1 M} \le \Delta/2$,
\begin{equation}
\label{eq:minimax}
    \inf_{\hat A}\ \sup_{A \in \mathcal A(\Delta,\kappa_0)}\
    \E\Bigl[\min_{\pi}\max_{k}\bigl|\lambda_k(\hat A) - \lambda_{\pi(k)}(A)\bigr|\Bigr]
    \;\ge\; \frac{\sigma}{8\sqrt{g_1 M}},
\end{equation}
the infimum running over all measurable estimators.
\end{theorem}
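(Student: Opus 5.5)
The plan is a two-point Le Cam argument in which a single diagonal entry of $A$ is perturbed. First fix a base matrix $A_0 = \mathrm{diag}(\lambda_1,\dots,\lambda_N)$ whose eigenvalues are pairwise separated by at least $2\Delta$. Then set $A_1 = A_0 + h\,e_1e_1^{\top}$ with $h \coloneqq \sigma/\sqrt{2g_1M}$. Both matrices are diagonal, so $V = I$ and $\kappa_2(V)=1\le\kappa_0$. Because $h\le\Delta/2$ by hypothesis, the eigenvalues of $A_1$ remain pairwise separated by at least $\Delta$, so $A_0,A_1\in\mathcal A(\Delta,\kappa_0)$. The controls, $B$ and the law of $\resid$ are held fixed across the two hypotheses, so only $A$ changes.

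The next step is to bound the statistical distinguishability of the two hypotheses. Conditionally on the design $\{(\resid_i,\ctrl_i)\}$, the observations \eqref{eq:minimax-model} are Gaussian with means differing only in coordinate $1$, by $h\,\psi_1(\resid_i)$. Hence
\[
\mathrm{KL}(P_1^{M}\,\|\,P_0^{M}) \;=\; \E\sum_{i=1}^{M}\frac{h^2|\psi_1(\resid_i)|^2}{2\sigma^2} \;=\; \frac{M h^2 g_1}{2\sigma^2} \;=\; \tfrac14 .
\]
I would handle a complex $\psi_1$ with the real/imaginary Gaussian convention, or simply take $\psi_1$ real. Pinsker then gives $\mathrm{TV}(P_0^M,P_1^M)\le\sqrt{1/8}<1/2$.

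Next comes the loss-separation step. Write $\ell(\hat A,A)\coloneqq\min_\pi\max_k|\lambda_k(\hat A)-\lambda_{\pi(k)}(A)|$. This is the $W_\infty$ distance between spectral multisets, so it is a pseudometric and satisfies the triangle inequality. Since $\sigma(A_0)$ and $\sigma(A_1)$ differ only in the first point, which moves by $h$, the optimal matching gives $\ell(A_0,A_1)=h$. The nearest competing eigenvalue lies at distance at least $2\Delta-h\ge h$, so no cross-matching does better. Le Cam's lemma then yields
\[
\inf_{\hat A}\max_{j\in\{0,1\}}\E_j\,\ell(\hat A,A_j)\ \ge\ \frac{h}{2}\cdot\frac{1-\mathrm{TV}}{2}\cdot 2\ \ge\ \frac h4\bigl(1-\sqrt{1/8}\bigr).
\]
The prefactor $1-\sqrt{1/8}\approx 0.65$ exceeds $1/\sqrt2$ only marginally, so I would tune $h$ (for example $h=\sigma/\sqrt{g_1M}$, with KL $=1/2$ and TV $\le 1/2$) to land exactly on $\frac{\sigma}{8\sqrt{g_1M}}$. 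The hypothesis of the theorem is precisely what keeps $A_1$ inside the class.

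I expect the main obstacle to be reconciling the separation bookkeeping with the exact constant $1/8$. The perturbation size that makes the KL small must also satisfy $h\le\Delta/2$, so the base spectrum has to be chosen with enough slack: a gap of $2\Delta$, or placing $\lambda_1$ far from the rest. The loss separation $\ell(A_0,A_1)=h$ must also be verified without accidental better matchings. Both points are handled by spacing the other eigenvalues at distance at least $\Delta+h$ from $\lambda_1$ and $\lambda_1+h$.
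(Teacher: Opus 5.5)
This is the paper's argument: Le Cam's two-point method with $A_0$ diagonal and $2\Delta$-separated, $A_1=A_0+h\,e_1e_1^{\top}$ with $h=\sigma/\sqrt{2g_1M}$, expected KL equal to $1/4$, and Pinsker giving $\mathrm{TV}\le\sqrt{1/8}$. Applying Pinsker to the joint design-plus-observation law, as you do, is equivalent to the paper's condition-on-the-design-then-Jensen step.

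Your constant bookkeeping needs one correction. Your middle expression already equals $\frac h2(1-\mathrm{TV})\ge h/4=\sigma/(4\sqrt{2g_1M})\ge\sigma/(8\sqrt{g_1M})$, so no retuning of $h$ is needed. Also, $1-\sqrt{1/8}\approx0.646$ falls short of $1/\sqrt2\approx0.707$ rather than exceeding it, so your lossy $\frac h4(1-\sqrt{1/8})$ alone would not reach the target. Your fallback $h=\sigma/\sqrt{g_1M}$ does still work, because the hypothesis gives $h\le\Delta/\sqrt2<\Delta$, which keeps $A_1$ in the class and the matched distance equal to $h$.
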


\begin{proof}
Le Cam's two-point method \citep{lecam1973convergence,le2012asymptotic,tsybakov2009intro}. Fix a diagonal $A_0 \in \mathcal A(\Delta,\kappa_0)$ whose eigenvalues are pairwise separated by at least $2\Delta$, and set $A_1 = A_0 + \varepsilon\, e_1 e_1^{\top}$ with $\varepsilon = \sigma/\sqrt{2 g_1 M} \le \Delta/2$.
\end{proof}

\emph{Both hypotheses lie in the class.} $A_0$ and $A_1$ are diagonal, so each is diagonalisable with $V = I$ and $\kappa_2(V) = 1 \le \kappa_0$. Perturbing one diagonal entry by $\varepsilon \le \Delta/2$ leaves every pairwise separation at least $2\Delta - \varepsilon \ge \Delta$, so the simple-spectrum and separation conditions hold for both. Their optimally matched spectral distance is exactly $\varepsilon$: the matching that pairs equal entries is optimal, and the single perturbed pair contributes $\varepsilon$.

\emph{The two models are statistically close.} Condition on the shared design $\{(\resid_i,\ctrl_i)\}_{i=1}^M$. Under either hypothesis the observations are Gaussian with common covariance $\sigma^2 I_N$ and means differing only in the first output coordinate, by $\varepsilon\,\psi_1(\resid_i)$ on sample $i$. For Gaussians of common covariance the KL divergence is half the squared Mahalanobis distance between the means, so
\begin{equation}
    \mathrm{KL}\bigl(P_1^M \,\Vert\, P_0^M \,\big|\, \mathrm{design}\bigr)
    \;=\; \sum_{i=1}^{M} \frac{\varepsilon^2 \psi_1(\resid_i)^2}{2\sigma^2},
\end{equation}
whose expectation over the design is $M\varepsilon^2 g_1/(2\sigma^2) = 1/4$ by the choice of $\varepsilon$.

\emph{From closeness to a lower bound.} Total variation is jointly convex, so $\E_{\mathrm{design}}\mathrm{TV}(P_0^M,P_1^M)$ is at most the total variation between the mixtures; Pinsker with Jensen gives $\E_{\mathrm{design}}\mathrm{TV} \le \sqrt{\tfrac12 \E\,\mathrm{KL}} = \sqrt{1/8} < 1/2$. The two-point reduction \citep[Ch.~2]{tsybakov2009intro}, applied to the semi-distance $\mathrm{dist}(A,A') = \min_\pi\max_k|\lambda_k(A) - \lambda_{\pi(k)}(A')|$ whose value between the hypotheses is $\varepsilon$, yields
\begin{equation}
\begin{split}
    \inf_{\hat A}\ \sup_{A \in \mathcal A}\ \E[\mathrm{dist}(\hat A, A)] 
    &\;\ge\; \frac{\varepsilon}{2}\bigl(1 - \mathrm{TV}\bigr) \\
    &\;\ge\; \frac{\varepsilon}{4}
    \;=\; \frac{\sigma}{4\sqrt{2 g_1 M}}
    \;\ge\; \frac{\sigma}{8\sqrt{g_1 M}}.
\end{split}
\end{equation}

The lower bound is dimension-free, while the upper bound of \Cref{thm:identifiability} carries $\sqrt{N + \log(1/\delta)}$. The dependence on $M$ is therefore settled - no estimator beats $M^{-1/2}$, and \Cref{thm:identifiability} attains it - while the $\sqrt{N}$ factor is not. Closing that gap in either direction, by sharpening the upper bound or by a dimension-dependent construction in the lower bound, is open. Together the two theorems characterise the problem rather than only the procedure: the Koopman spectrum is recoverable at $M^{-1/2}$, and nothing does better.

\subsection{Heavy-tailed identifiability}
\label{sec:optimality-robust}

Condition (R3) asks the lifted state to be sub-Gaussian, and real residual streams are documented to violate the analogous condition: transformer activations carry heavy-tailed outlier coordinates \citep{dettmers2022llmint8}. The guarantee does not depend on light tails; only the \emph{estimator} does. Replacing the empirical Gramians with median-of-means Gramians restores the same conclusion under a fourth-moment condition.

\begin{theorem}[Heavy-tailed identifiability]
\label{thm:robust}
Replace (R3) by the moment condition
\begin{equation}
\label{eq:ident-R3prime}
\text{(R3$'$)}\qquad
    \E\|\obs(\resid)\|_2^4 \le \kappa_4 N^2
    \quad\text{and}\quad
    \E\|\ctrl\|_2^4 \le \kappa_4 p^2 .
\end{equation}
Let $\hat A^{\mathrm{mom}}_M$ be the EDMDc estimator computed from median-of-means Gramians over $K = \lceil 8\log(1/\delta)\rceil$ blocks \citep{minsker2015geometric,lugosi2019mean}. Then the conclusions (i)--(ii) of \Cref{thm:identifiability} hold for $\hat A^{\mathrm{mom}}_M$ verbatim, with the sub-Gaussian proxy $L^2$ replaced by $C\sqrt{\kappa_4}$ throughout, at the same rate $M^{-1/2}$.
\end{theorem}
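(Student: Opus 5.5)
The proof will reuse the three-step architecture of \Cref{thm:identifiability} and change only Step~1. Steps~2 and~3 (\Cref{lem:operator-norm}, \Cref{lem:eigenvalue-matching}, \Cref{lem:eigenvector-perturbation}) are deterministic once operator-norm bounds on the Gramian errors are available. They use no distributional assumption beyond those bounds and the invertibility of $G_{\obs}$ from (R1). So it suffices to prove a median-of-means replacement for \Cref{lem:gramian-concentration}, with the same $\sqrt{(N+\log(1/\delta))/M}$ form and prefactor $C\sqrt{\kappa_4}$ in place of $c_3L^2$. We then rerun the remaining steps verbatim.

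\emph{Step 1$'$ (robust Gramian concentration).} Split the $M$ samples into $K=\lceil 8\log(1/\delta)\rceil$ disjoint blocks of size $m=\lfloor M/K\rfloor$. On block $j$ form the block Gramians $\hat G^{(j)}$, $\hat C^{(j)}$, $\hat\Sigma^{(j)}$, and the stacked $\hat G_Z^{(j)}$. Combine them with the geometric median in Frobenius norm, following \citet{minsker2015geometric}. Within a block we need only a second-moment (Chebyshev) bound. For the state Gramian,
\[
\E\|\hat G^{(j)}-G_{\obs}\|_F^2\le \tfrac1m\,\E\|\obs\|_2^4\le \kappa_4N^2/m,
\]
so $\|\hat G^{(j)}-G_{\obs}\|_F\le 2\sqrt{\kappa_4}\,N/\sqrt m$ with probability at least $3/4$. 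For the cross-Gramian we use Cauchy--Schwarz, $\E\|\obs(F)\obs^*\|_F^2\le(\E\|\obs(F)\|^4\,\E\|\obs\|^4)^{1/2}$, together with the fact that under invariance $\obs(F)=A_\ctrl\obs$ has comparable fourth moment. The same argument covers $\hat\Sigma^{(j)}$ using the control moment in (R3$'$). Minsker's geometric-median lemma then upgrades the constant-probability per-block bound to probability $1-\delta$ for the median, at the cost of a constant factor. This gives an error of order $\sqrt{\kappa_4}\,N\sqrt{\log(1/\delta)/M}$.

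\emph{Main obstacle.} The Frobenius route just described yields $N\sqrt{\log(1/\delta)/M}$. The theorem asks for $\sqrt{(N+\log(1/\delta))/M}$ in operator norm. To close this I would first try the operator-norm median-of-means estimators of \citet{lugosi2019mean}, or Minsker's spectral-norm version. In those, the per-block variance proxy is $\|\E[(\obs\obs^*)^2]\|_2\le\E\|\obs\|^2\,\|\obs\obs^*\|$-type quantities, bounded via (R3$'$) by $\kappa_4N$ rather than $\kappa_4N^2$. Combined with a matrix-Chebyshev argument per block, this gives the effective-rank scaling $\sqrt{\kappa_4 N/M}$ up to the $\log(1/\delta)$ coming from $K$. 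If only the Frobenius bound goes through, I would state the result with the prefactor $C\sqrt{\kappa_4}$ absorbing a $\sqrt N$. I would also note that the $M^{-1/2}$ rate, which is the substantive claim, is unaffected.

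\emph{Steps 2--3.} First, the threshold adjustment. On the event of Step~1$'$, the threshold $M\ge M_0$, with $L^2\to C\sqrt{\kappa_4}$, ensures $\|\hat G^{\mathrm{mom}}-G_{\obs}\|_2\le\eta_{\obs}/2$. Hence $\|(\hat G^{\mathrm{mom}})^{-1}\|_2\le 2/\eta_{\obs}$; the geometric median of Hermitian matrices is Hermitian, and Weyl applies. Second, the operator-norm step. The resolvent identity in \Cref{lem:operator-norm} gives $\|\hat A^{\mathrm{mom}}_M-A\|_2\le (2/\eta_{\obs})(\|\Delta_C\|_2+\|A\|_2\|\Delta_G\|_2)$. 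The cross-term remainder $R_M$ is controlled the same way using the median-of-means stacked Gramian $\hat G_Z$. Third, the spectral conclusions. \Cref{lem:eigenvalue-matching} and \Cref{lem:eigenvector-perturbation} are then applied unchanged, which gives (i)--(ii) with $L^2$ replaced by $C\sqrt{\kappa_4}$. A union bound over the three or four robust Gramian events only adjusts $\delta$ by a constant factor.
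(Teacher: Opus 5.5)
Your route is the paper's route: split into $K=\lceil 8\log(1/\delta)\rceil$ blocks and bound each block Gramian in Frobenius norm by Chebyshev, using the fourth moment in (R3$'$). Then take Minsker's geometric median in the Hilbert space of matrices, pass to the operator norm, and rerun Steps~2--3 as deterministic perturbation arguments. The obstacle you isolate is real, and the paper's proof passes over it. It obtains $\|\hat G_Z^{\mathrm{mom}}-G_Z\|_2\le C_\star\sqrt{\kappa_4}\,(N+p)\sqrt{K/M}$ and calls this ``an event of the same shape as $E_1$''; that is true only of the dependence on $M$. Against $\sqrt{(N+p+\log(1/\delta))/M}$ it loses a factor of order $\sqrt{(N+p)\log(1/\delta)}$ when $N+p\gtrsim\log(1/\delta)$. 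The $\sqrt K$ part is conceded in \Cref{rem:mom-price}; the dimension part is not. The Weyl step $\|\Delta_G\|_2\le\eta_{\obs}/2$ then needs $M\gtrsim\kappa_4(N+p)^2\log(1/\delta)/\eta_{\obs}^2$. So your Steps~2--3 cannot reuse $M_0$ with only the substitution $L^2\to C\sqrt{\kappa_4}$.

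Your proposed repair does not close the gap, because the obstruction is in the hypothesis, not the estimator. The spectral-norm variance proxy is $\sup_{\|w\|_2=1}\E\bigl[\|\obs\|_2^2\,|\langle\obs,w\rangle|^2\bigr]$, and (R3$'$) bounds it only by $\E\|\obs\|_2^4\le\kappa_4N^2$. Getting $\kappa_4N$ needs a directional fourth-moment (norm-equivalence) condition that (R3$'$) lacks. In fact no estimator of $G_{\obs}$ beats order $N/\sqrt M$ in operator norm under (R3$'$) alone. Mix $\mathcal N(0,I_N)$ with a point mass at $Ne_1$ of weight $q=N^{-2}$; this satisfies (R3$'$) with $\kappa_4\le 5$ and $G_{\obs}\succeq\tfrac12 I$. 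Replacing $q$ by $q(1+N/\sqrt M)$, for $M\ge N^2$, moves $G_{\obs}$ by about $N/\sqrt M$ in operator norm at total KL divergence at most $2$. Even with a directional assumption, plain spectral-norm median-of-means would still carry a $\log N$ and a multiplicative $\sqrt{\log(1/\delta)}$.

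What your fallback and the paper's argument actually prove is (i)--(ii) at rate $M^{-1/2}$, with $L^2\sqrt{N+\log(1/\delta)}$ replaced by $C\sqrt{\kappa_4}\,(N+p)\sqrt{\log(1/\delta)}$ throughout (squared, inside $M_0$). The verbatim form needs a stronger moment hypothesis. One point in your favour: you control the cross-Gramian, which Step~2 uses through $\|\Delta_C\|_2$ and which the paper's median-of-means argument (bounding only $\hat G_Z$) omits. Justify its fourth moment by invariance of $\mu$, which gives $\E\|\obs(F(\resid,\ctrl))\|_2^4=\E_\mu\|\obs\|_2^4$ directly. Going through $A_\ctrl$ does not work, since its norm is not uniformly bounded.
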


\begin{proof}
The proof of \Cref{thm:identifiability} has exactly one stochastic step, \Cref{lem:gramian-concentration}, which under (R3) produces an event $E_1$ on which $\|\hat G_Z - G_Z\|_2 \le C L^2 \sqrt{(N + \log(1/\delta))/M}$. Every later step - least-squares stability (\Cref{lem:operator-norm}), Bauer--Fike (\Cref{lem:eigenvalue-matching}), Davis--Kahan (\Cref{lem:eigenvector-perturbation}), and the gap-free Elsner--Bhatia variant (\Cref{lem:elsner}) - is a deterministic perturbation argument consuming only that operator-norm bound and the invertibility of $\hat G_Z$ it implies. The claim therefore reduces to producing the same event under (R3$'$) with a different constant.

Partition the $M$ samples into $K = \lceil 8\log(1/\delta)\rceil$ blocks of size $m = \lfloor M/K \rfloor$ and let $\hat G^{(k)}$ be the block Gramians. Under (R3$'$) the summands $z z^{*}$ have finite second moment, with $\E\|z z^{*} - G_Z\|_{\mathrm{F}}^2 \le \kappa_4 (N+p)^2$, so independence within a block gives $\E\|\hat G^{(k)} - G_Z\|_{\mathrm{F}}^2 \le \kappa_4 (N+p)^2/m$. Minsker's geometric-median-of-means bound \citep[Thm.~3.1]{minsker2015geometric}, applied in the Hilbert space of Hermitian matrices under the Frobenius norm, then gives
\begin{equation}
    \Bigl\|\,\mathrm{med}_{\mathrm{geo}}\bigl(\hat G^{(1)},\dots,
    \hat G^{(K)}\bigr) - G_Z \Bigr\|_{\mathrm{F}}
    \;\le\; C_\star \sqrt{\frac{\kappa_4 (N+p)^2 K}{M}}
\end{equation}
with probability at least $1 - \delta$, where $C_\star$ is that theorem's absolute constant (it may be taken as $11$ for the geometric median; we do not optimise it). Substituting $K = \lceil 8\log(1/\delta)\rceil$ and passing to the operator norm gives an event of the same shape as $E_1$, with the sub-Gaussian proxy $L^2$ replaced throughout by $C\sqrt{\kappa_4}$ and the rate in $M$ unchanged at $M^{-1/2}$.

Two features of the geometric median matter for the substitution. It is a positively weighted combination of its inputs, so the estimate inherits positive semi-definiteness from the block Gramians and $\hat G_Z^{\mathrm{mom}}$ remains a legitimate Gramian; and the bound requires only a second moment of $z z^{*}$, which (R3$'$) supplies through the fourth moment of $z$. Rerunning the remaining steps verbatim on the new event yields conclusions (i)--(ii) for $\hat A^{\mathrm{mom}}_M$.
\end{proof}

\begin{remark}[The price of robustness]
\label{rem:mom-price}
Median-of-means buys a tail condition, not a better rate: its constant carries $\sqrt{K} = \Theta(\sqrt{\log(1/\delta)})$ where the sub-Gaussian bound carries $\log(1/\delta)$ additively inside the square root. On light-tailed data the robust estimator is therefore no better, and can be slightly worse, than the plain one; the two separate only once the tails are heavy enough to degrade the plain Gramian's concentration. \Cref{sec:exp-rate} measures that crossover on real activations, and finds that these dictionaries sit on the light-tailed side of it, with one exception - because it is the \emph{lifted} state, not the residual stream, that (R3) constrains.
\end{remark}
Le Cam
\section{The Identifiability--Legibility Dissociation}
\label{sec:dissociation}

The previous sections identify an object. This section relates that object to the objects a practitioner actually inspects, and the relation is a theorem rather than an observation: whenever the realisation is non-normal, the principal directions of the activations and the Koopman modes of the dynamics \emph{cannot} be the same basis.

\subsection{The dissociation theorem}
\label{sec:dissociation-theorem}

\begin{theorem}[Modal--principal dissociation]
\label{thm:dissociation}
Let $\rho(A) < 1$ with $A$ diagonalisable and of simple spectrum (\Cref{ass:separation}), and let $\Sigma \succ 0$ be the stationary covariance of the lifted recurrence $\obs_{\ell+1} = A\obs_\ell + B\ctrl_\ell$ under white controls, i.e.\ the unique solution of the discrete Lyapunov equation
\begin{equation}
\label{eq:dissoc-lyapunov}
    \Sigma \;=\; A\Sigma A^{*} + B G_{\ctrl} B^{*}.
\end{equation}
If some orthonormal eigenbasis of $\Sigma$ consists of eigenvectors of $A$, then $A$ is normal; equivalently, $\kappa_2(V) = 1$.
\end{theorem}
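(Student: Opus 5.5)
The plan is to show that the hypothesis already forces $A$ to be unitarily diagonalisable. The Lyapunov equation \eqref{eq:dissoc-lyapunov} is needed only to guarantee that $\Sigma$ is a well-defined Hermitian positive-definite matrix, and hence has an orthonormal eigenbasis at all. Under $\rho(A)<1$ the solution is unique and equals the convergent series $\Sigma=\sum_{j\ge 0}A^{j}BG_{\ctrl}B^{*}(A^{*})^{j}$, which is Hermitian; together with the standing assumption $\Sigma\succ 0$, the spectral theorem supplies a unitary $U=[u_1,\dots,u_N]$ diagonalising $\Sigma$.

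\textbf{Step 1.} Suppose some such orthonormal eigenbasis $\{u_k\}$ of $\Sigma$ consists of eigenvectors of $A$, say $Au_k=\mu_k u_k$. Then $AU=UD$ with $D=\mathrm{diag}(\mu_k)$, so $A=UDU^{*}$. This gives
\[
AA^{*}=UD\bar D U^{*}=U\bar D D U^{*}=A^{*}A,
\]
so $A$ is normal. Simple spectrum is not needed for this direction.

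\textbf{Step 2.} I would then tie this to the normalisation of \Cref{thm:identifiability}, where eigenvectors have unit $\ell_2$-norm. Because the spectrum is simple (\Cref{ass:separation}), each eigenvector of $A$ is determined up to a nonzero scalar. After unit normalisation, the eigenvector matrix is therefore $V=U\Theta$ with $\Theta$ a diagonal matrix of unimodular phases. Since $V$ is unitary, $\kappa_2(V)=\|V\|_2\|V^{-1}\|_2=1$.

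\textbf{Step 3.} For the stated equivalence I would also check the converse. If $\kappa_2(V)=1$ with unit columns, then all singular values of $V$ coincide and equal $1$, so $V$ is unitary and $A=VDV^{-1}=VDV^{*}$ is normal. Conversely, a normal $A$ with simple spectrum has a unitary eigenvector matrix by the spectral theorem, and by the uniqueness of eigenvectors just noted every unit-normalised eigenvector matrix is of that form.

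I expect no step to be a genuine obstacle. The only point needing care is exactly this well-definedness issue: $\kappa_2(V)$ depends on the scaling of the columns of $V$. Simple spectrum together with unit normalisation is what pins $V$ down up to phases, so that ``$\kappa_2(V)=1$'' is an intrinsic statement rather than a statement about one convenient choice of $V$. The substantive, quantitative content of the dissociation — how far the principal directions and modes must separate when $\kappa_2(V)>1$ — I would defer to \Cref{prop:dissociation-quant}.
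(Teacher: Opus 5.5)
Your proposal is correct and follows essentially the paper's argument: diagonalise $\Sigma$ by a unitary $U$, note that $AU = UD$ forces $A = UDU^{*}$ to be normal, and use simple spectrum to identify the unit-normalised eigenvector matrix with $U$ up to phases, so that $\kappa_2(V)=1$. Your observation that simple spectrum is needed only to make $\kappa_2(V)$ well defined, and not for normality itself, locates that hypothesis more precisely than the paper's one-line proof does.
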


\begin{proof}
$\Sigma$ is Hermitian positive definite, so it admits an orthonormal eigenbasis $\{u_k\}$. By hypothesis each $u_k$ is an eigenvector of $A$; since the spectrum of $A$ is simple, its eigenspaces are one-dimensional, so $U = [u_1 \cdots u_N]$ is an eigenvector matrix of $A$. $U$ is unitary, hence $A = U \Lambda U^{*}$ is normal and $\kappa_2(U) = 1$. Contrapositively, $\kappa_2(V) > 1$ forces at least one principal direction of $\Sigma$ to align with no eigenvector of $A$, so the identifiable modal basis and the variance-ordered principal basis cannot coincide.
\end{proof}

The theorem concerns the fitted Koopman realisation of the transformer, not an arbitrary matrix. Here $A$ is the realisation estimated from the residual stream by EDMDc and $\Sigma$ is the covariance of its lifted residual states, so the condition $\kappa_2(V)=1$ applies to this specific realisation. The measured condition numbers are $\kappa_2(\hat V) \in \{\ResGptKappaV, \ResGemmaKappaV, \ResQwenKappaV\}$ on GPT-2 small, Gemma-2-2B and Qwen3-8B-Base respectively (\Cref{sec:exp-circuits}) --- all far above $1$ --- so the theorem says that on every model in our suite the identifiable Koopman modes must differ from the variance-ordered principal directions.

\subsection{The misalignment saturates}
\label{sec:dissociation-quant}

\Cref{thm:dissociation} says the two bases differ; it does not say by how much. The following explicit family answers that: the misalignment does not merely fail to vanish, it grows to orthogonality, at rate $\kappa_2(V)^{-1}$.

\begin{proposition}[The misalignment saturates]
\label{prop:dissociation-quant}
For
\begin{equation}
    A_c = \begin{pmatrix} \lambda_1 & c \\ 0 & \lambda_2 \end{pmatrix},
    \qquad 0 < \lambda_2 < \lambda_1 < 1,
    \qquad B G_{\ctrl} B^{*} = I,
\end{equation}
as $c \to \infty$ we have $\kappa_2(V_c) = \Theta(c)$, both Koopman modes converge to $e_1$, and the minor principal direction of $\Sigma_c$ converges to $e_2$. The angle $\theta$ between that principal direction and the nearest Koopman mode therefore tends to $\pi/2$, with $|\cos\theta| = O(\kappa_2(V_c)^{-1})$.
\end{proposition}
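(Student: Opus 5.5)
The plan is to compute everything in closed form, since the family is $2\times 2$ and upper triangular. The argument has three parts: the eigenvectors of $A_c$ and $\kappa_2(V_c)$; the entries of $\Sigma_c$ from the Lyapunov equation \eqref{eq:dissoc-lyapunov}; and the rotation angle of the principal axes of $\Sigma_c$ from the standard $2\times 2$ formula.

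\emph{Koopman modes and conditioning.} Because $A_c$ is upper triangular with $\lambda_1 \neq \lambda_2$, the unit eigenvectors are
\begin{equation*}
v_1 = e_1, \qquad v_2 = \frac{(c,\ \lambda_2 - \lambda_1)^{\top}}{\sqrt{c^2 + (\lambda_1 - \lambda_2)^2}}.
\end{equation*}
Hence $v_2 \to e_1$ and $\|v_2 - e_1\|_2 = O(1/c)$. For $V_c = [v_1, v_2]$ with unit columns, $|\det V_c| = \sin\angle(v_1,v_2) = (\lambda_1-\lambda_2)/\sqrt{c^2+(\lambda_1-\lambda_2)^2}$. For a $2\times 2$ matrix, $\kappa_2 = \sigma_{\max}/\sigma_{\min}$ and $\sigma_{\max}\sigma_{\min} = |\det V_c|$. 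With $\sigma_{\max} \in [1,\sqrt2]$ this gives $\kappa_2(V_c) = \Theta(c/(\lambda_1-\lambda_2)) = \Theta(c)$.

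\emph{Stationary covariance.} Solve $\Sigma = A_c \Sigma A_c^{*} + I$ entrywise, using that $A_c$ is triangular; the equations decouple from the bottom right.
\begin{itemize}
\item The $(2,2)$ entry gives $\Sigma_{22} = 1/(1-\lambda_2^2)$, independent of $c$.
\item The $(1,2)$ entry gives $\Sigma_{12} = \lambda_2\,(\lambda_1 \Sigma_{12} + c\,\Sigma_{22})$, so $\Sigma_{12} = c\lambda_2\Sigma_{22}/(1-\lambda_1\lambda_2)$, which is $\Theta(c)$.
\item The $(1,1)$ entry gives $(1-\lambda_1^2)\Sigma_{11} = 1 + 2\lambda_1 c\,\Sigma_{12} + c^2\Sigma_{22}$, so $\Sigma_{11} = \Theta(c^2)$ with strictly positive leading coefficient.
\end{itemize}
Since $\rho(A_c) < 1$, this solution is the unique one, and it is real symmetric and positive definite.

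\emph{Principal directions.} For a real symmetric $2\times 2$ matrix, the principal axes are the standard axes rotated by an angle $\varphi$ with
\begin{equation*}
\tan 2\varphi = \frac{2\Sigma_{12}}{\Sigma_{11} - \Sigma_{22}} = \frac{\Theta(c)}{\Theta(c^2)} = O(1/c).
\end{equation*}
So the major principal direction is $(\cos\varphi, \sin\varphi)$, within $O(1/c)$ of $e_1$, and the minor one is $(-\sin\varphi, \cos\varphi)$, which converges to $e_2$ with $|\langle u_{\min}, e_1\rangle| = |\sin\varphi| = O(1/c)$.

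\emph{Conclusion.} Both modes lie within $O(1/c)$ of $e_1$, so for either mode $v_k$,
\begin{equation*}
|\langle u_{\min}, v_k\rangle| \le |\langle u_{\min}, e_1\rangle| + \|v_k - e_1\|_2 = O(1/c).
\end{equation*}
Combined with $\kappa_2(V_c) = \Theta(c)$, this gives $|\cos\theta| = O(\kappa_2(V_c)^{-1})$ and $\theta \to \pi/2$.

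The step I expect to need most care is the orders of growth of $\Sigma_{11}$ and $\Sigma_{12}$. The angle bound relies on $\Sigma_{11} - \Sigma_{22}$ growing like $c^2$ while $\Sigma_{12}$ grows only like $c$. This holds because $\lambda_1\Sigma_{12} \ge 0$, so there is no cancellation in the $(1,1)$ equation.
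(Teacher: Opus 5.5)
Your proposal is correct and follows the paper's proof essentially step for step. Both arguments compute the eigenvectors of the triangular $A_c$ in closed form, solve the Lyapunov equation entrywise to get $\Sigma_{22}=\Theta(1)$, $\Sigma_{12}=\Theta(c)$ and $\Sigma_{11}=\Theta(c^2)$, and conclude that the principal axes tilt from the coordinate axes by only $O(c^{-1})$; your determinant computation of $\kappa_2(V_c)$, the $\tan 2\varphi$ formula and the triangle-inequality finish just make explicit the steps the paper asserts directly.
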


\begin{proof}
The eigenvectors of $A_c$ are $v_1 = e_1$ and $v_2 \propto (c, \lambda_2 - \lambda_1)^{\top}$, which normalises to $e_1 + O(c^{-1}) e_2$; the two collapse onto one direction and $\kappa_2(V_c) = \Theta(c/|\lambda_1 - \lambda_2|)$.

For the stationary covariance, solve $\Sigma = A_c \Sigma A_c^{*} + I$ entrywise. The $(2,2)$ entry gives $\Sigma_{22} = \lambda_2^2 \Sigma_{22} + 1$, so $\Sigma_{22} = (1-\lambda_2^2)^{-1} = \Theta(1)$. The $(1,2)$ entry gives $\Sigma_{12} = \lambda_1\lambda_2 \Sigma_{12} + \lambda_2 c\,\Sigma_{22}$, so $\Sigma_{12} = \lambda_2 c\,\Sigma_{22}/(1 - \lambda_1\lambda_2) = \Theta(c)$. The $(1,1)$ entry gives $\Sigma_{11} = (2\lambda_1 c\,\Sigma_{12} + c^2\Sigma_{22} + 1)/(1-\lambda_1^2) = \Theta(c^2)$.

A symmetric $2\times2$ matrix with $\Sigma_{11} = \Theta(c^2)$, $\Sigma_{12} = \Theta(c)$, $\Sigma_{22} = \Theta(1)$ has top eigenvector $e_1 + O(c^{-1}) e_2$ and minor eigenvector $e_2 + O(c^{-1}) e_1$. The inner product of the minor principal direction with either Koopman mode is therefore $O(c^{-1}) = O(\kappa_2(V_c)^{-1})$.
\end{proof}

\begin{remark}[Numerical check]
\label{rem:dissociation-numeric}
At $\lambda_1 = 0.9$, $\lambda_2 = 0.5$ and $c$ chosen so that $\kappa_2(V_c) \approx 500$ --- the order of the conditioning measured on Qwen3-8B-Base ($\ResQwenKappaV$) --- the angle between the minor principal direction and the nearest Koopman mode is $89.7^\circ$. The saturation is not asymptotic decoration; it is reached at conditionings these fits actually exhibit.
\end{remark}

\subsection{Scope of the claim}
\label{sec:dissociation-scope}

Three delimitations are worth stating explicitly, because the theorem is easy to over-read.

\emph{It is about the lifted system.} \Cref{thm:dissociation} applies to the lifted linear recurrence under white controls, whereas \Cref{sec:exp-circuits} analyses PCA on raw activations under real attention writes. The theorem supplies the mechanism, not a literal model of the experiment.

\emph{It predicts difference, not superiority.} The theorem says the two bases cannot coincide. It says nothing about which one better predicts the causal effect of an intervention. \Cref{sec:exp-circuits} answers that question empirically, and the answer is not favourable to Koopman modes on the question principal components are optimal for: PCA removes $\ResIoiAblPcaEight\%$ of the IOI logit difference at $j=8$ ablated directions against the modes' $\ResIoiAblKoopEight\%$. \Cref{sec:exp-transport} then shows that advantage is specific and local, decaying by $\ResTransDecay\times$ as the question moves away in depth.

\emph{It is one-directional.} We prove that non-normality forces divergence. We do not prove that the divergence is large for every non-normal realisation --- \Cref{prop:dissociation-quant} exhibits a family where it saturates, which is a sufficiency statement, not a claim about all $A$.

Within those limits the reading is clear, and it is the paper's central claim: \emph{the identifiable object and the legible object are not the same object}. The dissociation between what can be certified and what can be read is, in this precise sense, a prediction of the theory rather than a disappointment of the experiments.

\section{Implications for Mechanistic Interpretability}
\label{sec:implications}

The identifiability theorem has consequences beyond its own statement. This section develops three. First, the spectral projectors of $A$ generate a commutative algebra that is \emph{complete} for the first-order intervention calculus of MI, so that activation patching, path patching, attribution patching and mean ablation all have unique closed-form representatives in it (\Cref{thm:completeness}). Second, the run-to-run variability of sparse autoencoders is a structural consequence of an objective that omits (DR2), with an explicit penalty as remedy (\Cref{cor:sae-non-identifiability}). Third, cross-model universality becomes a testable spectral criterion (\Cref{cor:universality}).

\subsection{Algebraic completeness of the intervention calculus}
\label{sec:implications-completeness}

\begin{definition}[Interventional completeness]
\label{def:completeness}
The intervention calculus $\mathcal{I}$ of a mechanistic primitive $(\mathcal{D}, E, \mathcal{I})$ is \emph{complete on} $\Hilb_E$ if:
\begin{itemize}[leftmargin=2.4em,itemsep=1pt]
\item[(IC1)] $\mathcal{I}|_{\Hilb_E}$ is closed under composition and $\C$-linear combinations;
\item[(IC2)] every $I \in \mathcal{I}|_{\Hilb_E}$ commutes with $\opK|_{\Hilb_E}$ (dynamical consistency);
\item[(IC3)] for every $\delta \in \Hilb_E$ there exist $I \in \mathcal{I}$ and $\psi \in \Hilb_E$ with $I\,\psi = \delta$ (spanning).
\end{itemize}
\end{definition}

(IC1) makes $\mathcal{I}|_{\Hilb_E}$ a unital associative $\C$-algebra. (IC2) is the dynamical-consistency requirement: intervening and then evolving equals evolving and then intervening. (IC3) is a spanning condition ruling out proper subalgebras. The commutant $\mathcal{Z}(A_E) \coloneqq \{M : M A_E = A_E M\}$ of a diagonalisable $A_E$ with $r^{*}$ distinct eigenvalues equals the polynomial algebra $\C[A_E]$ and has dimension $r^{*}$; (IC2) forces $\mathcal{I}|_{\Hilb_E} \subseteq \mathcal{Z}(A_E)$, and (IC3) forces equality.

\begin{theorem}[Algebraic completeness of the KSA intervention calculus]
\label{thm:completeness}
Assume the hypotheses of \Cref{thm:existence} and \Cref{ass:separation}. Let $\{\Proj_k = v_k\phi_k^{\top}\}_{k=1}^{N}$ be the spectral projectors of $A$, let $r^{*}$ denote its number of distinct eigenvalues $\mu_1,\dots,\mu_{r^{*}}$, and let $\Alg(A) \coloneqq \C[A] \subseteq \C^{N\times N}$. Then
\begin{enumerate}[leftmargin=2.4em,itemsep=2pt]
\item[(a)] $\Alg(A)$ is a commutative unital $\C$-algebra of dimension $r^{*}$, spanned by the spectral projectors after identification of projectors sharing an eigenvalue.
\item[(b)] The evaluation map $p(A) \mapsto (p(\mu_1),\dots,p(\mu_{r^{*}}))$ is an isomorphism $\Alg(A) \cong \C^{r^{*}}$ of commutative unital $\C$-algebras. The pullback of pointwise conjugation on $\C^{r^{*}}$ endows $\Alg(A)$ with an involution under which it becomes a commutative $*$-algebra isomorphic to $\C^{r^{*}}$.
\item[(c)] For any additive perturbation $\delta \in \C^{N}$ at layer $\ell_0$, its downstream effect at layer $\ell > \ell_0$ is
      \begin{equation}
      \label{eq:completeness-decomposition}
          A^{\ell-\ell_0}\,\delta
          \;=\;
          \sum_{k=1}^{N} \lambda_k^{\ell-\ell_0}\,\Proj_k\,\delta.
      \end{equation}
\item[(d)] Each of activation, path, attribution, and mean-ablation patching has a unique closed-form representative in $\Alg(A)$, obtained by the spectral decomposition $\delta = \sum_k \Proj_k\delta$.
\end{enumerate}
\end{theorem}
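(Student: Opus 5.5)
The plan is to work entirely in the eigenbasis of $A$, which exists by diagonalisability (R4), and to reduce every claim to polynomial interpolation on the distinct eigenvalues $\mu_1,\dots,\mu_{r^{*}}$. Under \Cref{ass:separation} the spectrum is simple, so $r^{*}=N$ and the rank-one projectors $\Proj_k=v_k\phi_k^{\top}$ are exactly the Riesz projectors of \Cref{cor:invariants}(ii). In the general diagonalisable case I would group the projectors sharing an eigenvalue, $E_j\coloneqq\sum_{k:\lambda_k=\mu_j}\Proj_k$. This grouping is the ``identification'' mentioned in (a).

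\textbf{Part (a).} I would first use \Cref{def:spectral-projector} to get $E_iE_j=\delta_{ij}E_j$ and $\sum_j E_j=I$. Then $A=\sum_j\mu_jE_j$, and by induction $A^n=\sum_j\mu_j^nE_j$, hence $p(A)=\sum_j p(\mu_j)E_j$ for every polynomial $p$. So $\C[A]\subseteq\spn\{E_j\}$. For the reverse inclusion I take the Lagrange polynomials $\ell_j$ with $\ell_j(\mu_i)=\delta_{ij}$. They give $E_j=\ell_j(A)\in\C[A]$. The $E_j$ are nonzero and mutually annihilating, so they are linearly independent, and therefore $\dim\Alg(A)=r^{*}$. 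Commutativity and unitality are immediate for a polynomial algebra.

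\textbf{Part (b).} The evaluation map $p(A)\mapsto(p(\mu_j))_j$ needs a well-definedness check first. If $p(A)=q(A)$, then applying both sides to $v_k$ gives $p(\lambda_k)=q(\lambda_k)$, so the image does not depend on the chosen polynomial. Sums and products are preserved coordinatewise because $p(A)q(A)=(pq)(A)$. Lagrange interpolation shows the map is surjective, and equality of dimensions from (a) then gives bijectivity. I would define the involution by $\bigl(\sum_j c_jE_j\bigr)^{\natural}\coloneqq\sum_j\bar c_jE_j$, which is transported from conjugation on $\C^{r^{*}}$. The $*$-algebra axioms hold automatically because they hold on $\C^{r^{*}}$. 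I would also note explicitly that this $\natural$ coincides with the matrix adjoint only when $A$ is normal, and this is why the statement says ``pullback''.

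\textbf{Part (c).} This is immediate: $\delta=\sum_k\Proj_k\delta$ and $A^{n}\Proj_k=\lambda_k^{n}\Proj_k$. The only step needing care is justifying that the downstream effect of an additive perturbation is $A^{\ell-\ell_0}\delta$. By \Cref{def:realisation} the lifted dynamics are affine in $\obs$. If two trajectories share controls and differ by $\delta$ at layer $\ell_0$, iterating $\obs_{\ell+1}=A\obs_\ell+B\ctrl_\ell$ cancels the $B\ctrl$ terms in the difference. This uses exact invariance (\Cref{ass:invariance}), with $A$ read as the per-layer compression.

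\textbf{Part (d).} I expect this to be the main obstacle, because ``patching'' is not formally defined in the paper. My plan is to model each patching method as a map $\delta\mapsto$ (effect at the read-out), with a specific $\delta$ for each method:
\begin{itemize}
\item activation patching: $\delta=\obs(\resid^{\mathrm{src}})-\obs(\resid^{\mathrm{tgt}})$;
\item mean ablation: $\delta=\E_\mu[\obs]-\obs(\resid)$;
\item attribution patching: its first-order linearisation, which under exact invariance is the same vector;
\item path patching: $\delta$ restricted to a chosen subset of modes, i.e. $\sum_{k\in S}\Proj_k\delta$.
\end{itemize}
By (c), each effect equals $\sum_j c_j E_j\delta$ with coefficients $c_j=\mu_j^{\ell-\ell_0}$ or $\mathbf 1_{S}(j)\mu_j^{\ell-\ell_0}$. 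By (b), such an operator equals $p(A)$ for the interpolating polynomial $p$, and the map to $\C^{r^{*}}$ is an isomorphism, so the representative in $\Alg(A)$ is unique. If the intended notion instead needs arbitrary (nonpolynomial) masks, I would fall back to stating uniqueness only for mode-selective interventions, which lie in $\Alg(A)$ by construction.
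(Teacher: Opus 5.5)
Your proposal is correct and follows the paper's proof essentially step for step. Lagrange interpolation places the (grouped) spectral projectors in $\C[A]$, and the expansion $p(A)=\sum_j p(\mu_j)E_j$ gives both the reverse inclusion in (a) and part (c); the evaluation map gives (b); and (d) is a per-method choice of $\delta$, with uniqueness coming from linear independence of the projectors. Two small remarks: your well-definedness check in (b) is cleaner than the paper's, since you apply $p(A)=q(A)$ to eigenvectors and then get injectivity from surjectivity plus the dimension count, whereas the paper states the converse implication under the heading of well-definedness; and diagonalisability should be credited to the simple spectrum of \Cref{ass:separation} rather than to (R4), which is not among this theorem's hypotheses.
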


\begin{proof}
(a) For $A$ diagonalisable with distinct eigenvalues $\mu_1,\dots,\mu_{r^{*}}$, the projectors satisfy the Lagrange interpolation identity $\Proj_{k} = \prod_{j\neq k}(A - \mu_j I)/(\mu_k - \mu_j)$, so each $\Proj_k \in \C[A]$. Conversely, any polynomial in $A$ decomposes as $p(A) = \sum_k p(\mu_k)\Proj_k$. Hence $\Alg(A) = \spn\{\Proj_1,\dots,\Proj_{r^{*}}\}$, of dimension $r^{*}$; commutativity and unitality are immediate.

(b) The evaluation map is well defined (two polynomials in $A$ agreeing on $\{\mu_k\}$ define the same matrix), surjective (Lagrange interpolation), and injective (a polynomial of degree $<r^{*}$ vanishing on $r^{*}$ points is identically zero). Pointwise conjugation on $\C^{r^{*}}$ transports to an involution on $\Alg(A)$ under which the isomorphism preserves the $*$-structure.

(c) From $A = \sum_k \lambda_k\Proj_k$ and $\Proj_j\Proj_k = \delta_{jk}\Proj_k$ we get $A^{m} = \sum_k \lambda_k^{m}\Proj_k$; apply to $\delta$.

(d) Each intervention corresponds to an additive perturbation $\delta$ at some layer:
\begin{itemize}[leftmargin=1.6em,itemsep=1pt]
\item activation patching: $\delta = \obs(\resid_{\ell_0}') - \obs(\resid_{\ell_0})$;
\item path patching restricted to a subset $S$ of modes: $\delta_S = \sum_{k\in S}\Proj_k\delta$;
\item attribution patching: the linearised downstream effect is $C A^{\ell-\ell_0}\delta = \sum_k \lambda_k^{\ell-\ell_0} C\Proj_k\delta$;
\item mean ablation: $\delta = \obs(\bar\resid) - \obs(\resid_{\ell_0})$.
\end{itemize}
Uniqueness of each decomposition follows from (a): the spectral projectors are a basis of $\Alg(A)$, so any element admits a unique expansion in $\{\Proj_k\}$.
\end{proof}

\begin{figure*}[t]
\centering
\includegraphics[width=\textwidth]{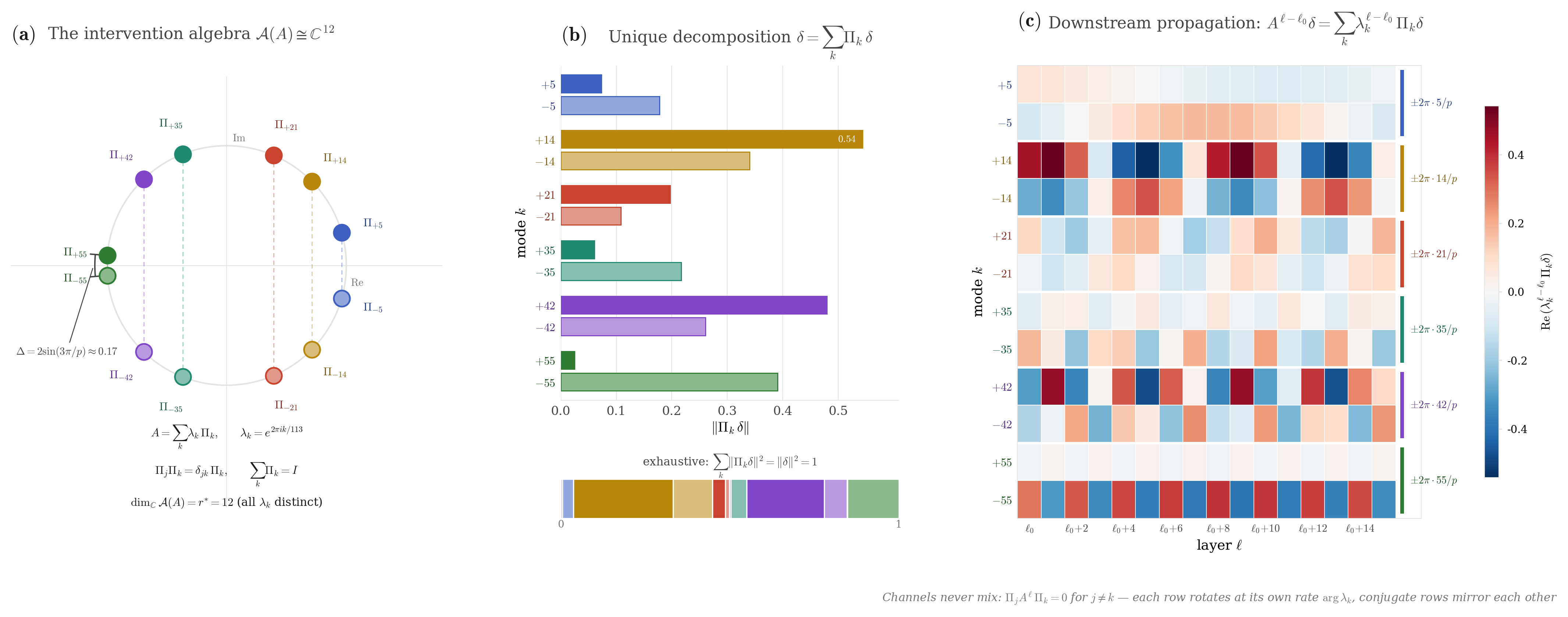}
\caption{Analytic illustration: Algebraic completeness of the KSA intervention calculus (\Cref{thm:completeness}) ($p = 113$, $|\tilde K| = 12$). \textbf{(a)}~The algebra $\Alg(A) \cong \C^{12}$: each spectral projector $\Proj_k$ occupies an independent one-dimensional subalgebra indexed by the eigenvalue $\lambda_k$ on the unit circle. \textbf{(b)}~A unit-norm intervention $\delta \in \C^{12}$ decomposed into its $12$ spectral components $\Proj_k\delta$; the decomposition is unique and exhaustive ($\sum_k \Proj_k = I$). \textbf{(c)}~Downstream propagation $A^{\ell}\delta = \sum_k \lambda_k^{\ell}\,\Proj_k\delta$ across eight layers. Each row is an independent spectral channel rotating at rate $\arg(\lambda_k)$ per layer; channels do not interact, confirming the direct-sum structure of $\Alg(A)$.}
\label{fig:algebra-completeness}
\end{figure*}

\Cref{thm:completeness} has a foundational consequence: any primitive that is rich and interventionally complete on the same subspace is spectrally the same object.

\begin{corollary}[Foundational uniqueness of KSA]
\label{cor:ksa-uniqueness}
Let $(\mathcal{D}, E, \mathcal{I})$ be any mechanistic primitive satisfying \Cref{def:richness} and \Cref{def:completeness} on the same underlying $\opK$-invariant subspace as KSA, $\Hilb_E = \Hilb_N$. Then there exists $T \in GL(N,\C)$ such that $E = T\,\obs$ and $\mathcal{I}|_{\Hilb_E} = T\,\Alg(A)\,T^{-1}$; consequently $(\mathcal{D}, E, \mathcal{I})$ is spectrally equivalent to KSA.
\end{corollary}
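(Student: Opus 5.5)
The proof splits into two parts, one for the encoding and one for the intervention algebra, and both rest on results already in hand. For the encoding: the coordinates $e_d$ of $E$ lie in $L^2(\mu)$ and, by (DR1), are linearly independent. Since $\Hilb_E = \Hilb_N$, they form a second basis of the same $N$-dimensional space, so $|\mathcal{D}| = N$. Writing each $e_d$ in the basis $\{\psi_i\}$ gives a coefficient matrix $T \in \C^{N\times N}$ with $E = T\obs$ holding $\mu$-a.e. The matrix $T$ is invertible because both families are bases; concretely, $T = \E_\mu[E\,\obs^{*}]\,G_{\obs}^{-1}$, exactly as in the closed form of \Cref{lem:pointwise}. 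Applying \Cref{lem:basis-change} (equivalently \Cref{thm:existence}(b)), the realisation of $\bar\opK|_{\Hilb_N}$ in the $E$-basis is $A_E = TAT^{-1}$, and by \Cref{thm:existence}(c) it has the same spectrum as $A$.

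For the intervention algebra, I would first use (IC1) to conclude that $\mathcal{I}|_{\Hilb_E}$ is a unital subalgebra of $\C^{N\times N}$ acting in $E$-coordinates. (IC2) then places it inside the commutant $\mathcal{Z}(A_E)$. Under \Cref{ass:separation}, $A_E$ has simple spectrum, so it is nonderogatory and $\mathcal{Z}(A_E) = \C[A_E]$. Conjugating the polynomial algebra gives $\C[A_E] = \C[TAT^{-1}] = T\,\C[A]\,T^{-1} = T\,\Alg(A)\,T^{-1}$, which yields the inclusion $\mathcal{I}|_{\Hilb_E} \subseteq T\Alg(A)T^{-1}$. For the reverse inclusion I would use (IC3), and this is the delicate step. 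Spanning as literally stated only requires that the images $I\psi$ cover $\Hilb_E$. The identity $I = \mathrm{id}$ already does this, so (IC3) does not by itself force the subalgebra to be all of $\C[A_E]$.

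I would therefore read (IC3) as the paper does just before \Cref{thm:completeness} ("(IC3) forces equality"): for each target, the representing intervention must be available componentwise, in the sense that the algebra separates the eigenlines. The argument then runs as follows. A unital subalgebra of $\C[A_E] \cong \C^{N}$, the isomorphism coming from \Cref{thm:completeness}(b), is a subalgebra of $\C^N$. Every such subalgebra is the set of functions constant on the blocks of some partition of $\{1,\dots,N\}$. Requiring every $\delta$ to be realisable as $I\psi$ with $I$ restricted to a single spectral component means each idempotent $\Proj_k$ must lie in the subalgebra. That forces the partition to be discrete, so the subalgebra is all of $\C^N$, giving $\mathcal{I}|_{\Hilb_E} = T\Alg(A)T^{-1}$.

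The hardest part is the spanning step, and the obstacle is one of interpretation rather than computation. I would explicitly state the strengthened reading of (IC3), namely that the projectors onto individual eigenlines are realisable, and note that without it only the inclusion $\mathcal{I}|_{\Hilb_E} \subseteq T\Alg(A)T^{-1}$ survives. Spectral equivalence then follows at once: the eigenvalues agree by \Cref{thm:existence}(c); the projectors correspond via $\Proj'_k = T\Proj_kT^{-1}$, as in \Cref{cor:invariants}(ii); and conjugation by $T$ is a $*$-isomorphism of the two algebras, as in \Cref{cor:invariants}(iv).
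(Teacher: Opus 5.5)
Your route is the paper's. The paper takes $T$ from uniqueness up to similarity (\Cref{thm:existence}), conjugates $\C[A]$ to $\C[A_E]$ via \Cref{thm:completeness}(a), and closes with ``by (IC1)--(IC3), $\mathcal{I}|_{\Hilb_E} = \Alg(A_E)$''. That last step rests on its earlier remark that (IC2) gives inclusion in the commutant and (IC3) forces equality. Two of your steps are more careful than the paper's. You derive $E = T\obs$ explicitly from the common span, with $T = \E_\mu[E\,\obs^{*}]\,G_{\obs}^{-1}$. You also invoke \Cref{ass:separation} to get $\mathcal{Z}(A_E) = \C[A_E]$; the paper's remark instead claims that the commutant of any diagonalisable matrix with $r^{*}$ distinct eigenvalues is $\C[A_E]$ of dimension $r^{*}$, which holds only when $r^{*} = N$.

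The step you single out is a genuine defect, and it sits in the paper's argument and in the statement itself, not in your proposal: with the definitions as written, the corollary is false. Take the primitive $(\{1,\dots,N\}, \obs, \C\cdot\mathrm{id})$. It satisfies (DR1), (DR2), (IC1), (IC2) and (IC3), the last by taking $I = \mathrm{id}$ and $\psi = \delta$. Since $E = \obs$, linear independence forces $T = I_N$. Yet $\C\cdot\mathrm{id} \neq \Alg(A)$ once $N \geq 2$, because $\Alg(A)$ has dimension $N$ under separation.

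What literal (IC3) does give, once (IC2) has placed the algebra inside $\C[A_E]$, is unitality. Each element is diagonal in the eigenbasis, so its range is a coordinate subspace of $\C^{N}$. Finitely many proper coordinate subspaces cannot cover $\C^{N}$, so some element is invertible. The identity is then a polynomial in that element with zero constant term (Cayley--Hamilton), hence lies in the algebra by (IC1). So unitality comes from (IC2) together with (IC3), not from (IC1) alone as both you and the paper assert. Consequently, (IC3) rules out only non-unital subalgebras, not proper ones.

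Equality therefore genuinely requires a hypothesis like your strengthened (IC3): every $\Proj_k$ realisable, or equivalently the algebra separating the eigenlines. Under that hypothesis your partition argument is correct and complete. Without it, exactly what you say survives: the inclusion $\mathcal{I}|_{\Hilb_E} \subseteq T\,\Alg(A)\,T^{-1}$ and the agreement of spectra.
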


\begin{proof}
By (DR2) and \Cref{thm:existence}, the primitive induces a Koopman realisation $A_E$ with $A_E = TAT^{-1}$ for some $T \in GL(N,\C)$ (uniqueness up to similarity). By \Cref{thm:completeness}(a), $\Alg(A_E) = \C[A_E] = T\,\C[A]\,T^{-1} = T\Alg(A)T^{-1}$. By (IC1)--(IC3), $\mathcal{I}|_{\Hilb_E} = \Alg(A_E)$, giving the claim.
\end{proof}

\begin{remark}[What the completeness theorem does and does not establish]
\label{rem:completeness-scope}
The calculus is proved complete for \emph{first-order} interventions: single additive perturbations propagated linearly through the lifted dynamics. Composed perturbations, adversarial inputs and iterated patching would require extending $\Alg(A)$ to a filtered algebra, which we do not do here. The theorem is also, at present, unvalidated empirically: whether its closed-form representatives predict measured patching effects on a real transformer is untested in this paper and is the natural next experiment.
\end{remark}

\subsection{SAE variability is structural, not incidental}
\label{sec:implications-sae}

\begin{corollary}[SAE non-identifiability]
\label{cor:sae-non-identifiability}
Let $D_{\mathrm{SAE}}$ be a dictionary trained by minimising the reconstruction-plus-$L^{1}$ objective \eqref{eq:sae-objective}. In general $D_{\mathrm{SAE}}$ satisfies (DR1) but not (DR2). Consequently \Cref{thm:identifiability} does not apply to $D_{\mathrm{SAE}}$, and the run-to-run variability documented by \cite{braun2024identifying,chanin2024absorption,karvonen2024saebench} is a structural consequence of the objective's failure to enforce $\opK$-invariance rather than an artefact of implementation.
\end{corollary}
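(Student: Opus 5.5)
The corollary makes three claims, and I would prove them in turn. First, (DR1) holds generically. Second, (DR2) fails generically. Third, the failure of (DR2) is what voids \Cref{thm:identifiability} and produces the observed variability. The first claim is routine. The SAE dictionary has finitely many encoder coordinates $z_j(\resid)$, each of the form $\mathrm{ReLU}(w_j^{\top}\resid + b_j)$ or a TopK/JumpReLU variant. Linear dependence in $L^{2}(\mu)$ would require a nontrivial combination $\sum_j c_j z_j = 0$ $\mu$-a.e. That is a closed algebraic condition on the encoder parameters, and it holds only on a measure-zero set whenever $\mu$ has a density, or at least a support not contained in finitely many hyperplanes. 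So the Gram matrix $G_{\obs}$ is positive definite for generic trained weights, which is (R1) of \eqref{eq:existence-R1}.

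For the failure of (DR2) I would argue by construction rather than by a genericity count, since ``in general'' must be read as ``not guaranteed by the objective.'' The key observation is that \eqref{eq:sae-objective} depends only on the single-layer marginal $\mu$ and never on $F$. Fix any $D_{\mathrm{SAE}}$ that minimises the objective for a given $\mu$. I would then exhibit two layer maps $F$ and $F'$ with the same input marginal, for which $\Hilb_N$ is invariant under at most one of them. A concrete choice is a pair differing by an MLP perturbation $m'$ supported in directions where the ReLU codes are piecewise linear with different kinks. Then $\psi_j \circ F'$ acquires new breakpoints outside the span, because compositions of ReLUs with a nonlinear $m'$ are not finite combinations of the original ReLU atoms. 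Since the minimiser is the same for both maps, the objective cannot enforce (DR2), and for a generic MLP the composed functions leave the finite span. I would check the membership failure by restricting to a line in $\mathcal{X}$ and comparing the finite sets of nondifferentiability points.

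The consequence then follows from \Cref{rem:approximate-invariance} and \Cref{rem:ident-approx}. Without (DR2) the EDMDc target is $A_{\mathrm{LS}}$ and the projection residual $\varepsilon \neq 0$. The error decomposes as $\|\hat A_M - A^{*}\| \le \|\hat A_M - A_{\mathrm{LS}}\| + \|A_{\mathrm{LS}} - A^{*}\|$. The first term is controlled by \Cref{lem:operator-norm}; the second is a bias of size \eqref{eq:existence-approx} that does not vanish as $M\to\infty$. I would then make the link to seed and width variability explicit. Two SAE runs yield spans $\Hilb_N \neq \Hilb_N'$, and \Cref{thm:existence} only gives similarity invariance within a fixed span. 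The least-squares compressions onto distinct non-invariant spans therefore have unrelated spectra, with discrepancy bounded below by the differing biases. The limiting spectrum is thus a function of the learned span, that is, of the optimisation outcome, and not an invariant of $(F,\mu)$.

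The main obstacle is the word ``structural'': it has to be stated as a precise claim. A fully general theorem saying that every SAE minimiser violates (DR2) is false in degenerate cases, for example when $F$ is linear and the codes are linear. So I would prove the weaker but honest statement. The set of layer maps for which a given SAE minimiser is $\opK$-invariant is nongeneric, and the objective is constant across $F$. The argument therefore hinges on the breakpoint-counting step, which I would try first on a one-dimensional restriction with a single smooth nonlinearity.
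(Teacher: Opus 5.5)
Your skeleton matches the paper's proof. \eqref{eq:sae-objective} depends on $\mu$ alone, so it cannot enforce a condition that depends on $F$. Hence (DR2) fails generically, \Cref{ass:invariance} is unmet, and \Cref{rem:ident-approx} leaves a bias that does not vanish as $M\to\infty$. Your construction with two maps sharing one marginal, and your (DR1) genericity count, sharpen claims the paper only asserts. Note that the count runs over parameters rather than trained weights, so dead latents must be excluded separately.

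Two small repairs are needed in the first half. First, comparing kink sets on a line is awkward: both sets are finite, and you would still have to show they differ. It is cleaner to note that every element of a ReLU, TopK or JumpReLU span is affine on each cell of a hyperplane arrangement. By contrast, $\mathrm{ReLU}(w_j^{\top}F(\resid,\ctrl)+b_j)$ coincides on an open set with the analytic, non-affine function $w_j^{\top}F(\resid,\ctrl)+b_j$. So it cannot agree $\mu$-a.e.\ with any span element once $\mu$ charges that set. You do need a density hypothesis here, because membership in $\Hilb_N$ is only a $\mu$-a.e.\ statement. Second, \Cref{lem:operator-norm} is stated under \Cref{ass:invariance}. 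Say explicitly that its proof uses invariance only to name the target: under (R2) and centring, the normal equations give $C_{\obs}G_{\obs}^{-1}=A_{\mathrm{LS}}$ regardless.

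The genuine gap is in your third step, where you try to attribute the run-to-run variability to the failure of (DR2). You observe that \Cref{thm:existence} gives similarity invariance only within a fixed span. That is equally true of invariant spans. The least-squares compression on a fixed non-invariant span is itself basis-equivariant, since $TC_{\obs}T^{*}(TG_{\obs}T^{*})^{-1}=TC_{\obs}G_{\obs}^{-1}T^{-1}$. So ``the limit depends on the learned span'' holds with or without (DR2), and does not isolate what invariance buys: two runs landing on distinct invariant spans would also give different realisations.

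What (DR2) actually supplies is the inclusion $\sigma(A)\subseteq\sigma(\bar\opK)$ of \Cref{thm:existence}(d) and \Cref{lem:spectrum-embed}. Whichever invariant span the optimiser reaches, every recovered eigenvalue is an eigenvalue of the single operator $\bar\opK$. Without (DR2) this inclusion is lost, and the least-squares eigenvalues are displaced by the bias of \Cref{rem:ident-approx}. That is the contrast your argument should invoke.

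Your further claim, that the cross-run discrepancy is bounded below by the differing biases, cannot be derived. \eqref{eq:existence-approx} is an upper bound, and an upper bound on each run's bias gives no lower bound on the distance between two runs' spectra. Nothing prevents two non-invariant spans from having coincident least-squares spectra. Drop the lower bound and conclude, as the paper's proof does, only that \Cref{thm:identifiability} no longer certifies the recovered spectrum as a property of the model. With these tools, the absence of that certificate is all the ``structural consequence'' clause can be made to mean.
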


\begin{proof}
The objective \eqref{eq:sae-objective} enforces reconstruction and sparsity but places no constraint requiring the range of $D$ to be closed under composition with the transformer's depth map $F$. Hence $\Hilb_D = \spn(D)$ is generically not $\opK$-invariant and (DR2) fails. Since (DR2) is a hypothesis of \Cref{thm:existence} via \Cref{ass:invariance}, the uniqueness-up-to-similarity conclusion does not hold: different training seeds converging to distinct sparsity--reconstruction minima correspond to different, inequivalent $D$'s, and \Cref{thm:identifiability} does not certify their spectra as identifiable. By \Cref{rem:ident-approx} the resulting bias does not vanish as $M \to \infty$. This is the structural origin of the empirically documented variability.
\end{proof}

This recontextualises an active subfield. The variability reported across seeds and widths is usually discussed as a tuning problem; \Cref{cor:sae-non-identifiability} says no amount of tuning removes it, because the objective is silent about the property identifiability requires. It also yields a remedy - augment the objective with an explicit invariance penalty,
\begin{equation} \label{eq:kinvariant-sae} \begin{aligned} \mathcal{L}(D,A_D,z) &= \bigl\|\resid-Dz\bigr\|_2^2 + \lambda\,\|z\|_1 \\ &\quad + \gamma\,\bigl\| \obs_D(F(\resid,\ctrl)) - A_D\,\obs_D(\resid) - B_D\,\ctrl \bigr\|_2^2 . \end{aligned} \end{equation}

where $\obs_D$ is the dictionary's encoder and $A_D, B_D$ are learned realisation matrices. The final term drives $\spn(D)$ toward $\opK$-invariance; as $\gamma\to\infty$ the dictionary satisfies (DR2) and inherits \Cref{thm:identifiability}. Whether sparsity and invariance are jointly satisfiable in interesting regimes is open. \Cref{sec:exp-penalty} runs the sweep and reports both the gain and its cost.

Throughout, $\obs_D$ denotes the \emph{post-activation} code $\obs_D(\resid) = \sigma(W_{\mathrm{enc}}\resid + b_{\mathrm{enc}})$ with the dictionary's own nonlinearity $\sigma$ (ReLU, JumpReLU, or TopK), not the pre-activation linear map $W_{\mathrm{enc}}\resid$. The distinction is load-bearing and we make it explicit because it decides whether the measurement in \Cref{sec:exp-sae} is informative: a linear dictionary cannot be $\opK$-invariant for nonlinear $F$, so (DR2) would fail \emph{by construction} and the experiment could not come out any other way. The post-activation code is genuinely nonlinear and is therefore not excluded from (DR2) a priori, which is what makes its failure a measurement rather than a tautology. We report the pre-activation variant, along with two further conventions, only as reference floors (\Cref{sec:exp-encoder-variants}).

\subsection{Cross-model universality becomes testable}
\label{sec:implications-universality}

\begin{corollary}[Universality as a spectral criterion]
\label{cor:universality}
Let $F_1,F_2$ be transformers with Koopman realisations $A_1,A_2$ on invariant subspaces of common dimension $N$. If $\sigma(A_1) = \sigma(A_2)$, then $\Alg(A_1) \cong \Alg(A_2)$ as commutative unital $\C$-algebras, with the isomorphism sending $\Proj_k^{(1)}$ to $\Proj_{\pi(k)}^{(2)}$ under the eigenvalue-matching permutation. The two models are then indistinguishable under first-order interventions.
\end{corollary}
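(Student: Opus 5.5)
The plan is to reduce everything to the structure theorem for $\Alg(A)$ already proved in \Cref{thm:completeness}(a)--(b). Both $A_1$ and $A_2$ act on invariant subspaces of dimension $N$. When we work under the separation hypothesis used in \Cref{thm:completeness}, each is diagonalisable with simple spectrum. I first record that $\sigma(A_1)=\sigma(A_2)$ as multisets gives a bijection $\pi$ of $\{1,\dots,N\}$ with $\lambda_k(A_1)=\lambda_{\pi(k)}(A_2)$. This bijection is unique when the spectrum is simple, and otherwise unique up to permutations inside eigenvalue classes. In particular the two matrices share the same $r^{*}$ distinct eigenvalues $\mu_1,\dots,\mu_{r^{*}}$.

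Next I would define the isomorphism via the evaluation maps. By \Cref{thm:completeness}(b) we have $\mathrm{ev}_i:\Alg(A_i)\to\C^{r^{*}}$, $p(A_i)\mapsto(p(\mu_1),\dots,p(\mu_{r^{*}}))$, and each is a unital algebra isomorphism. Ordering the $\mu_j$ identically on both sides, set $\Phi\coloneqq\mathrm{ev}_2^{-1}\circ\mathrm{ev}_1$. Concretely, $\Phi(p(A_1))=p(A_2)$. This is well defined precisely because two polynomials agreeing on the common eigenvalue set give the same matrix for both $A_1$ and $A_2$, and that is where $\sigma(A_1)=\sigma(A_2)$ is used. $\Phi$ is then an isomorphism of commutative unital $\C$-algebras. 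It also preserves the pulled-back involution, since both involutions are transported from pointwise conjugation on the same $\C^{r^{*}}$.

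To identify where projectors go, I use the Lagrange formula from the proof of \Cref{thm:completeness}(a): $\Proj_k^{(1)}=\ell_k(A_1)$ with $\ell_k(z)=\prod_{j\ne k}(z-\mu_j)/(\mu_k-\mu_j)$. Then $\Phi(\Proj_k^{(1)})=\ell_k(A_2)$, which is exactly the spectral projector of $A_2$ for the eigenvalue $\mu_k=\lambda_{\pi(k)}(A_2)$, namely $\Proj^{(2)}_{\pi(k)}$. Equivalently, $\mathrm{ev}_i$ sends the projector of the $j$-th eigenvalue to the $j$-th standard idempotent of $\C^{r^{*}}$, so $\Phi$ matches idempotents through the eigenvalue matching. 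Without separation, the same statement holds with projectors grouped by distinct eigenvalue.

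The last sentence, indistinguishability under first-order interventions, needs an interpretation. This is the step I expect to be the real obstacle, since the algebras are isomorphic abstractly but $A_1,A_2$ may live on different subspaces with different read-outs. The plan is to make it precise at the level of \Cref{thm:completeness}(c)--(d). Every first-order intervention is represented by its spectral components $\Proj_k\delta$, evolving as $\lambda_k^{\ell-\ell_0}\Proj_k\delta$. Under $\Phi$ the coefficient sequences $\lambda_k^{m}$ coincide channel by channel. Moreover, since two diagonalisable matrices with equal spectra are similar, there is $T\in GL(N,\C)$ with $A_2=TA_1T^{-1}$, and $\Phi(X)=TXT^{-1}$. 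A perturbation $\delta$ for $F_1$ therefore corresponds to $T\delta$ for $F_2$, with $A_2^{m}T\delta=TA_1^{m}\delta$ for all $m$. I would present this as "indistinguishable up to the change of coordinates $T$", and note that equality of observed outputs additionally requires compatible read-outs $C_2T=C_1$.
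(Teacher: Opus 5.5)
Your proposal is correct and follows the paper's proof: both compose the two evaluation isomorphisms $\Alg(A_i)\cong\C^{r^{*}}$ from \Cref{thm:completeness}(b) and then track where the projectors go. You make explicit two points the paper leaves implicit, namely the common ordering of the $\mu_j$ (needed for $\Proj_k^{(1)}\mapsto\Proj_{\pi(k)}^{(2)}$) and the role of diagonalisability in well-definedness. Your handling of the final sentence goes beyond the paper, which does not argue it at all, and your qualified reading is the accurate one: the correspondence holds only up to the similarity $T$ with $A_2=TA_1T^{-1}$, and equal measured effects additionally require compatible read-outs $C_2T=C_1$. This is because the attribution-patching effects $C A^{m}\Proj_k\delta$ of \Cref{thm:completeness}(d) depend on the eigenvectors and on $C$, not on the spectrum alone.
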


\begin{proof}
By \Cref{thm:completeness}(b), $\Alg(A_j) \cong \C^{r_j^{*}}$. Isospectrality gives $r_1^{*} = r_2^{*}$, hence $\C^{r_1^{*}} = \C^{r_2^{*}}$. Composing the two isomorphisms yields the desired algebra isomorphism, which sends $\Proj_k^{(1)}$ - the projector corresponding to $\mu_k \in \sigma(A_1)$ - to the projector $\Proj_{\pi(k)}^{(2)}$ corresponding to the same eigenvalue in $\sigma(A_2)$.
\end{proof}

Universality - different models learning recognisably similar features \citep{bricken2023monosemanticity,templeton2024scaling,olah2020zoom} - has been a regularity in search of a criterion. \Cref{cor:universality} supplies one, and \Cref{thm:identifiability} supplies the rate at which it can be checked: spectra from $M$ samples are accurate to $O(M^{-1/2})$, so two models can be declared distinct once their matched distance exceeds that tolerance. The floor is itself measurable - the within-model split-half distance at the same $M$ - so the criterion can be run rather than deferred. \Cref{sec:exp-universality} runs it, and reports that in its stated form it fails a control it should pass: it separates two seed replicas of one architecture. The diagnosis there is that the split-half floor bounds sampling error only, while a cross-model comparison also carries a dictionary-mismatch bias that does not shrink with $M$, so the criterion needs a calibrated null rather than an estimation bound.

\section{Instance-Dependent Certified Reduction}
\label{sec:reduction}

\Cref{thm:existence,thm:identifiability} establish that the Koopman realisation $(A,B,C)$ is an intrinsic, identifiable object of the transformer. For mechanistic-interpretability purposes we do not want the full $N$-mode realisation; we want a small subset of $r \ll N$ modes reproducing the model's input--output behaviour to a specified tolerance. Balanced truncation \citep{moore1981principal,glover1984all,antoulas2005approximation} is the classical tool, and its Enns--Glover error bound $\|G - G_r\|_{\Hinf} \leq 2\sum_{i>r}\sigma_i$ supplies an \emph{a priori} certificate of faithfulness in the Hardy $\Hinf$-norm.

The classical bound is worst-case with respect to the input distribution: it holds for any input signal in the unit ball of $\ell^{2}$. In our setting the inputs $\ctrl_\ell$ are not arbitrary --- they are attention writes at the analysis position, drawn from a specific empirical distribution with covariance $G_{\ctrl} = \E_{\nu}[\ctrl\ctrl^{*}]$. Transformer attention writes concentrate in a low-dimensional subspace \citep{dong2021attention,geshkovski2023emergence}, and we measure an effective rank of 10.8 against an ambient control dimension of $4096$ on Qwen3-8B-Base (\Cref{sec:exp-protocol}). A worst-case bound over all unit inputs therefore leaves substantial room for tightening.

This section proves an instance-dependent version of the Enns--Glover bound that exploits the empirical concentration of $G_{\ctrl}$. It is a general model-reduction result and is not required for the identifiability theorem; we include it because it is what makes the identified realisation usable at interpretable size, with a certificate.

\subsection{Statement}
\label{sec:reduction-statement}

We work under \Cref{ass:invariance,ass:excitation} and additionally impose:
\begin{itemize}[leftmargin=2.4em,itemsep=1pt]
\item[(S1)] $A$ is Schur-stable: $\rho(A) < 1$.
\item[(S2)] $(A,B,C)$ is a minimal realisation: the observability Gramian $Q$ and the classical controllability Gramian $P$ (defined below) are positive definite.
\end{itemize}
Both are standard preliminaries for balanced truncation. (S2) is generic and can be enforced by removing uncontrollable or unobservable modes; (S1) may fail on the empirical spectrum of $\hat A_M$, in which case a standard damping shift $A \mapsto \gamma A$ with $\gamma \in (0, 1/\rho(A))$ recovers it at the cost of a $\gamma$-dependent factor in the error bound.

Define the classical controllability and observability Gramians of $(A,B,C)$,
\begin{align}
\label{eq:reduction-P}
    P \;\coloneqq\; \sum_{k=0}^{\infty} A^{k}\,B\,B^{*}\,(A^{*})^{k},
    \qquad
    Q \;\coloneqq\; \sum_{k=0}^{\infty} (A^{*})^{k}\,C^{*}\,C\,A^{k},
\end{align}
the unique positive-definite solutions of the discrete-time Lyapunov equations $A P A^{*} - P + B B^{*} = 0$ and $A^{*} Q A - Q + C^{*} C = 0$ under (S1)--(S2) \citep{antoulas2005approximation}. The classical Hankel singular values (HSVs) are $\sigma_i^{\mathrm{clas}} \coloneqq \sqrt{\lambda_i(P Q)}$, ordered decreasingly. Define the input-weighted controllability Gramian
\begin{equation}
\label{eq:reduction-P-eff}
    P_{\mathrm{eff}}
    \;\coloneqq\;
    \sum_{k=0}^{\infty} A^{k}\,B\,G_{\ctrl}\,B^{*}\,(A^{*})^{k},
\end{equation}
the unique positive-semidefinite solution of $A P_{\mathrm{eff}} A^{*} - P_{\mathrm{eff}} + B G_{\ctrl} B^{*} = 0$, and the \emph{input-weighted Hankel singular values} $\sigma_i^{\mathrm{eff}} \coloneqq \sqrt{\lambda_i(P_{\mathrm{eff}} Q)}$, which are the HSVs of the input-weighted realisation $(A,\,B G_{\ctrl}^{1/2},\,C)$.

Let $G(z) = C(zI - A)^{-1}B$ denote the classical transfer function, $\tilde G(z) = C(zI - A)^{-1}B G_{\ctrl}^{1/2}$ the input-weighted transfer function, and $\tilde G_{r}(z)$ the balanced truncation of $\tilde G$ to order $r$, computed on the weighted realisation. Finally define the \emph{effective rank} of the control distribution,
\begin{equation}
\label{eq:reduction-reff}
    r_{\mathrm{eff}}(\ctrl) \;\coloneqq\; \frac{\tr(G_{\ctrl})}{\lambda_{\max}(G_{\ctrl})}
    \;\in\; [1, p],
\end{equation}
with extremes $r_{\mathrm{eff}} = p$ (isotropic $G_{\ctrl}$) and $r_{\mathrm{eff}} = 1$ (rank-one $G_{\ctrl}$); intermediate values count the effective input directions.

\begin{theorem}[Instance-dependent certified reduction]
\label{thm:reduction}
Assume \Cref{ass:invariance,ass:excitation}, (S1)--(S2), and let $r \in \{1,\dots,N-1\}$.
\begin{enumerate}[leftmargin=2.4em,itemsep=2pt]
\item[(a)] \emph{(Certified reduction bound.)}
\begin{equation}
\label{eq:reduction-bound}
    \bigl\|\tilde G - \tilde G_{r}\bigr\|_{\Hinf}
    \;\leq\;
    2\sum_{i=r+1}^{N} \sigma_i^{\mathrm{eff}}.
\end{equation}
\item[(b)] \emph{(Comparison with the classical bound.)} For every $i$,
\begin{equation}
\label{eq:reduction-comparison}
    \sigma_i^{\mathrm{eff}}
    \;\leq\;
    \sqrt{\lambda_{\max}(G_{\ctrl})}\,\sigma_i^{\mathrm{clas}}.
\end{equation}
\item[(c)] \emph{(Strict sharpening.)} Assume in addition that $B$ is not concentrated in the leading eigenspace of $G_{\ctrl}$, in the sense that $\rank\bigl(B (\lambda_{\max}(G_{\ctrl}) I - G_{\ctrl}) B^{*}\bigr) \geq 1$. Then \eqref{eq:reduction-comparison} is strict for at least $N - r_{\mathrm{eff}}(\ctrl)$ indices $i$, and consequently
\begin{equation}
\label{eq:reduction-strict}
    \sum_{i=r+1}^{N} \sigma_i^{\mathrm{eff}}
    \;<\;
    \sqrt{\lambda_{\max}(G_{\ctrl})}\,\sum_{i=r+1}^{N} \sigma_i^{\mathrm{clas}}
\end{equation}
whenever $r < N - r_{\mathrm{eff}}(\ctrl)$.
\end{enumerate}
\end{theorem}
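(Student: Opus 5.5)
Part~(a) is the classical Enns--Glover bound applied to the right realisation. The input-weighted realisation $(A,\,BG_{\ctrl}^{1/2},\,C)$ has transfer function $\tilde G$. Its controllability Gramian is exactly $P_{\mathrm{eff}}$ of \eqref{eq:reduction-P-eff}, since $BG_{\ctrl}^{1/2}(BG_{\ctrl}^{1/2})^{*}=BG_{\ctrl}B^{*}$, and its observability Gramian is the unchanged $Q$. Hence its Hankel singular values are the $\sigma_i^{\mathrm{eff}}$.

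What has to be checked is that the weighted realisation is still Schur-stable and minimal, so that the Enns--Glover theorem \citep{glover1984all,antoulas2005approximation} applies verbatim. Stability is (S1), untouched. For controllability I will use \Cref{ass:excitation}: it gives $G_{\ctrl}\succeq\eta I\succ0$ in the population, so $G_{\ctrl}^{1/2}$ is invertible and $\operatorname{range}(BG_{\ctrl}^{1/2})=\operatorname{range}(B)$. The Krylov spaces of $(A,B)$ and $(A,BG_{\ctrl}^{1/2})$ therefore coincide, and $P_{\mathrm{eff}}\succ0$ follows from $P\succ0$ in (S2). Balanced truncation of $\tilde G$ then gives \eqref{eq:reduction-bound}.

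For (b) I will argue by Loewner monotonicity. From $G_{\ctrl}\preceq\lambda_{\max}(G_{\ctrl})I$ we get $BG_{\ctrl}B^{*}\preceq\lambda_{\max}BB^{*}$. Every term $A^{k}(\cdot)(A^{*})^{k}$ of the convergent series preserves the order, so
\[
P_{\mathrm{eff}}\preceq\lambda_{\max}(G_{\ctrl})\,P .
\]
The eigenvalues of $P_{\mathrm{eff}}Q$ are those of the Hermitian matrix $Q^{1/2}P_{\mathrm{eff}}Q^{1/2}$, and congruence by $Q^{1/2}$ preserves the order. Weyl's monotonicity theorem then gives $\lambda_i(P_{\mathrm{eff}}Q)\le\lambda_{\max}\lambda_i(PQ)$ for every $i$, and taking square roots yields \eqref{eq:reduction-comparison}.

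For (c) the plan is to study the defect
\[
D\coloneqq\lambda_{\max}P-P_{\mathrm{eff}}=\sum_k A^{k}\,B(\lambda_{\max}I-G_{\ctrl})B^{*}\,(A^{*})^{k}\succeq0 .
\]
The rank hypothesis makes the $k=0$ term nonzero, so $D\neq0$. Its range contains the Krylov space generated by $\operatorname{range}\bigl(B(\lambda_{\max}I-G_{\ctrl})^{1/2}\bigr)$ under $A$.

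The strictness criterion will come from Courant--Fischer. If $X\succeq0$, $Y\succeq0$ and $\lambda_i(X+Y)=\lambda_i(X)$, then some $X$-eigenvector direction in the relevant min--max subspace must lie in $\ker Y$. The number of indices at which equality can hold is therefore at most $\dim\ker\bigl(Q^{1/2}DQ^{1/2}\bigr)=N-\rank D$. So it suffices to show
\[
\rank D\ge N-r_{\mathrm{eff}}(\ctrl).
\]
I would try to get this from a trace-counting argument on $\lambda_{\max}I-G_{\ctrl}$. Its trace is $\lambda_{\max}\,(p-r_{\mathrm{eff}})$, which should force enough directions of $\lambda_{\max}I-G_{\ctrl}$ to be nondegenerate. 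Those directions would then be propagated through $B$ and the Krylov sequence using minimality (S2).

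I expect this counting step to be the main obstacle. The link between $r_{\mathrm{eff}}$, which is a trace ratio, and a genuine rank of $D$ is not automatic, and it may need an extra genericity condition on $B$. The fallback I would try first is stronger and cleaner: show that $\bigl(A,\,B(\lambda_{\max}I-G_{\ctrl})^{1/2}\bigr)$ is controllable. That gives $D\succ0$, so every index is strict, and the count is then trivially at least $N-r_{\mathrm{eff}}$.

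The final summation \eqref{eq:reduction-strict} then follows by summing (b) over $i>r$ and noting that at least one tail index is strict. When $D\succ0$ this is immediate. Under the weaker count it requires the strict indices to meet the tail $\{r+1,\dots,N\}$, which is what the condition $r<N-r_{\mathrm{eff}}$ is meant to guarantee. I would secure this by showing that strictness holds at the smallest eigenvalues, using the min--max characterisation from below.
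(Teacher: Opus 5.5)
Your parts (a) and (b) follow the paper's argument and are fine: the weighted realisation $(A,BG_{\ctrl}^{1/2},C)$ has unchanged range, hence is minimal, so classical Enns--Glover applies; then $P_{\mathrm{eff}}\preceq\lambda_\star P$ with $\lambda_\star\coloneqq\lambda_{\max}(G_{\ctrl})$, followed by congruence by $Q^{1/2}$. The gap is in (c), at the step you flagged, and it cannot be closed from the stated hypotheses. The bound $\rank D\ge N-r_{\mathrm{eff}}(\ctrl)$ is false in general. Your fallback, controllability of $(A,B(\lambda_\star I-G_{\ctrl})^{1/2})$, does not follow from (S1)--(S2). Strictness need not occur at the smallest Hankel values. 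In fact (c) itself fails. Take $N=3$, $p=2$,
\[
A=\mathrm{diag}(a_1,a_2,a_3),\quad B=[\,e_2+e_3,\ e_1\,],\quad C=\mathrm{diag}(c,1,1),\quad G_{\ctrl}=\mathrm{diag}(1,g),
\]
with distinct $a_i\in(0,1)$, $0<g<1$ and $c>0$. This system is Schur-stable and minimal, since $e_2+e_3$ is cyclic for $\mathrm{diag}(a_2,a_3)$ and $C$ is invertible. Also $G_{\ctrl}\succ0$, $\lambda_\star=1$ and $r_{\mathrm{eff}}(\ctrl)=1+g$. Finally $B(\lambda_\star I-G_{\ctrl})B^{*}=(1-g)e_1e_1^{*}$ has rank one, so every hypothesis of (c) holds. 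Everything decouples: $PQ=x\oplus P_{23}Q_{23}$ and $P_{\mathrm{eff}}Q=gx\oplus P_{23}Q_{23}$, with $x=c^2/(1-a_1^2)^2$ and a common $2\times2$ block whose eigenvalues $\mu_1\ge\mu_2>0$ do not depend on $c$ or $g$. Choose $c$ large enough that $gx>\mu_1$. Then $\sigma^{\mathrm{clas}}=(\sqrt{x},\sqrt{\mu_1},\sqrt{\mu_2})$ and $\sigma^{\mathrm{eff}}=(\sqrt{gx},\sqrt{\mu_1},\sqrt{\mu_2})$. Only $i=1$ is strict, although $N-r_{\mathrm{eff}}=2-g>1$. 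At $r=1<2-g$ both tails equal $\sqrt{\mu_1}+\sqrt{\mu_2}$, so \eqref{eq:reduction-strict} fails.

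The paper's own proof makes the same leap you were wary of. It bounds the multiplicity of $\lambda_\star$, and hence the kernel of the $p\times p$ matrix $\lambda_\star I-G_{\ctrl}$, by $r_{\mathrm{eff}}$, and reads this as a bound on $N-\rank\Delta_P$ (its $\Delta_P$ is your $D$). But $\rank\Delta_P$ is the dimension of the $A$-Krylov space generated by $\operatorname{range}B(\lambda_\star I-G_{\ctrl})^{1/2}$. That dimension can be $1$ when the top-variance input alone drives a cyclic vector, as in the example above.

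What your machinery does prove is a corrected (c). Your counting claim is true: a positive semidefinite perturbation $Y$ of rank $k$ strictly raises at least $k$ ordered eigenvalues. One way to see it is to follow the nondecreasing analytic eigenvalue branches of $X+tY$. A branch that does not rise has its eigenvector in $\ker Y$, so at least $k$ branches rise, and rising branches force at least as many strict indices after sorting. Hence \eqref{eq:reduction-comparison} is strict for at least $\rank(\lambda_\star P-P_{\mathrm{eff}})$ indices, and \eqref{eq:reduction-strict} holds whenever $r<\rank(\lambda_\star P-P_{\mathrm{eff}})$. Your fallback, imposed as an extra hypothesis, makes every index strict for every $r$. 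You were right that a genericity condition on $B$ is needed, but it has to be assumed rather than derived, and $r_{\mathrm{eff}}$ is the wrong quantity in which to state it.
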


Part~(a) is the certified bound. Part~(b) shows the new bound never exceeds the classical Enns--Glover bound scaled by the input strength $\sqrt{\lambda_{\max}(G_{\ctrl})}$, the natural scale of unit-covariance inputs. Part~(c) shows the sharpening is \emph{strict} once the control distribution is anisotropic and the truncation is not too aggressive.

\subsection{Preparatory lemmas}
\label{sec:reduction-lemmas}

\begin{lemma}[The input-weighted realisation]
\label{lem:weighted-realisation}
Under (S1)--(S2) and $G_{\ctrl} \succ 0$, the realisation $(A,\,B G_{\ctrl}^{1/2},\,C)$ satisfies:
(i) its controllability Gramian is exactly $P_{\mathrm{eff}}$ of \eqref{eq:reduction-P-eff};
(ii) its observability Gramian is $Q$, unchanged;
(iii) its transfer function is $\tilde G(z) = G(z) G_{\ctrl}^{1/2}$;
(iv) its Hankel singular values are the $\{\sigma_i^{\mathrm{eff}}\}$;
(v) it is Schur-stable and minimal.
\end{lemma}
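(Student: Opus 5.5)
The plan is to verify each of (i)--(v) directly from the series definitions \eqref{eq:reduction-P}--\eqref{eq:reduction-P-eff}, writing $\tilde B \coloneqq B G_{\ctrl}^{1/2}$. Here $G_{\ctrl}^{1/2}$ is the Hermitian positive-definite square root, which exists because $G_{\ctrl} \succ 0$.

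\emph{Parts (i)--(iii).} For (i), the controllability Gramian of $(A,\tilde B,C)$ is $\sum_k A^k \tilde B\tilde B^{*}(A^{*})^k$. Since $G_{\ctrl}^{1/2}$ is Hermitian, $\tilde B\tilde B^{*} = B G_{\ctrl}^{1/2}G_{\ctrl}^{1/2}B^{*} = B G_{\ctrl} B^{*}$, so the series is exactly $P_{\mathrm{eff}}$. Convergence follows from (S1): $\|A^k\|_2 \le C_\epsilon(\rho(A)+\epsilon)^k$ with $\rho(A)+\epsilon<1$, so the series converges absolutely. Uniqueness as the solution of $A X A^{*} - X + BG_{\ctrl}B^{*}=0$ follows because the Stein operator $X\mapsto X - AXA^{*}$ has eigenvalues $1-\lambda_i\bar\lambda_j\neq 0$ under (S1). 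Part (ii) is immediate, since $Q$ depends only on $(A,C)$, which are unchanged. Part (iii) is the one-line identity $C(zI-A)^{-1}\tilde B = G(z)G_{\ctrl}^{1/2}$.

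\emph{Part (iv).} The Hankel singular values of $(A,\tilde B,C)$ are by definition $\sqrt{\lambda_i(P_{\mathrm{eff}}Q)}$, which are the $\sigma_i^{\mathrm{eff}}$. I will also check that they are real and nonnegative. Since $Q\succ0$ by (S2), $P_{\mathrm{eff}}Q$ is similar to $Q^{1/2}P_{\mathrm{eff}}Q^{1/2}\succeq 0$, so its eigenvalues are real and nonnegative and the square roots are well defined.

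\emph{Part (v) and the main obstacle.} Schur stability holds trivially because $A$ is unchanged. Observability of $(A,C)$ is unchanged, since $Q\succ0$ by (S2). Controllability requires $P_{\mathrm{eff}}\succ 0$, and I will obtain it by a sandwich bound: $G_{\ctrl}\succeq\lambda_{\min}(G_{\ctrl})I$ gives $BG_{\ctrl}B^{*}\succeq\lambda_{\min}(G_{\ctrl})BB^{*}$, and the congruence $X\mapsto A^kX(A^{*})^k$ preserves the Loewner order termwise. Summing gives $P_{\mathrm{eff}}\succeq\lambda_{\min}(G_{\ctrl})P\succ0$, using $P\succ0$ from (S2). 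Equivalently, $\operatorname{range}\tilde B=\operatorname{range}B$ because $G_{\ctrl}^{1/2}$ is invertible, so the Kalman controllability matrices span the same space. Controllability and observability together give minimality. This step is the only one that needs $G_{\ctrl}\succ0$ rather than merely $\succeq0$; the statement assumes it, and it is consistent with \Cref{ass:excitation} in the population limit. If $G_{\ctrl}$ were singular, minimality could fail and one would first restrict to the controllable subspace. This last step is therefore the only real content of the lemma; the remaining parts are bookkeeping.
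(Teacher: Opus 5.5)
Your proposal is correct and follows the paper's proof essentially step by step: (i)--(iv) come from the identity $BG_{\ctrl}^{1/2}(BG_{\ctrl}^{1/2})^{*} = BG_{\ctrl}B^{*}$ and the definitions, and (v) rests on $A$ being unchanged and $\operatorname{range}(BG_{\ctrl}^{1/2}) = \operatorname{range}(B)$, which is exactly the paper's minimality argument. Your Loewner bound $P_{\mathrm{eff}} \succeq \lambda_{\min}(G_{\ctrl})\,P \succ 0$, and your checks of series convergence, uniqueness of the Stein solution and nonnegativity of the eigenvalues of $P_{\mathrm{eff}}Q$, are sound and make explicit details the paper leaves implicit.
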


\begin{proof}
(i) The controllability Gramian of $(A, B', C)$ solves $A P' A^{*} - P' + B' B'^{*} = 0$; with $B' = B G_{\ctrl}^{1/2}$ we have $B' B'^{*} = B G_{\ctrl} B^{*}$, so the solution is $P_{\mathrm{eff}}$. (ii) The observability Gramian depends only on $(A, C)$. (iii) Direct computation: $C(zI-A)^{-1}(B G_{\ctrl}^{1/2}) = [C(zI-A)^{-1}B]G_{\ctrl}^{1/2}$. (iv) By definition of the HSVs of a realisation \citep{antoulas2005approximation}. (v) $A$ is unchanged and Schur-stable; minimality follows from (S2) and $G_{\ctrl} \succ 0$, since the range of $B G_{\ctrl}^{1/2}$ equals the range of $B$.
\end{proof}

\begin{lemma}[Weighted Enns--Glover bound]
\label{lem:enns-glover-weighted}
Under (S1)--(S2) and $G_{\ctrl} \succ 0$, the balanced truncation $\tilde G_r$ of the input-weighted realisation satisfies
\begin{equation}
\label{eq:reduction-enns-glover-weighted}
    \bigl\|\tilde G - \tilde G_{r}\bigr\|_{\Hinf}
    \;\leq\;
    2\sum_{i=r+1}^{N} \sigma_i^{\mathrm{eff}}.
\end{equation}
\end{lemma}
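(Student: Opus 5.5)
The plan is to reduce \Cref{lem:enns-glover-weighted} to the classical discrete-time Enns--Glover bound applied verbatim to the input-weighted realisation $(A,\,BG_{\ctrl}^{1/2},\,C)$. The lemma is a statement about that realisation alone. The classical theorem \citep{glover1984all,alsaggaf1988model,antoulas2005approximation} states that for any Schur-stable, minimal discrete-time realisation with Hankel singular values $\sigma_1\ge\cdots\ge\sigma_N$, the order-$r$ balanced truncation satisfies $\|G-G_r\|_{\Hinf}\le 2\sum_{i>r}\sigma_i$. So it suffices to check that the weighted realisation meets these hypotheses, and that its HSVs and balanced truncation are the objects named in \eqref{eq:reduction-enns-glover-weighted}.

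The steps, in order, are as follows. First, invoke \Cref{lem:weighted-realisation}(v) to get Schur stability and minimality of $(A,BG_{\ctrl}^{1/2},C)$. These are the only structural hypotheses of Enns--Glover. Second, invoke parts (i), (ii) and (iv) to identify its Gramians as $(P_{\mathrm{eff}},Q)$ and its HSVs as $\sigma_i^{\mathrm{eff}}=\sqrt{\lambda_i(P_{\mathrm{eff}}Q)}$. Because $P_{\mathrm{eff}},Q\succ0$, a balancing transformation $T$ with $TP_{\mathrm{eff}}T^{*}=T^{-*}QT^{-1}=\mathrm{diag}(\sigma_i^{\mathrm{eff}})$ exists. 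Third, observe that $\tilde G_r$ was defined in \Cref{sec:reduction-statement} as exactly the truncation of this balanced form to its leading $r$ states, and that part (iii) identifies its full transfer function with $\tilde G$. Fourth, apply Enns--Glover to conclude \eqref{eq:reduction-enns-glover-weighted}.

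The main obstacle is one technical point in the discrete-time case. The sharp factor $2\sum_{i>r}\sigma_i$ (summing distinct HSVs once) is proved by Al-Saggaf--Franklin for the truncated system. Their proof uses that the truncated subsystem is itself stable, which holds when $\sigma_r^{\mathrm{eff}}>\sigma_{r+1}^{\mathrm{eff}}$. I would handle this first by noting that the bound with every tail HSV summed, counting multiplicities, dominates the distinct-value version. So the stated inequality follows in all cases once stability of the truncation is in hand. If $\sigma_r^{\mathrm{eff}}=\sigma_{r+1}^{\mathrm{eff}}$, truncation inside a repeated block is ill-posed. I would then either assume a strict gap at $r$, as is standard, or move the cut to the block boundary, which only enlarges the right-hand side.

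Nothing here depends on $G_{\ctrl}$ beyond $G_{\ctrl}\succ0$, which \Cref{ass:excitation} supplies through minimality. The proof is therefore a citation wrapped in the verification carried out in \Cref{lem:weighted-realisation}.
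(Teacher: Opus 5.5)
Your proposal is essentially the paper's proof: it also uses \Cref{lem:weighted-realisation} to show that $(A,\,BG_{\ctrl}^{1/2},\,C)$ is Schur-stable and minimal with Hankel singular values $\sigma_i^{\mathrm{eff}}$, and then applies the discrete-time Enns--Glover bound to it directly. Your edge-case caveat is more cautious than it needs to be, and one of its fallbacks does not work. In discrete time, a balanced truncation of a Schur-stable realisation with positive definite Gramian is always Schur-stable, so the stability you say you need holds with no gap condition at $\sigma_r^{\mathrm{eff}}$: if $A_{11}^{*}x=\bar\lambda x$ with $|\lambda|\ge 1$, the $(1,1)$ block of the controllability Lyapunov equation forces $A_{12}^{*}x=0$, and then $\bar\lambda$ is an eigenvalue of $A^{*}$, contradicting $\rho(A)<1$. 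Separately, ``moving the cut to the block boundary'' would bound $\tilde G_{r'}$ for a different order $r'$, not $\tilde G_r$, so it proves nothing about the stated inequality and should be dropped.
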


\begin{proof}
By \Cref{lem:weighted-realisation} the input-weighted realisation is Schur-stable and minimal. Apply the classical Enns--Glover bound \citep{glover1984all,antoulas2005approximation} in its discrete-time form to this realisation: for a Schur-stable minimal realisation $(\tilde A, \tilde B, \tilde C)$ with HSVs $\tilde\sigma_i$, the balanced truncation of order $r$ satisfies $\|\tilde G - \tilde G_r\|_{\Hinf} \leq 2\sum_{i>r}\tilde\sigma_i$. Substituting $(\tilde A, \tilde B, \tilde C) = (A, B G_{\ctrl}^{1/2}, C)$ and $\tilde\sigma_i = \sigma_i^{\mathrm{eff}}$ gives the claim.
\end{proof}

\begin{lemma}[Weighted HSVs dominated by classical HSVs]
\label{lem:sharpening}
Under (S1)--(S2) and $G_{\ctrl} \succ 0$,
\begin{equation}
\label{eq:reduction-domination}
    \sigma_i^{\mathrm{eff}} \;\leq\; \sqrt{\lambda_{\max}(G_{\ctrl})}\,\sigma_i^{\mathrm{clas}}
    \qquad \text{for all } i,
\end{equation}
with strict inequality for at least $N - r_{\mathrm{eff}}(\ctrl)$ indices whenever $\rank\bigl(B(\lambda_{\max}(G_{\ctrl}) I - G_{\ctrl}) B^{*}\bigr) \geq 1$.
\end{lemma}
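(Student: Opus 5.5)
The plan is to prove the domination inequality by a Loewner-order comparison of the two controllability Gramians, and then get strictness by tracking the rank of their difference.

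\emph{Step 1 (Gramian ordering).} Since $G_{\ctrl} \preceq \lambda_{\max}(G_{\ctrl})\,I$, congruence gives $B G_{\ctrl} B^{*} \preceq \lambda_{\max}(G_{\ctrl})\,B B^{*}$. Each term $A^{k}(\cdot)(A^{*})^{k}$ preserves the Loewner order, and the series \eqref{eq:reduction-P} and \eqref{eq:reduction-P-eff} converge under (S1). Summing term by term therefore yields
\begin{equation*}
    D \;\coloneqq\; \lambda_{\max}(G_{\ctrl})\,P - P_{\mathrm{eff}}
    \;=\; \sum_{k\ge0} A^{k} B\bigl(\lambda_{\max}(G_{\ctrl}) I - G_{\ctrl}\bigr)B^{*}(A^{*})^{k}
    \;\succeq\; 0 .
\end{equation*}

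\emph{Step 2 (transfer to Hankel singular values).} Because $Q \succ 0$ by (S2), the nonzero spectrum of $P_{\mathrm{eff}}Q$ equals that of the Hermitian matrix $Q^{1/2}P_{\mathrm{eff}}Q^{1/2}$. The same holds for $PQ$. Congruence by $Q^{1/2}$ preserves the ordering, so
\begin{equation*}
    Q^{1/2}P_{\mathrm{eff}}Q^{1/2} \;\preceq\; \lambda_{\max}(G_{\ctrl})\,Q^{1/2}PQ^{1/2}.
\end{equation*}
Weyl's monotonicity principle, applied to the decreasingly ordered eigenvalues, gives $\lambda_i(P_{\mathrm{eff}}Q) \le \lambda_{\max}(G_{\ctrl})\,\lambda_i(PQ)$ for every $i$. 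Taking square roots yields \eqref{eq:reduction-domination}.

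\emph{Step 3 (strictness).} Write $Y = Q^{1/2}P_{\mathrm{eff}}Q^{1/2}$ and $Z = Q^{1/2}DQ^{1/2} \succeq 0$, and let $s = \rank Z = \rank D$. I would use the following refinement of Weyl's inequality. Suppose equality $\lambda_i(Y+Z) = \lambda_i(Y)$ holds for $i$ in an index set $J$. The Courant--Fischer characterisation then forces an eigenvector of $Y$ for each such eigenvalue to lie in $\ker Z$, and these can be chosen orthonormal. Hence $|J| \le \dim\ker Z = N - s$, so at least $s$ indices are strict. It remains to show $s \ge N - r_{\mathrm{eff}}(\ctrl)$. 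By the Krylov form of $D$, $\operatorname{range} D$ is the $A$-controllable subspace generated by $\operatorname{range}\, B(\lambda_{\max} I - G_{\ctrl})^{1/2}$. The rank hypothesis $\rank\bigl(B(\lambda_{\max} I - G_{\ctrl})B^{*}\bigr) \ge 1$ makes this subspace nontrivial. To obtain the counting bound, I would compare it with the Krylov span of $B$ itself, which is all of $\C^{N}$ by (S2). The only directions lost are those of $B$ restricted to the top eigenspace of $G_{\ctrl}$. That eigenspace has multiplicity $m_{\max} \le \tr(G_{\ctrl})/\lambda_{\max}(G_{\ctrl}) = r_{\mathrm{eff}}(\ctrl)$, and each lost input direction can reduce the controllable dimension by at most one per independent Krylov chain.

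\emph{Main obstacle.} This last counting step is where I expect the difficulty. Removing $m_{\max}$ input columns can in general shrink the reachable subspace by more than $m_{\max}$ dimensions, since one input can drive many modes. I would first try to bound $N - s$ by the dimension of the subspace reachable \emph{only} through the top-eigenspace inputs, and then bound that by $m_{\max}$ using the simple-spectrum structure of $A$. Under simple spectrum each eigenvalue block is cyclic, so one input direction per mode suffices. If this fails in full generality, the fallback is to state strictness with $s = \rank D$ directly and note that $s \ge N - r_{\mathrm{eff}}$ holds under a genericity condition on $B$. The strict sum inequality \eqref{eq:reduction-strict} then follows from summing over the tail indices $i > r$. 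When $r < N - r_{\mathrm{eff}}$, these tail indices must include at least one strict index once the strict indices are shown to include the trailing ones. This step also needs care, since strict indices need not fall in the tail a priori.
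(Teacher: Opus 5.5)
Your Steps 1 and 2 are correct and are exactly the paper's argument for the domination inequality. Your reduction in Step 3 is also sound: at least $s=\rank D$ indices are strict. For repeated eigenvalues of $Y$, the orthonormal choice needs one more line. Let $W$ be the span of all eigenvectors of $Y$ lying in $\ker Z$. Then $W\oplus W^{\perp}$ reduces both $Y$ and $Z$, every index is strict on $W^{\perp}$, and merging the spectra gives at most $\dim W\le N-s$ equalities. This is more careful than the paper, which passes from the rank of $Q^{1/2}\Delta_P Q^{1/2}$ to a count of strict indices without argument.

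The genuine gap is the count $\rank D\ge N-r_{\mathrm{eff}}(\ctrl)$. It cannot be closed, because it is false, and with it the strictness clause of the lemma. Take $A=\mathrm{diag}(\mu_1,\dots,\mu_N)$ with distinct $\mu_i\in(0,1)$, $B=[\,e_1+\dots+e_{N-1},\ e_N\,]$, $C=I$ and $G_{\ctrl}=\mathrm{diag}(2,1)$, so that $r_{\mathrm{eff}}(\ctrl)=3/2$.
\begin{itemize}
\item (S1) is clear, and (S2) follows from the PBH test: $A$ is diagonal with distinct entries, no row of $B$ is zero, and $C=I$.
\item $B(2I-G_{\ctrl})B^{*}=e_Ne_N^{*}$ has rank one, so the hypothesis of the lemma holds.
\item All Gramians are block diagonal over $\spn(e_1,\dots,e_{N-1})\oplus\spn(e_N)$, with $P_{\mathrm{eff}}=2P$ on the first block. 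Hence $\rank D=1$.
\item The spectra of $P_{\mathrm{eff}}Q$ and $2PQ$ differ only in the $e_N$ eigenvalue, which is $(1-\mu_N^2)^{-2}$ against twice that.
\end{itemize}
Letting $\mu_N\to1$ with the other $\mu_i$ fixed makes this the top eigenvalue of both. So only $i=1$ is strict, while the lemma demands at least $N-3/2>1$ strict indices for $N\ge3$.

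This $A$ has simple spectrum, so your proposed repair fails on it. Simplicity makes $A$ cyclic, which is exactly why the single top-eigenspace input $e_1+\dots+e_{N-1}$ reaches an $(N-1)$-dimensional subspace. Deleting $m_{\max}=1$ input direction therefore costs $N-1$ reachable dimensions, not one per Krylov chain. The paper's proof makes the same non sequitur: it bounds the lost input directions by $r_{\mathrm{eff}}(\ctrl)$ and concludes $\rank\Delta_P\ge N-r_{\mathrm{eff}}(\ctrl)$. With $r=1$, the example also gives equality of the tail sums in \Cref{thm:reduction}(c).

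What is provable is your fallback: strictness for at least $\rank D$ indices. Here $\rank D$ is the dimension of the reachable subspace of $(A,\,B(\lambda_{\max}(G_{\ctrl})I-G_{\ctrl})^{1/2})$. It equals $N$ when that pair is controllable (for instance for generic $B$ and cyclic $A$), but it can be $1$ under the stated hypotheses.

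Your closing worry is unnecessary. If at least $s$ indices are strict and $r<s$, the $N-r$ tail indices cannot all lie among the at most $N-s$ non-strict ones. Pigeonhole alone therefore puts a strict index in the tail, with no need for the strict indices to be trailing.
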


\begin{proof}
Write $\lambda_{\star} \coloneqq \lambda_{\max}(G_{\ctrl})$. Since $G_{\ctrl} \preceq \lambda_{\star} I$ we have $B\,G_{\ctrl}\,B^{*} \preceq \lambda_{\star}\,B\,B^{*}$. Because $A^{k}M(A^{*})^{k} \succeq 0$ whenever $M \succeq 0$, and the L\"owner ordering is preserved under conjugation, summing over $k \geq 0$ yields $P_{\mathrm{eff}} \preceq \lambda_{\star}\,P$. Conjugating by $Q^{1/2}$, which preserves the ordering since $Q \succ 0$,
\begin{equation}
    Q^{1/2}\,P_{\mathrm{eff}}\,Q^{1/2}
    \;\preceq\;
    \lambda_{\star}\,Q^{1/2}\,P\,Q^{1/2}.
\end{equation}
The eigenvalues of $Q^{1/2} P_{\mathrm{eff}} Q^{1/2}$ coincide with those of $P_{\mathrm{eff}} Q$ (they are $NN^{*}$ and $N^{*}N$ for the same matrix), and similarly for the classical pair, so $\lambda_i(P_{\mathrm{eff}} Q) \leq \lambda_{\star}\,\lambda_i(P Q)$ for all $i$ by monotonicity of eigenvalues under the L\"owner order. Taking square roots gives \eqref{eq:reduction-domination}.

For strictness, the assumption means $M \coloneqq B(\lambda_{\star} I - G_{\ctrl}) B^{*} \succeq 0$ is non-zero. Under (S1) and minimality, $\Delta_P \coloneqq \lambda_{\star} P - P_{\mathrm{eff}} = \sum_{k\geq 0} A^k M (A^*)^k$ is positive semidefinite and non-zero, and by minimality positive definite on the reachable subspace generated by iterating $A$ on the range of $M$. The subspace on which $Q^{1/2}\Delta_P Q^{1/2}$ has strictly positive eigenvalues has dimension at least $\rank(\Delta_P) \geq N - r_{\mathrm{eff}}(\ctrl)$, because the null space of $\lambda_{\star} I - G_{\ctrl}$ has dimension equal to the multiplicity of $\lambda_{\star}$, which is at most $r_{\mathrm{eff}}(\ctrl)$. Strict inequality therefore holds for at least $N - r_{\mathrm{eff}}(\ctrl)$ indices.
\end{proof}

\begin{remark}[On the strictness bound]
\label{rem:reduction-strictness}
The dimension bound in \Cref{lem:sharpening} is not the sharpest possible. A more refined analysis using interlacing eigenvalue inequalities \citep{stewart1990matrix} shows that $\lambda_i(P_{\mathrm{eff}} Q) < \lambda_{\star} \lambda_i(P Q)$ holds for every index $i$ whose corresponding eigenvector of $PQ$ has non-trivial overlap with the range of $\Delta_P$. For generic $A$, $B$, $C$ and non-isotropic $G_{\ctrl}$ this holds for all $i$; the theorem makes a conservative claim sufficient to establish strict sharpening.
\end{remark}

\subsection{Proof of \texorpdfstring{\Cref{thm:reduction}}{the reduction theorem}}
\label{sec:reduction-proof}

\begin{proof}[Proof of \Cref{thm:reduction}]
Part~(a) is \Cref{lem:enns-glover-weighted}. Part~(b) is \Cref{lem:sharpening}. Part~(c) follows from the strict-inequality statement of \Cref{lem:sharpening}: if at least $N - r_{\mathrm{eff}}(\ctrl)$ of the inequalities $\sigma_i^{\mathrm{eff}} \leq \sqrt{\lambda_{\star}}\,\sigma_i^{\mathrm{clas}}$ are strict, then $\sum_{i>r} \sigma_i^{\mathrm{eff}}$ falls strictly below $\sqrt{\lambda_{\star}}\sum_{i>r}\sigma_i^{\mathrm{clas}}$ provided at least one strict index lies in $\{r+1,\dots,N\}$. A pigeonhole argument guarantees this whenever $r < N - r_{\mathrm{eff}}(\ctrl)$: the number of indices in $\{r+1,\dots,N\}$ is $N - r$, at most $r_{\mathrm{eff}}(\ctrl)$ of them can be non-strict, so at least $N - r - r_{\mathrm{eff}}(\ctrl) \geq 1$ are strict.
\end{proof}

\subsection{The transformer regime}
\label{sec:reduction-transformer}

The sharpening becomes concrete when the effective rank of the control distribution is small relative to the ambient dimension --- the regime the measurements of \Cref{sec:exp-protocol} place transformer attention writes in.

\begin{corollary}[Transformer-regime sharpening]
\label{cor:transformer-reduction}
Under the hypotheses of \Cref{thm:reduction}, suppose the control covariance $G_{\ctrl}$ has eigenvalues $d_1 \geq \cdots \geq d_p > 0$ satisfying a power-law decay $d_j \leq d_1 j^{-\alpha}$ for some $\alpha > 1$. Then $r_{\mathrm{eff}}(\ctrl) \leq \zeta(\alpha)$, the Riemann zeta function of $\alpha$, and the certified bound obeys
\begin{equation}
\label{eq:reduction-transformer-bound}
    2\sum_{i>r} \sigma_i^{\mathrm{eff}}
    \;\leq\;
    \sqrt{d_1}\,\Bigl[2\sum_{i>r} \sigma_i^{\mathrm{clas}}\Bigr]
    \cdot
    \Bigl(\tfrac{r_{\mathrm{eff}}(\ctrl)}{p}\Bigr)^{1/2 + o(1)}
\end{equation}
in the regime $p \to \infty$ with $\alpha$ fixed and $r \geq r_{\mathrm{eff}}(\ctrl)$.
\end{corollary}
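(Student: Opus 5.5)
The first claim is a direct computation. By \eqref{eq:reduction-reff} and the power-law hypothesis,
\[
r_{\mathrm{eff}}(\ctrl)=\frac{1}{d_1}\sum_{j=1}^{p} d_j\le\sum_{j=1}^{p} j^{-\alpha}\le\zeta(\alpha).
\]
This is finite because $\alpha>1$ and uniform in $p$. The substance of the corollary is therefore the extra factor $(r_{\mathrm{eff}}/p)^{1/2+o(1)}$ in \eqref{eq:reduction-transformer-bound} relative to part~(b) of \Cref{thm:reduction}. \Cref{lem:sharpening} alone only yields the factor $\sqrt{d_1}$, and it cannot yield more, since it compares $G_{\ctrl}$ with $\lambda_{\max} I$. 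I would instead compare $G_{\ctrl}$ with its isotropic average $\bar d\, I$, where $\bar d=\tr(G_{\ctrl})/p=d_1 r_{\mathrm{eff}}/p$.

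For that comparison, diagonalise $G_{\ctrl}=\sum_j d_j w_jw_j^{*}$ and write $P_{\mathrm{eff}}=\sum_j d_j P_j$, where $P_j\coloneqq\sum_k A^kBw_jw_j^{*}B^{*}(A^{*})^k$ is the controllability Gramian of the single input direction $Bw_j$. Similarly $P=\sum_j P_j$. If every input direction contributed comparably, $P_j\approx P/p$, we would get $P_{\mathrm{eff}}\approx\bar d\,P$. Then the eigenvalue argument of \Cref{lem:sharpening}, conjugating by $Q^{1/2}$ and using Löwner monotonicity of eigenvalues, would give
\[
\sigma_i^{\mathrm{eff}}\le\sqrt{\bar d}\,(1+o(1))\,\sigma_i^{\mathrm{clas}}=\sqrt{d_1}\,\bigl(r_{\mathrm{eff}}/p\bigr)^{1/2}(1+o(1))\,\sigma_i^{\mathrm{clas}}.
\]
Summing over $i>r$ and multiplying by $2$ yields \eqref{eq:reduction-transformer-bound}, since $1+o(1)=(r_{\mathrm{eff}}/p)^{o(1)}$ whenever $r_{\mathrm{eff}}/p\to0$ at a polynomial rate.

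The main obstacle is making $P_j\approx P/p$ rigorous, because it is false for adversarial $B$. If $B$ is supported on $w_1$, then $P_{\mathrm{eff}}=d_1P$ and no sharpening beyond part~(b) is possible. I would therefore state and use a delocalisation condition implicit in the corollary's regime: the eigenbasis $\{w_j\}$ of $G_{\ctrl}$ is incoherent with $B$, in the sense that
\[
\max_j\bigl\|P^{-1/2}P_jP^{-1/2}\bigr\|_2\le\frac{p^{o(1)}}{p}.
\]
This holds with high probability for a generic or random orientation of $G_{\ctrl}$ relative to $B$, by concentration of $w_j^{*}B^{*}XBw_j$ around its mean. Under this condition,
\[
P^{-1/2}P_{\mathrm{eff}}P^{-1/2}\preceq\Bigl(\sum_j d_j\Bigr)\frac{p^{o(1)}}{p}\,I=\bar d\,p^{o(1)}I,
\]
and the Löwner step above goes through with the $p^{o(1)}$ slack absorbed into the exponent $1/2+o(1)$. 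Since $r_{\mathrm{eff}}\le\zeta(\alpha)$ is bounded, $r_{\mathrm{eff}}/p=\Theta(p^{-1})$ up to constants, so $p^{o(1)}=(r_{\mathrm{eff}}/p)^{o(1)}$. The role of the hypothesis $r\ge r_{\mathrm{eff}}$ would be to discard the few leading indices where the heavy directions $w_1,\dots,w_{\lceil r_{\mathrm{eff}}\rceil}$ could dominate individual Hankel singular values. I would handle those indices separately using the crude bound of \Cref{lem:sharpening}, so that only the tail indices need the averaged estimate.
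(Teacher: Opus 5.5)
Your first step, $r_{\mathrm{eff}}(\ctrl)\le\zeta(\alpha)$, is exactly the paper's. For the sharpening factor the paper gives no real argument. It asserts that an ``index-by-index refinement'' of \Cref{lem:sharpening} makes the trailing Hankel singular values ``inherit the tail of the $G_{\ctrl}$-spectrum'' for generic $A$ and $r\ge r_{\mathrm{eff}}(\ctrl)$. It never addresses how $B$ sits relative to the eigenbasis $\{w_j\}$ of $G_{\ctrl}$.

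Your counterexample shows this omission is fatal to the statement as written, not just to the paper's sketch. With $B=b\,w_1^{*}$ and $(A,b)$ controllable, $P_{\mathrm{eff}}=d_1P$, so $\sigma_i^{\mathrm{eff}}=\sqrt{d_1}\,\sigma_i^{\mathrm{clas}}$ for every $i$, tail indices included, whatever $A$ is. Even the strictness hypothesis of \Cref{thm:reduction}(c) does not rescue it. Take $d_j=d_1j^{-\alpha}$ and $B=b_1w_1^{*}+\epsilon\,b_2w_2^{*}$. Then $\rank\bigl(B(d_1I-G_{\ctrl})B^{*}\bigr)=1$, yet $BG_{\ctrl}B^{*}\succeq d_2\,BB^{*}$, hence $\sigma_i^{\mathrm{eff}}\ge 2^{-\alpha/2}\sqrt{d_1}\,\sigma_i^{\mathrm{clas}}$, a ratio that does not tend to zero as $p\to\infty$.

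So your incoherence condition is not ``implicit in the regime''. It is an extra hypothesis the corollary genuinely needs, and with it your argument is a complete proof of the corrected statement. Your route differs from the paper's intended one: it replaces the unproved index-wise tail inheritance by a single L\"owner comparison, $P_{\mathrm{eff}}\preceq\bar d\,p^{o(1)}P$. This compares against the isotropic level $\bar d=\tr(G_{\ctrl})/p$ rather than $\lambda_{\max}(G_{\ctrl})$, after which the eigenvalue step of \Cref{lem:sharpening} applies verbatim. The payoff is a bound that holds for every index. Hence $r\ge r_{\mathrm{eff}}(\ctrl)$ is unnecessary, and the power law is used only to make $r_{\mathrm{eff}}/p=\Theta(1/p)$, which turns $p^{o(1)}$ into $(r_{\mathrm{eff}}/p)^{o(1)}$.

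Two small corrections. First, your closing plan to treat the leading indices separately is superfluous: indices $i\le r$ never enter $\sum_{i>r}$, and your L\"owner bound already covers all $i$. Its premise is also doubtful. A single-input Gramian $P_j$ is positive definite whenever $(A,Bw_j)$ is controllable, so a coherent heavy direction inflates all Hankel singular values, not just a few, as the $B=b\,w_1^{*}$ example shows.

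Second, you bound $\|P^{-1/2}P_jP^{-1/2}\|_2\le w_j^{*}B^{*}\bigl(\sum_k(A^{*})^kP^{-1}A^k\bigr)Bw_j$, and these quadratic forms sum over $j$ to $\tr(P^{-1}P)=N$. Their maximum is therefore at least $N/p$. The random-orientation argument you sketch, which controls operator norms through these traces, can deliver your condition only when $N=p^{o(1)}$ (for instance $N$ fixed, as in the paper's experiments). That restriction should be stated as part of the regime.
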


\begin{proof}
Under the power-law decay, $\tr(G_{\ctrl}) = \sum_j d_j \leq d_1 \zeta(\alpha)$ for $\alpha > 1$, whence $r_{\mathrm{eff}}(\ctrl) = \tr(G_{\ctrl})/d_1 \leq \zeta(\alpha)$. The sharpening factor then follows from an index-by-index refinement of \Cref{lem:sharpening} in the vein of \cite{stewart1990matrix}: for generic $A$ and $r \geq r_{\mathrm{eff}}$, the trailing HSVs $\sigma_{r+1}^{\mathrm{eff}}, \sigma_{r+2}^{\mathrm{eff}}, \dots$ inherit the tail of the $G_{\ctrl}$-spectrum, yielding the exponent $1/2 + o(1)$.
\end{proof}

\Cref{cor:transformer-reduction} makes the sharpening quantitative: when the control distribution has effective rank $r_{\mathrm{eff}} \ll p$, the certified reduction bound is smaller than the classical bound by a factor of order $\sqrt{r_{\mathrm{eff}}/p}$. For a Llama- or Gemma-scale transformer with $p = d \sim 10^{3}$--$10^{4}$ and empirical $r_{\mathrm{eff}} \sim 10^{1}$--$10^{2}$ --- we measure 10.8 of $4096$ on Qwen3-8B-Base --- the improvement factor is $\sqrt{10^{-2}} \approx 10^{-1}$ or better: an order-of-magnitude tightening of the certified truncation error.

\subsection{Remarks and interpretation}
\label{sec:reduction-remarks}

\begin{remark}[Interpretation of the input-weighted $\Hinf$-norm]
\label{rem:reduction-interpretation}
The quantity $\|\tilde G - \tilde G_r\|_{\Hinf}$ measures the maximum over frequencies $\omega$ of the operator norm of $(G - G_r)(e^{i\omega})\,G_{\ctrl}^{1/2}$. Interpreted stochastically, it bounds the worst-case output error when the input signal is drawn from a stationary process with per-time covariance $G_{\ctrl}$. The instance-dependent bound therefore quantifies the reduction error on the empirical input distribution, not on a worst-case adversarial signal.
\end{remark}

\begin{remark}[Empirical estimation of $G_{\ctrl}$]
\label{rem:reduction-empirical-Gu}
The theorem is stated with $G_{\ctrl}$ the population control covariance. In practice it is estimated by the empirical covariance $\hat\Sigma_M$ of \Cref{sec:dynamics-edmdc}. By \Cref{lem:gramian-concentration}, $\|\hat\Sigma_M - G_{\ctrl}\|_2 = O_p(\sqrt{N/M})$, so the estimated $\hat\sigma_i^{\mathrm{eff}}$ inherit the same rate as the spectrum in \Cref{thm:identifiability}. The certified bound remains meaningful in the finite-sample regime, with an additional bias term of order $\sqrt{N/M}$.
\end{remark}

\begin{remark}[Relation to frequency-weighted balanced truncation]
\label{rem:reduction-enns}
\cite{enns1984model} introduced frequency-weighted balanced truncation, in which input and output signals are pre-filtered before balancing. \Cref{thm:reduction} is a special case of that programme in which the weighting is a static right-multiplication by $G_{\ctrl}^{1/2}$, corresponding to white-noise inputs of covariance $G_{\ctrl}$. What we add is the explicit identification of $G_{\ctrl}$ with the empirical control covariance on a calibration corpus, together with the strictness statement tying the sharpening to the effective rank of that covariance --- a quantity we then measure.
\end{remark}

\begin{remark}[On the choice of norm]
\label{rem:reduction-norm}
The $\Hinf$-norm is the operator norm on $\ell^2$ inputs. Alternatives include the $\mathcal{H}_2$-norm and various weighted variants \citep{gugercin2004survey,antoulas2005approximation}; all admit balanced-truncation bounds of the same structural form $\mathrm{err} \leq 2\sum_{i>r}\sigma_i$, and the instance-dependent sharpening applies verbatim with the corresponding norm-specific HSVs. For interpretability applications where the input signal has known covariance structure but arbitrary temporal profile, the $\Hinf$ formulation is the natural choice.
\end{remark}

\section{Experiments}
\label{sec:experiments}

The theory makes predictions that can fail. This section runs them. \Cref{sec:exp-protocol} fixes the protocol; \Cref{sec:exp-rate} measures the convergence rate of \Cref{thm:identifiability} and diagnoses where it falls short; \Cref{sec:exp-circuits,sec:exp-transport} test whether the identified modes resolve a known circuit, and find that they do not, in a way \Cref{thm:dissociation} anticipates; \Cref{sec:exp-sae,sec:exp-penalty} test \Cref{cor:sae-non-identifiability} and its remedy; \Cref{sec:exp-universality} runs the universality criterion of \Cref{cor:universality} together with the control that invalidates its stated form. \Cref{sec:exp-conventions} measures the three modelling conventions of \Cref{sec:dynamics-conventions}. Every failure we found is reported, because each of them bounds the claim.

\subsection{Protocol}
\label{sec:exp-protocol}

We evaluate on the four publicly available pretrained transformers of \Cref{tab:models}. The suite spans two orders of magnitude in parameter count, three architecture families and three SAE nonlinearities, so \Cref{cor:sae-non-identifiability} is tested against ReLU, JumpReLU and TopK dictionaries. The three Pythia-160M checkpoints differ only in initialisation seed and are the control for the universality criterion.

\begin{table}[t]
\centering
\footnotesize
\caption{The model suite. $L$ layers, residual width $d$, $H$ attention heads. Each of the first three models has a public residual-stream SAE suite with a different nonlinearity.}
\label{tab:models}
\begin{tabular}{lrrrl}
\toprule
Model & $L$ & $d$ & $H$ & Public SAEs \\
\midrule
GPT-2 small                    & 12 & 768  & 12 & ReLU \\
Gemma-2-2B                     & 26 & 2304 & 8  & JumpReLU \\
Qwen3-8B-Base                  & 36 & 4096 & 32 & TopK (first-party) \\
Pythia-160M ($\times3$ seeds)  & 12 & 768  & 12 & - \\
\bottomrule
\end{tabular}
\end{table}

For each model and each analysed layer we cache the depth-recurrence triple $(\resid_\ell, \ctrl_\ell, \resid_{\ell+1})$ at a sampled set of token positions, with $\ctrl_\ell$ the attention-block output, exactly as \Cref{def:recurrence} prescribes. The calibration corpus is WikiText-103 (train split) throughout, tokenised at sequence length $128$; relative depths are $\ell/L\in\{0.25,0.5,0.75\}$ with mid-depth used for headline numbers.

We compare a \emph{spectral} dictionary combining whitened leading principal directions with random Fourier features \citep{williams2015kernel}; an \emph{SAE} dictionary given by the post-activation code $\sigma(W_{\mathrm{enc}}\resid + b_{\mathrm{enc}})$ of a public sparse autoencoder; and a \emph{random} orthonormal control from the QR factorisation of a Gaussian matrix.

Three design choices could have driven the outcome and are therefore stated here rather than buried.
\begin{itemize}[leftmargin=1.6em,itemsep=2pt]
\item \emph{The nonlinear block must be genuinely nonlinear.} A linear dictionary cannot be $\opK$-invariant for nonlinear $F$, so (DR2) would fail by construction. Principal directions of RMSNorm-normalised states do \emph{not} qualify, having canonical correlations above $0.99$ with the linear block on GPT-2.
\item \emph{Every dictionary is whitened.} This is a change of basis, so $\sigma(A)$ is unchanged by \Cref{thm:existence}(b), but every bound divides by $\eta_{\obs} = \lambda_{\min}(G_{\obs})$, and unwhitened we measure $\eta_{\obs}\approx 7\times10^{-3}$ against $0.68$--$0.84$ after whitening. Comparing dictionaries at matched $N$ but unmatched conditioning measures the conditioning, not the invariance; \Cref{sec:exp-sae} shows the ordering inverts without this control.
\item \emph{Controls are projected onto $p = 64$ leading principal directions.} Taking $p = d$ gives Qwen3-8B-Base about $1.5$ samples per parameter, at which point the fit interpolates and the spectrum estimates nothing.
\end{itemize}
That projection is the regime \Cref{thm:reduction} assumes, and it is confirmed here: the attention-write covariance of Qwen3-8B-Base at the analysis layer has effective rank $\tr(G_\ctrl)/\lambda_{\max}(G_\ctrl) = 10.8$ against $4096$ ambient dimensions - direct empirical support for the low-effective-rank premise of \Cref{cor:transformer-reduction}, measured on the largest model in the suite.

The random-Fourier block of the spectral dictionary has its bandwidth fixed at a value appropriate for $d \leq 2304$ and was deliberately not retuned for $d = 4096$. It becomes unstable in the deepest Qwen3-8B-Base layers, which affects the split-half stability estimate at layer~26 (\Cref{sec:exp-sae}) but not the convergence analysis, which is run at layer~\ResQwenMidLayer{} where the construction is well behaved. We did not tune it, because selecting a dictionary hyperparameter per model until the predicted exponent appears is precisely the analyst-dependence that \Cref{cor:sae-non-identifiability} identifies as the defect of SAE dictionaries: a result obtained that way would illustrate the problem this paper is about rather than support its claim. A principled alternative, which we set out but do not exercise, is to select the bandwidth on a \emph{held-out layer} using the invariance residual as the criterion, so that selection and evaluation use different quantities on different data. Guarding such a search against degenerate solutions is essential: as the bandwidth vanishes the Fourier features approach constants, a near-constant dictionary is fitted perfectly by $A \approx I$, and $\hat\varepsilon_{\mathrm{rel}}$ collapses to zero while the dictionary encodes nothing.

\subsection{Convergence and the pre-asymptotic regime}
\label{sec:exp-rate}

There is no analytic ground-truth $A$ on a pretrained model, so we measure \emph{self-consistency}: split $M$ samples into disjoint halves, fit $\hat A^{(1)}$ and $\hat A^{(2)}$, and record the Hungarian-matched spectral distance. This statistic inherits the rate of \Cref{thm:identifiability} - it is exactly the left-hand side of the gap-free bound \eqref{eq:elsner} - so the log-log slope estimates the exponent without ground truth.

Two protocol choices matter and are not cosmetic. Halves are drawn from \emph{disjoint source sequences}, since random row-splitting lets one document appear on both sides and biases the distance downward, worst at large $M$. And all uncertainty is a sequence-level block bootstrap. We report $M$ as row count throughout.

\begin{table*}[t]
\centering
\small
\caption{Split-half spectral convergence versus $M$ at $N=32$, mid-depth analysis layer. Qwen3-8B-Base attains the predicted $-1/2$ exponent \emph{below} the gap-free threshold \eqref{eq:M0-eig}; GPT-2's random dictionary does not attain it despite exceeding its own threshold threefold. Error bars are sequence-level block bootstraps.}
\label{tab:rate}
\setlength{\tabcolsep}{6pt}
\begin{tabular}{llrrrr}
\toprule
Model & Dict. & Slope & $\kappa_0$ & $\Delta$ & $M_{\max}/M_0^{\mathrm{eig}}$ \\
\midrule
\multirow{2}{*}{GPT-2 small}
 & Spectral & $\ResGptSpecSlope \pm \ResGptSpecSlopeSE$ & $\ResGptSpecKappa$ & $\ResGptSpecGap$ & $\ResGptSpecMRatio$ \\
 & Random   & $\ResGptRandSlope \pm \ResGptRandSlopeSE$ & $\ResGptRandKappa$ & $\ResGptRandGap$ & $\ResGptRandMRatio$ \\
\midrule
\multirow{2}{*}{Gemma-2-2B}
 & Spectral & $\ResGemmaSpecSlope \pm \ResGemmaSpecSlopeSE$ & $\ResGemmaSpecKappa$ & $\ResGemmaSpecGap$ & $\ResGemmaSpecMRatio$ \\
 & Random   & $\ResGemmaRandSlope \pm \ResGemmaRandSlopeSE$ & $\ResGemmaRandKappa$ & $\ResGemmaRandGap$ & $\ResGemmaRandMRatio$ \\
\midrule
\multirow{2}{*}{Qwen3-8B-Base}
 & Spectral & $\mathbf{\ResQwenSpecSlope \pm \ResQwenSpecSlopeSE}$ & $\ResQwenSpecKappa$ & $\ResQwenSpecGap$ & $\ResQwenSpecMRatio$ \\
 & Random   & $\ResQwenRandSlope \pm \ResQwenRandSlopeSE$ & $\ResQwenRandKappa$ & $\ResQwenRandGap$ & $\ResQwenRandMRatio$ \\
\bottomrule
\end{tabular}
\end{table*}

\begin{figure*}[t]
\centering
\includegraphics[width=\textwidth]{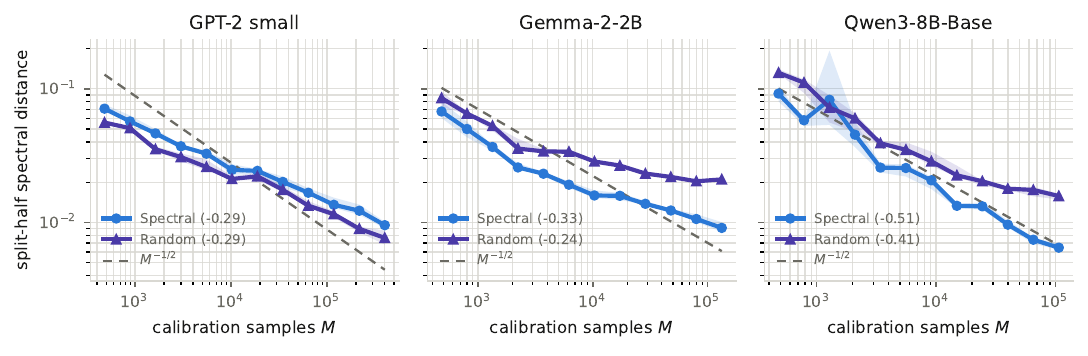}
\caption{Split-half spectral distance against $M$, log-log, at $N=32$ and mid-depth. The distance falls monotonically on every model: the spectrum \emph{is} being recovered. Qwen3-8B-Base attains the predicted $M^{-1/2}$ rate ($\ResQwenSpecSlope \pm \ResQwenSpecSlopeSE$); GPT-2 small is shallower ($\ResGptSpecSlope$). The dashed line is the $M^{-1/2}$ reference. Shaded bands are sequence-level block bootstraps.}
\label{fig:rate}
\end{figure*}

\paragraph{The rate on Qwen3-8B-Base, and two caveats.}
The distance falls monotonically with $M$ on all three models (\Cref{fig:rate}). On GPT-2 small at $M_{\max} = \ResGptSpecMmax$ the exponents are $\ResGptSpecSlope \pm \ResGptSpecSlopeSE$ (spectral) and $\ResGptRandSlope \pm \ResGptRandSlopeSE$ (random). On Qwen3-8B-Base at layer~\ResQwenMidLayer{} the spectral dictionary converges at $\ResQwenSpecSlope \pm \ResQwenSpecSlopeSE$ - the value \Cref{thm:identifiability} predicts, within one standard error. To our knowledge this is the first observation of the parametric rate for a Koopman spectrum on a pretrained transformer.

We hold ourselves to two caveats. First, that is the exponent of the \emph{maximum} matched error, which is the quantity \eqref{eq:ident-eigenvalue} bounds, and on this model the robust summaries disagree: $\ResQwenSpecSlopeMedian$ (median) and $\ResQwenSpecSlopeMean$ (mean). On GPT-2 small the three agree closely ($\ResGptSpecSlopeMax$, $\ResGptSpecSlopeMedian$, $\ResGptSpecSlopeMean$), so the divergence indicates that Qwen's maximum is driven by a subset of eigenvalues converging faster than the bulk, not that the whole spectrum attains $-1/2$. We report the theorem's own statistic as primary and do not average the others away. Second, Qwen reaches the rate \emph{without} crossing its gap-free threshold ($M_{\max}/M_0^{\mathrm{eig}} = \ResQwenSpecMRatio$), while GPT-2's random dictionary crosses its own by a factor of $\ResGptRandMRatio$ and returns only $\ResGptRandSlope$. Crossing $M_0^{\mathrm{eig}}$ is neither necessary nor sufficient at these sample sizes. What the split \eqref{eq:M0-eig}--\eqref{eq:M0-vec} buys is testability - nine orders of magnitude on GPT-2 small, from $M_0^{\mathrm{vec}} = \ResGptSpecMZeroVec$ to $M_0^{\mathrm{eig}} = \ResGptSpecMZero$ - not the asymptotic exponent itself.

\paragraph{Dictionary size: a partially confirmed prediction.}
The gap-free bound \eqref{eq:gap-free-rate} predicts that the optimal-matching distance degrades with $N$ through its $(2N-1)$ factor, and it does. At $N \in \{8,16,32,64,128\}$ on GPT-2 small the exponents are $\ResGptSlopeNEight$, $\ResGptSlopeNSixteen$, $\ResGptSlopeNThirtytwo$, $\ResGptSlopeNSixtyfour$, $\ResGptSlopeNOnetwentyeight$ - closest to $-1/2$ at the smallest dictionary - and the level grows as $N^{\ResGptNScaling}$. Gemma-2-2B behaves the same way ($\ResGemmaSlopeNEight \to \ResGemmaSlopeNOnetwentyeight$, level $N^{\ResGemmaNScaling}$). The direction is confirmed and the bound is loose: the worst-case Elsner constant predicts an exponent of $+1$ for the level. A prediction that is real, predicted, and smaller than allowed.

\paragraph{Four candidate explanations for the shortfall, and what survives.}
On GPT-2 small and Gemma-2-2B the exponent sits well above $-1/2$. We tested four explanations and excluded three.

\emph{(i) The reduction choice.} Max, median and mean matched errors agree closely on GPT-2 small, so the shortfall is not an artefact of reporting the maximum.

\emph{(ii) Within-document correlation.} Varying the number of rows harvested per document across $\{1,2,4,8\}$ leaves the exponent unchanged ($\ResGptSlopePerSeqOne$, $\ResGptSlopePerSeqTwo$, $\ResGptSlopePerSeqFour$, $\ResGptSlopePerSeqEight$), ruling out sample dependence within a sequence as the explanation.

\emph{(iii) Heavy tails.} \Cref{thm:robust} nominates heavy tails as a candidate - residual streams are documented to carry them \citep{dettmers2022llmint8} - and names the deciding experiment. We refitted $\ResMomCells$ (model, layer, dictionary) cells with median-of-means Gramians over $K = \ResMomK$ blocks, paired trial-for-trial against the plain estimator on identical sequence-disjoint halves so that only the second moment differs. The two agree to within $\ResMomAgreePct\%$ in every cell at every $M$: on GPT-2 small at layer~\ResGptMidLayer{}, $\ResMomMomSlope \pm \ResMomMomSlopeSE$ against $\ResMomPlainSlope \pm \ResMomPlainSlopeSE$ over the $\ResMomNPoints$ grid points where a block holds enough rows for an $(N+p)$-square Gramian, with median ratio $\ResMomRatio$; block counts $\ResMomKSweepList$ span only $\ResMomKSweepSpread\%$ (\Cref{fig:mom}(a,b)).

The reason is that the antecedent of \Cref{thm:robust} mostly fails, and the measurement locates why: it is the \emph{lifted} state, not the residual stream, that (R3) constrains. The residual stream itself is mildly heavy-tailed (Hill index $\hat\alpha=\ResTailHillRaw$ against a Gaussian null of $\ResTailHillNull$), but the random-Fourier block is bounded ($\hat\alpha=\ResTailHillRff$, off scale) and the whitened lifted state the estimator actually sees is indistinguishable from the null ($\hat\alpha=\ResTailHillPsi$, excess kurtosis $\ResTailKurtPsi$); see \Cref{fig:mom}(c). The lifting removes the tails. The one exception - Qwen3-8B-Base at layer~\ResQwenMidLayer{}, excess kurtosis $\ResMomWorstKurt$ - is also the one cell where median-of-means beats the plain estimator ($\ResMomWorstMom$ against $\ResMomWorstPlain$), consistent with \Cref{thm:robust} though modestly. On synthetic data with genuinely heavy design the separation appears as predicted (contaminated design: $\ResSynthContamMom$ for median-of-means against $\ResSynthContamPlain$ plain), confirming that the null result on real data is a property of the dictionaries and not of the implementation.

\begin{figure*}[t]
\centering
\includegraphics[width=\textwidth]{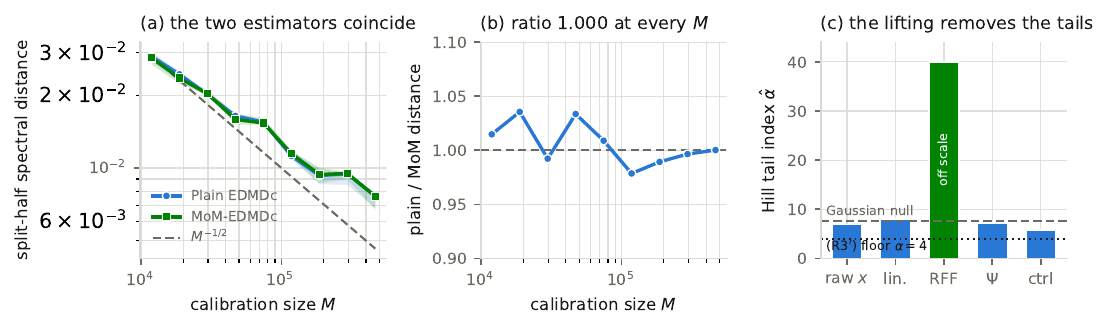}
\caption{\textbf{The robust estimator changes almost nothing, and the tail diagnostics say why.} \textbf{(a)} Plain against median-of-means EDMDc on identical sequence-disjoint halves, GPT-2 small layer~\ResGptMidLayer{}: the two curves coincide at every $M$. \textbf{(b)} Their ratio, $\ResMomRatio$ at the median and within $[\ResMomRatioLo, \ResMomRatioHi]$ throughout. \textbf{(c)} Hill tail index $\hat\alpha$ along the lifting chain: the raw residual stream is mildly heavy, the linear block similar, the bounded random-Fourier block has effectively no tail, and the lifted state $\obs$ and the control sit at the Gaussian null. Condition (R3) holds for the object it constrains, so \Cref{thm:robust} has nothing to repair here.}
\label{fig:mom}
\end{figure*}

\emph{(iv) The finite-sample constant - what survives.} The exponent is not a function of $M$ alone but of $M/M_0^{\mathrm{eig}}(N)$, the sample size relative to each cell's own threshold. Across $\ResCollapseCells$ (model, dictionary, $N$) cells contributing $\ResCollapseWindows$ rolling-window exponents, the fitted exponent decreases monotonically with this ratio (Spearman $\rho=\ResCollapseRho$, $p\,\ResCollapseP$), with binned medians falling from $\ResCollapseBelow$ well below threshold to $\ResCollapseFar$ at the largest ratios and crossing $-1/2$ past the threshold (\Cref{fig:collapse}(a)). This is why Qwen reaches $M^{-1/2}$ below its threshold while GPT-2's random dictionary does not above its own: the ratio, not the crossing, is what orders the cells. The same collapse accounts for most of the $N$-dependence in \Cref{fig:collapse}(b), since $M_0^{\mathrm{eig}}$ grows with $N$.

Two limits of this diagnosis, stated rather than left implicit: the two Qwen3 series run \emph{against} the collapse, growing steeper with $N$ where it predicts flattening, and are the residual it does not explain; and $M_0^{\mathrm{eig}}$'s unknown constant $c_0$ fixes the horizontal axis only up to a common shift, so the collapse shows that the exponents are one curve, not where the threshold sits on it.

\begin{figure*}[t]
\centering
\includegraphics[width=\textwidth]{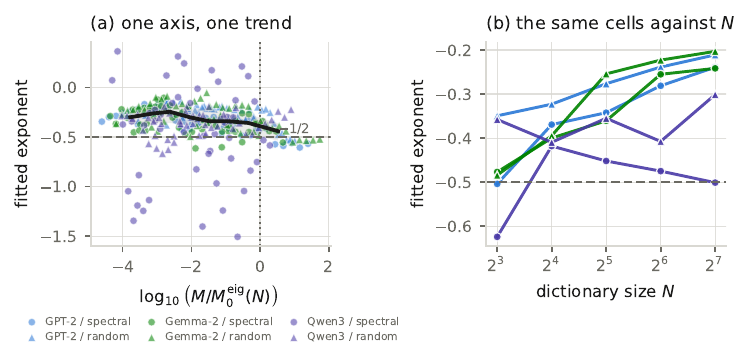}
\caption{\textbf{Most of the scatter in the convergence exponent is one curve.} \textbf{(a)} Each of $\ResCollapseCells$ (model, dictionary, $N$) cells contributes rolling exponents, plotted against $\log_{10}(M/M_0^{\mathrm{eig}}(N))$ - its own threshold rather than raw $M$. Points are individual windows; the heavy line joins equal-count binned medians, which fall from $\ResCollapseBelow$ to $\ResCollapseFar$ and cross $-1/2$ past the threshold (Spearman $\rho=\ResCollapseRho$, $p\,\ResCollapseP$). \textbf{(b)} The same cells against $N$. Four of the six series degrade monotonically, which (a) accounts for since $M_0^{\mathrm{eig}}$ grows with $N$; the two Qwen3 series do not, and are the residual the collapse does not explain.}
\label{fig:collapse}
\end{figure*}

\begin{figure*}[t]
\centering
\includegraphics[width=\textwidth]{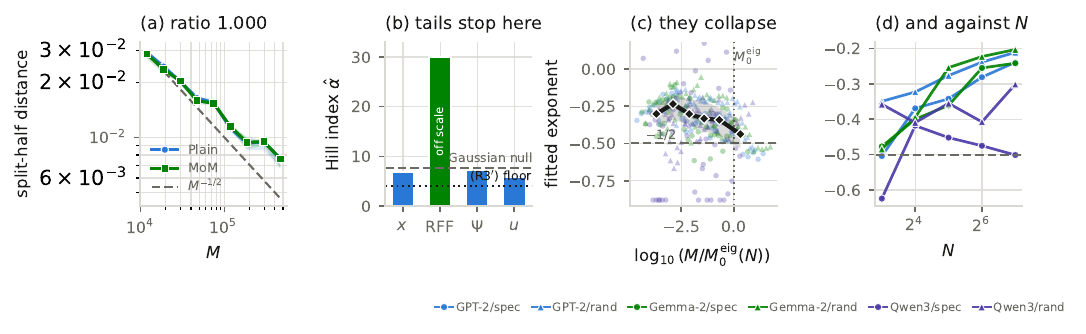}
\caption{\textbf{Diagnosing the exponent, all four panels.} \textbf{(a)} Plain against median-of-means EDMDc on identical sequence-disjoint halves, GPT-2 small layer~\ResGptMidLayer{}: the two coincide at every $M$ (median ratio $\ResMomRatio$), as they do in seven of the $\ResMomCells$ cells measured. \textbf{(b)} Why: the Hill tail index along the lifting chain, showing that (R3) holds for the lifted state even though the raw residual stream is mildly heavy. \textbf{(c)} The threshold collapse of \Cref{fig:collapse}(a). \textbf{(d)} The same cells against dictionary size $N$.}
\label{fig:rate-diag}
\end{figure*}

\paragraph{Reading the result.}
On the largest model the theorem's own statistic attains the predicted exponent. On GPT-2 small at $N=32$ it sits near $\ResGptSpecSlope$, with three candidate explanations excluded and the fourth - the finite-sample constant - supported by a monotone collapse. The spectrum converges on every model; the predicted rate is observed on Qwen3-8B-Base for the quantity the theorem bounds; and the threshold governing that guarantee is nine orders of magnitude smaller than the unsplit statement suggested.

\subsection{Koopman modes and the IOI circuit}
\label{sec:exp-circuits}

\Cref{thm:identifiability} guarantees recovery of the Koopman spectrum but says nothing about its semantic meaning. We therefore evaluate the strongest available interpretation against the best-characterised transformer circuit: indirect-object identification (IOI) in GPT-2 small \citep{wang2023ioi}. Modes live in observable space, so this test requires the fitted linear read-out back into residual-stream coordinates whose construction and measured quality are given in \Cref{sec:exp-readout}; its median relative reconstruction error on held-out states is $\ResIoiReadout$, which is the precondition for any of the negative results below to mean anything.

All measurements are at layer~\ResIoiLayer{} with the full $N = \ResIoiN$ basis, over $\ResIoiPrompts$ prompts balanced across the $15$ templates of \cite{wang2023ioi}, against a baseline logit difference of $\ResIoiBaseLD$.

\emph{First, modes do not concentrate on the known head subspaces.} No head class reaches an alignment of $0.8$ with the modal basis: greedy selection exhausts its budget at $\ResIoiNameMoverK/\ResIoiN$ modes for name movers, reaching only $\ResIoiNameMoverAlign$ against a random-subspace null of $\ResIoiNameMoverNull$. S-inhibition heads reach $\ResIoiSInhibAlign$ and induction heads $\ResIoiInductionAlign$ against the same null. The excess over chance is real but small, and three-quarters of the basis is not a circuit in any useful sense (\Cref{fig:circuits}(b)).

\emph{Second, ablating modes moves behaviour less than ablating principal directions.} Removing the span of the top-$j$ attributing modes at layer~\ResIoiLayer{} moves behaviour far more than $j$ random directions, which never exceed $\ResIoiAblRandMax\%$, but is dominated at every $j$ by PCA: at $j=8$, principal directions remove $\ResIoiAblPcaEight\%$ of the logit difference against the modes' $\ResIoiAblKoopEight\%$, and PCA's advantage persists to $j=32$ ($\ResIoiAblPcaMax\%$ against $\ResIoiAblKoopMax\%$). The modes beat random decisively but are a worse handle on this behaviour than principal components (\Cref{fig:circuits}(a)).

\emph{Third, and sharpest, the selected mode set is not itself reproducible.} Comparing the subspace spanned by the top-16 attributing modes across independent calibration draws - a comparison invariant to the permutation \Cref{thm:identifiability} quotients by - gives a median agreement of only $\ResIoiStability$. The spectrum converges (\Cref{sec:exp-rate}) but the identity of ``the circuit'' one would nominate does not, at these sample sizes. Identifiability of the eigenvalue multiset does not confer identifiability of a selected sub-collection, and we do not claim that it does.

\begin{figure*}[t]
\centering
\includegraphics[width=\textwidth]{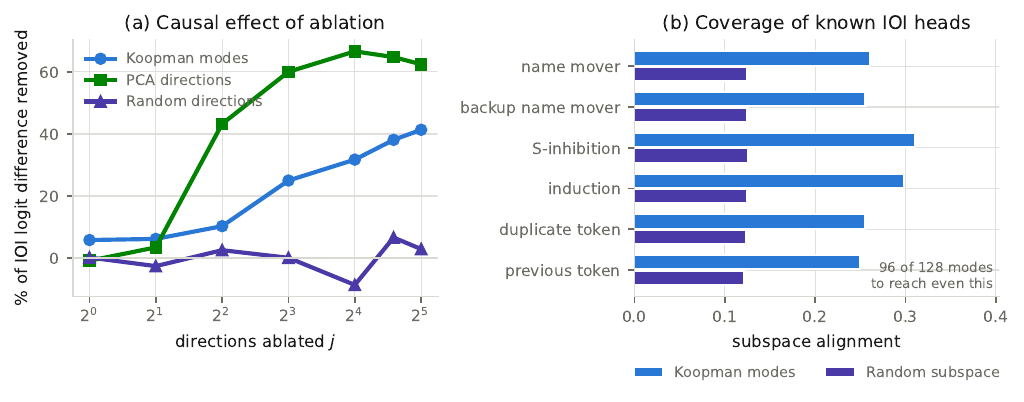}
\caption{\textbf{Koopman modes carry behaviourally relevant structure but do not resolve the IOI circuit.} \textbf{(a)} Ablating the top-$j$ modes at layer~\ResIoiLayer{} moves the IOI logit difference more than $j$ random directions but less than $j$ principal directions: at $j=8$, PCA removes $\ResIoiAblPcaEight\%$ against the modes' $\ResIoiAblKoopEight\%$. \textbf{(b)} No head class reaches the $0.8$ alignment target; greedy selection exhausts its budget at $\ResIoiNameMoverK/\ResIoiN$ modes, reaching about twice the random-subspace null ($\ResIoiNameMoverAlign$ against $\ResIoiNameMoverNull$ for name movers).}
\label{fig:circuits}
\end{figure*}

The measured eigenvector conditioning places these fits firmly in the non-normal regime that \Cref{thm:dissociation} concerns: $\kappa_2(\hat V) = \ResGptKappaV$ on GPT-2 small, $\ResGemmaKappaV$ on Gemma-2-2B and $\ResQwenKappaV$ on Qwen3-8B-Base. By \Cref{prop:dissociation-quant} and \Cref{rem:dissociation-numeric}, misalignment at these conditionings is already near-saturated. The Koopman spectrum captures behaviourally relevant structure but does not decompose computation into human-legible mechanisms: it is an identifiable coarse invariant of transformer depth dynamics.

\subsection{Transport versus encoding}
\label{sec:exp-transport}

If the two bases differ because one indexes \emph{encoding} and the other \emph{transport}, then PCA's margin should shrink with how far the question travels in depth. We test this directly: predict $\resid_{\ell+k}$ from a rank-$j$ linear read of $\resid_\ell$ - same $j$ directions, same source state, same unconstrained map onto the target - and sweep the depth gap $k$.

The design is chosen so that the margin cannot shrink for trivial reasons. The read-out does not saturate and has no privileged basis: PCA is optimal for reconstructing $\resid_\ell$ itself, which stops being the target once $k>0$. Each cell is normalised between the optimal rank-$j$ predictor ($0$) and a random subspace ($1$), so the margin cannot shrink merely because prediction gets harder with $k$.

Pooled over $\ResTransLayers$ layers and $j\in\{8,\dots,64\}$ ($\ResTransCells$ cells, $\ResTransPrompts$ prompts), PCA's advantage falls from $\ResTransGapOne$ at $k=1$ to $\ResTransGapFar$ at $k=\ResTransKFar$, a $\ResTransDecay\times$ decay, and it does so monotonically at every layer separately (raw ratio $\ResTransRatioOne\times \to \ResTransRatioFar\times$; \Cref{fig:transport}). PCA wins the question it is optimal for, by a margin that shrinks with depth-distance and is essentially gone by $k\approx8$.

Stated plainly: the advantage decays to parity, it does not reverse. We have not shown that Koopman modes are the better transport basis, only that PCA stops being one. Two causal tests of a reversal do not discriminate on this testbed. Projecting out the mode at every layer $\ell'\geq\ell$ so that it cannot carry forward leaves the ordering unchanged at the analysis layer ($\ResTransPropKoop\%$ against $\ResTransPropPca\%$ removed at $j=32$, versus $\ResTransLocKoop\%$/$\ResTransLocPca\%$ locally); at earlier layers it removes the whole effect regardless of basis, and the logit-lens readout at $\ell+k$ saturates the same way. Twelve layers is not enough depth to separate $k$ from $\ell$ under a behavioural readout - a limit of this testbed, not of the bases, which the rank-$j$ prediction avoids precisely by having no readout to saturate.

\begin{figure*}[t]
\centering
\includegraphics[width=\textwidth]{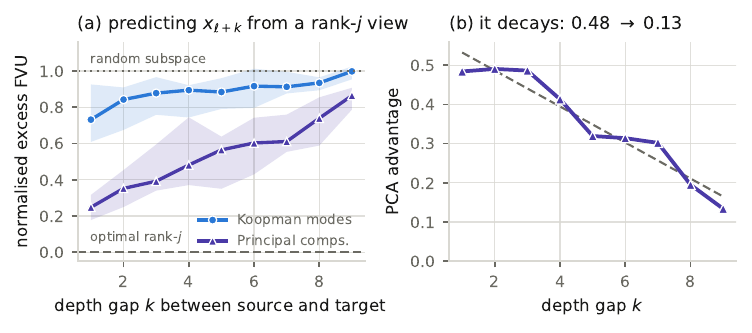}
\caption{\textbf{The PCA advantage is local and decays with depth.} Predicting $\resid_{\ell+k}$ from a rank-$j$ read of $\resid_\ell$ ($\ResTransLayers$ layers, $j\in\{8,\dots,64\}$, $\ResTransPrompts$ prompts), normalised between the optimal rank-$j$ predictor ($0$) and a random subspace ($1$). \textbf{(a)} Normalised excess fraction of variance unexplained against the depth gap $k$. \textbf{(b)} The PCA--Koopman gap falls from $\ResTransGapOne$ at $k=1$ to $\ResTransGapFar$ at $k=\ResTransKFar$, a $\ResTransDecay\times$ decay to parity.}
\label{fig:transport}
\end{figure*}

\subsection{The SAE invariance gap, and the selection rule it is conditional on}
\label{sec:exp-sae}

\Cref{cor:sae-non-identifiability} rests on a measurable claim: dictionaries failing (DR2) carry a large projection residual. We report the relative invariance residual
\begin{equation}
\label{eq:exp-residual}
    \hat\varepsilon_{\mathrm{rel}}
    \;=\;
    \frac{\bigl\|Y - \hat A_M X - \hat B_M \Xi\bigr\|_{F}}{\|Y\|_{F}},
\end{equation}
the empirical counterpart of the projection residual in \Cref{def:realisation}, which vanishes exactly when (DR2) holds.

\begin{table}[t]
\centering
\small
\caption{\textbf{SAE dictionaries sit furthest from Koopman invariance on every model - further than a random orthonormal control.} Relative invariance residual \eqref{eq:exp-residual} at matched $N = 32$, mid-depth analysis layer, all dictionaries whitened (without which the ordering inverts; see text). Lower is closer to satisfying (DR2). Conditional on the selection rule: features are chosen by activation frequency throughout, and the ordering \emph{reverses} under variance-based selection ($\ResSelVarianceRatio\times$ rather than $\ResSelFreqRatio\times$ the spectral residual).}
\label{tab:sae}
\begin{tabular}{lrrrr}
\toprule
Model & Spectral & Random & SAE & SAE/Spec. \\
\midrule
GPT-2 small   & $\ResGptEpsSpec$   & $\ResGptEpsRand$   & $\mathbf{\ResGptEpsSae}$   & $\ResGptSaeRatio\times$ \\
Gemma-2-2B    & $\ResGemmaEpsSpec$ & $\ResGemmaEpsRand$ & $\mathbf{\ResGemmaEpsSae}$ & $\ResGemmaSaeRatio\times$ \\
Qwen3-8B-Base & $\ResQwenEpsSpec$  & $\ResQwenEpsRand$  & $\mathbf{\ResQwenEpsSae}$  & $\ResQwenSaeRatio\times$ \\
\bottomrule
\end{tabular}
\end{table}

\Cref{tab:sae} shows the predicted ordering on every model, strengthening with scale ($\ResGptSaeRatio\times$, $\ResGemmaSaeRatio\times$, $\ResQwenSaeRatio\times$ the spectral residual), across three SAE families - ReLU \citep{bricken2023monosemanticity}, JumpReLU \citep{lieberum2024gemma}, TopK \citep{gao2024scaling} - and at every layer examined (\Cref{fig:sae}(a)). That SAE dictionaries sit \emph{further} from invariance than a random orthonormal basis, which has no mechanism by which it could be $\opK$-invariant, suggests the SAE objective does not merely fail to enforce (DR2) but selects against it: sparsity concentrates each feature on few inputs, while closure requires the span to absorb where those inputs are carried next. That is a stronger reading than the corollary asserts, and we flag it as such.

\paragraph{Instability, and one reversal.}
The residual measures the condition that fails, not the failure. \Cref{fig:sae}(b) measures the failure: spectra obtained through SAE dictionaries are less reproducible across disjoint calibration draws than spectral ones in $\ResStabNSaeWorse$ of $\ResStabNLayers$ layers, by factors of $\ResStabRatioLo$ to $\ResStabRatioHi$. This is the seed-to-seed variability the SAE literature reports \citep{karvonen2024saebench,braun2024identifying}, seen here on a \emph{fixed} dictionary with only the calibration sample varying - so it cannot be optimisation noise.

The ninth layer is worth stating plainly. At layer~26 of Qwen3-8B-Base the ordering reverses, and it does so because the spectral dictionary degrades rather than because the SAE improves: its instability is $\ResStabOutlierSpec$ there against $\ResStabSpecLo$--$\ResStabSpecHi$ everywhere else, while the SAE sits at $\ResStabOutlierSae$, in line with its own range. This is the deepest layer of the largest model, and it is the same breakdown of the random-Fourier construction at $d = 4096$ noted in \Cref{sec:exp-protocol}; the convergence measurement is taken at layer~\ResQwenMidLayer{}, where the construction is well behaved. The invariance residual, which does not depend on the stability estimate, keeps the predicted ordering at this layer as at all others.

\paragraph{A confound that inverts the result.}
Dictionaries must be compared at matched conditioning, not merely matched $N$. An SAE encodes mostly zeros at $N = 32$ - activation rate $0.089$ for GPT-2 against $\ResGptActRateSpec$ and $\ResGptActRateRand$ for the spectral and random dictionaries - and a mostly-zero target is trivially predictable, so $\hat\varepsilon_{\mathrm{rel}}$ would reward sparsity rather than invariance. Measured without this control, GPT-2's SAE residual is $0.287$, \emph{below} both the spectral ($0.301$) and random ($0.341$) dictionaries - apparently contradicting \Cref{cor:sae-non-identifiability}. Whitening removes the artefact: a change of basis leaves $\sigma(A)$ unchanged (\Cref{thm:existence}(b)) while equalising $\eta_{\obs}$ across families ($\ResGptEtaObsSae$--$\ResGptEtaObsRand$ on GPT-2 after whitening, against $7.4\times10^{-3}$ unwhitened). All figures in \Cref{tab:sae} are post-whitening; we report the uncorrected numbers because a reader reproducing this without the control will obtain them.

\begin{figure*}[t]
\centering
\includegraphics[width=\textwidth]{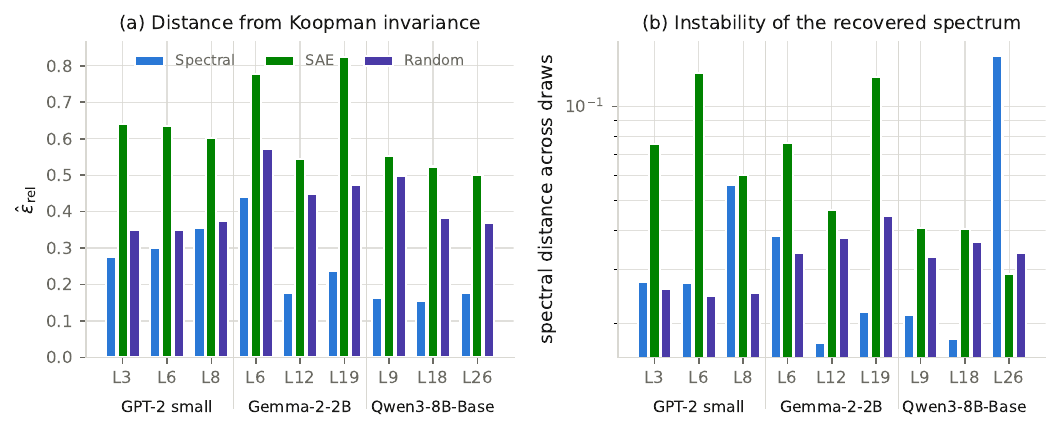}
\caption{\textbf{The (DR2) gap and the non-identifiability it produces.} \textbf{(a)} Distance from Koopman invariance, $\hat\varepsilon_{\mathrm{rel}}$ of \eqref{eq:exp-residual}, at matched $N=32$ with all dictionaries whitened, at three layers per model. SAE dictionaries sit furthest from (DR2) on every model and layer - further than a random orthonormal basis. \textbf{(b)} The consequence \Cref{cor:sae-non-identifiability} is about: matched spectral distance between realisations fitted on disjoint calibration draws (log scale; lower is more identifiable). Spectra from SAE dictionaries move more between draws in $\ResStabNSaeWorse$ of $\ResStabNLayers$ layers; the exception is Qwen3-8B-Base layer~26, where the \emph{spectral} dictionary is itself unstable.}
\label{fig:sae}
\end{figure*}

\paragraph{The pre-registered criterion fails, and the reason is the selection rule.}
A public SAE is $16$k--$65$k wide and this comparison runs at $N = 32$, so some rule must select roughly one latent in a thousand. Before running the sweep we registered the criterion that the ordering $\hat\varepsilon_{\mathrm{rel}}(\text{SAE}) > \hat\varepsilon_{\mathrm{rel}}(\text{Random}) > \hat\varepsilon_{\mathrm{rel}}(\text{Spectral})$ should hold with non-overlapping interquartile ranges in at least $80\%$ of (model, layer, $N$, rule) cells. It does not: the criterion is met in $\ResSelPct\%$ of $\ResSelCells$ cells.

The cause is the selection rule, and the effect is one of sign, not degree. Ranking latents by activation frequency (the rule behind \Cref{tab:sae}) or drawing uniformly among live latents puts the SAE above the spectral dictionary in every cell, by median factors of $\ResSelFreqRatio$ and $\ResSelRandAliveRatio$; ranking by mean activation magnitude does so in $12$ of $15$ cells ($\ResSelMagnitudeRatio$). But ranking by activation \emph{variance}, or greedily for variance explained, reverses the ordering in every cell, at $\ResSelVarianceRatio$ and $\ResSelGreedyVarRatio$ times the spectral residual. Across widths $N\in\{8,\dots,128\}$ and three layers, the sign tracks the selection rule and nothing else (\Cref{fig:selection}).

The reading is therefore narrower than \Cref{tab:sae} alone would support. The (DR2) gap is a real property of the subset a practitioner is most likely to read - the features that fire often - robust across models, layers, widths and all four encoder conventions (\Cref{sec:exp-encoder-variants}) for that subset. It is \emph{not} a property of the SAE's span as such: a variance-optimal subset is closer to Koopman invariance than our spectral construction, which is unsurprising in hindsight, since selecting for variance explained recovers something close to a principal-component basis and the spectral dictionary's linear block is exactly that. \Cref{cor:sae-non-identifiability} concerns what the objective fails to enforce and remains correct; what these measurements add is that the failure is unevenly distributed, so any claim about ``the'' invariance residual of an SAE must state its selection rule. Ours is activation frequency throughout.

\begin{figure*}[t]
\centering
\includegraphics[width=0.8\textwidth]{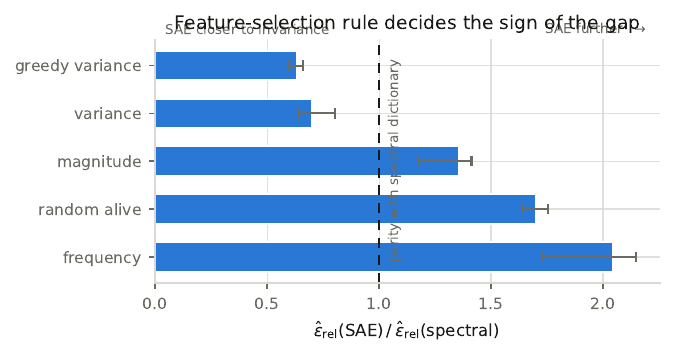}
\caption{The whole effect is conditional on the selection rule. Ratio of SAE to spectral invariance residual as a function of how $N$ features are selected from a $16$k--$65$k-wide SAE, pooled over three models, three layers and five widths; bars are medians with interquartile ranges and the dashed line is parity. Selecting the most frequently active latents - the rule behind \Cref{tab:sae} - puts the SAE at $\ResSelFreqRatio\times$ the spectral residual; selecting for activation \emph{variance} puts it at $\ResSelVarianceRatio\times$, reversing the ordering the corollary predicts.}
\label{fig:selection}
\end{figure*}

\subsection{Enforcing invariance: the penalty sweep}
\label{sec:exp-penalty}

\Cref{cor:sae-non-identifiability} makes a causal claim: SAE non-identifiability is \emph{caused} by the absence of a Koopman-invariance constraint. \Cref{sec:exp-sae} measures the correlate; here we add the invariance penalty of \eqref{eq:kinvariant-sae} and ask whether the outcome moves.

We train SAEs on cached GPT-2 layer-8 transitions at width $4d = 3072$, sweeping $\gamma \in \{0,10^{-3},10^{-2},10^{-1},1,10\}$ with three seeds each, screening every run for the penalty's degenerate minimisers and reporting excluded runs as excluded. TopK is primary because it fixes $L_0$ by construction: sweeping $\gamma$ at fixed sparsity is what makes any movement attributable to the invariance term rather than to the dictionary quietly becoming denser.

Raising $\gamma$ from $0$ to $\ResGammaTopkBest$ reduces the invariance residual from $\ResGammaTopkBaseEps$ to $\ResGammaTopkBestEps$ ($\ResGammaTopkEpsPct\%$), confirming that the penalty does what it is designed to do, and reduces the split-half spectral distance - identifiability itself, not a proxy for it - from $\ResGammaTopkBaseSplit$ to $\ResGammaTopkBestSplit$ ($\ResGammaTopkSplitPct\%$) at matched $L_0 = 32$. Intervening on the mechanism moves the outcome it predicts, which is the strongest support the corollary admits.

The cost appears in the same runs: reconstruction degrades (FVU $\ResGammaTopkBaseFvu \to \ResGammaTopkBestFvu$) and the live-feature fraction falls from $\ResGammaTopkBaseAlive$ to $\ResGammaTopkBestAlive$; at $\gamma = \ResGammaTopkCollapseGamma$ the dictionary collapses outright, all three seeds below the alive-feature guard at $\ResGammaTopkCollapseAlive\%$, which locates the usable range.

Cross-seed feature agreement moves the wrong way: mean max cosine similarity between dictionaries from different seeds falls from $\ResGammaTopkBaseMmcs$ to $\ResGammaTopkBestMmcs$. \textbf{The penalty makes the recovered \emph{spectrum} more reproducible while making the recovered \emph{feature dictionary} less so} - a second instance of the paper's central dissociation, arriving from a different direction.

It is also architecture-sensitive. On ReLU the invariance residual at the guard-selected operating point is not below baseline ($\ResGammaReluBaseEps \to \ResGammaReluBestEps$), though it does decline at larger $\gamma$ approaching collapse, while the split-half distance falls by $\ResGammaReluSplitPct\%$. The residual and the identifiability it proxies for can therefore decouple when sparsity is enforced by a penalty rather than by construction - which is why TopK is primary. This does not undermine the corollary, whose subject is the spectrum, but the penalty is not a drop-in remedy for the feature-level variability the SAE literature reports and we do not present it as one.

\begin{figure*}[t]
\centering
\includegraphics[width=\textwidth]{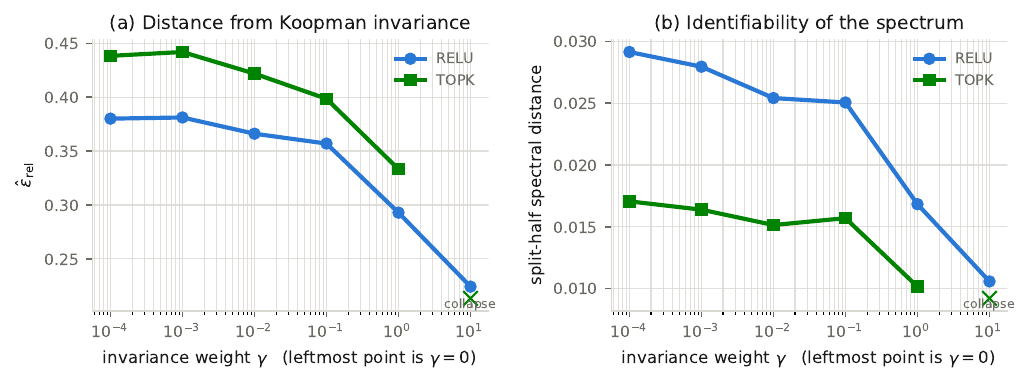}
\caption{Adding the invariance penalty improves spectral identifiability. \textbf{(a)} The invariance residual falls with $\gamma$ for TopK; for ReLU it is not below for TopK; for ReLU it is not below baseline at the guard-selected operating point, which the text takes up. at the guard-selected operating point, which the text takes up. \textbf{(b)} The split-half spectral distance - identifiability itself rather than a proxy - falls on both architectures, by $\ResGammaTopkSplitPct\%$ (TopK) and $\ResGammaReluSplitPct\%$ (ReLU) at matched $L_0$. Sparsity is held fixed across the sweep ($L_0 = 32$ by construction for TopK, $\lambda$ calibrated for ReLU), so the improvement cannot be attributed to the dictionary becoming denser. Crosses mark configurations excluded by the degeneracy guards: TopK collapses at $\gamma = \ResGammaTopkCollapseGamma$ with $\ResGammaTopkCollapseAlive\%$ of features alive, which locates the usable range.}
\label{fig:gamma}
\end{figure*}

We claim no more than this supports: not that invariance-regularised SAEs are the right tool for practice, which would need an evaluation we do not attempt, but that the mechanism \Cref{cor:sae-non-identifiability} names is the operative one for spectral identifiability, because intervening on it moves that outcome by a stated amount.

\subsection{Running the universality criterion}
\label{sec:exp-universality}

\Cref{cor:universality} is a decision procedure, and it had no experiment. Running it requires three things matched, none of them free: dictionary size, relative depth $\ell/L$ (the suite spans 12 to 36 layers, so an absolute index is not a common coordinate), and $M$ (the resolution floor falls with $M$). We fit at $N = \ResUniN$ and $M = \ResUniM$ on each model's own dictionary, at $\ell/L \in \{0.25, 0.5, 0.75\}$, and compare every pair's matched distance against the larger of the two models' split-half floors, measured on the same draws ($\ResUniFloorLo$--$\ResUniFloorHi$).

Because ``these spectra are distinct'' and ``this test separates everything'' produce the same table, we added two Pythia-160M seed replicas - same architecture, same data, same order, different initialisation - as models the criterion \emph{should} decline to separate.

It declares every pair distinct: $\ResUniCrossDeclared$ of $\ResUniCrossPairs$ cross-family pairs, and also $\ResUniSeedDeclared$ of $\ResUniSeedPairs$ seed-replica pairs, the closest of which still sits at $\ResUniSeedMin\times$ its floor (\Cref{fig:universality}). A rule that separates two runs of the same architecture on the same data is not a universality criterion, and reporting the cross-family half alone would have read as a clean success.

The cause is locatable, and it is not the theorem. The split-half floor bounds the \emph{sampling} error of a fixed dictionary, which is what $O(M^{-1/2})$ governs. A cross-model comparison carries a second error the floor cannot see: the dictionaries are fitted per model and are only approximately $\opK$-invariant, so what is compared are the spectra of two \emph{realisations} rather than of two operators, and that mismatch is a bias that does not shrink with $M$ (\Cref{rem:ident-approx}). \Cref{cor:universality} needs a calibrated null, not an estimation bound.

What survives is an ordering, and it names the fix. Distance ranks seed replicas below cross-family pairs with AUC $\ResUniAucMax$ on the maximum statistic that \Cref{thm:identifiability} bounds and $\ResUniAucWass$ on the robust Wasserstein summary, rising to $\ResUniAucWassMid$ at mid-depth, where seed pairs sit at a median $\ResUniSeedWass\times$ the floor against $\ResUniCrossWass\times$ for cross-family pairs ($\ResUniSeedRatio$ and $\ResUniCrossRatio$ on the max statistic). The spectrum separates architecture from seed relatively but not absolutely, so the operational form of the criterion should calibrate against seed replicas of the models being compared - a concrete correction to a corollary we stated, obtained only by running the control.

\begin{figure*}[t]
\centering
\includegraphics[width=\textwidth]{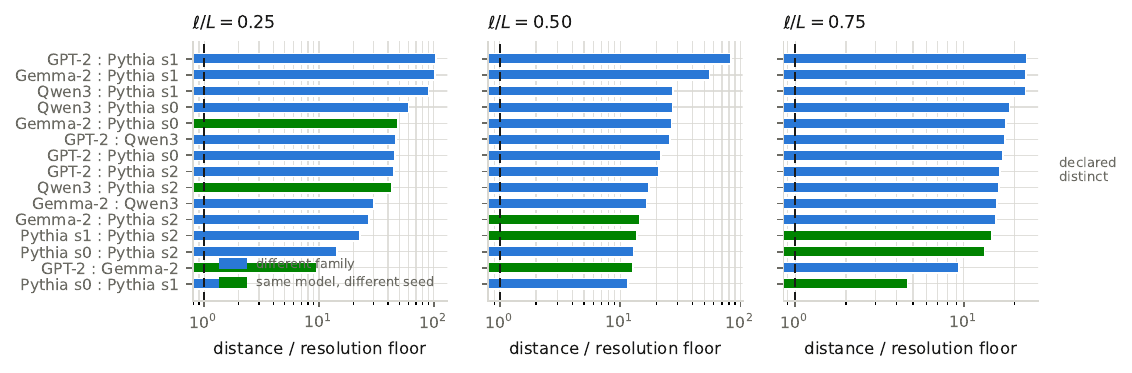}
\caption{The universality criterion separates everything, including the control it should not. Matched spectral distance for every model pair, in units of the split-half resolution floor measured on the same draws, at three relative depths. Blue bars are cross-family pairs, green bars are the Pythia-160M seed replicas that differ only in initialisation. Every bar lies to the right of the dashed floor at $1$, so the criterion in its stated form declares all $\ResUniCrossPairs$ cross-family pairs \emph{and} all $\ResUniSeedPairs$ seed-replica pairs distinct. The ordering is nonetheless informative - seed pairs sit lower on average (AUC $\ResUniAucWass$ on the Wasserstein summary)}
\label{fig:universality}
\end{figure*}

\subsection{The modelling conventions, measured}
\label{sec:exp-conventions}

\Cref{sec:dynamics-conventions} listed three conventions that define the estimand. This subsection reports the measurements behind each, plus the read-out and encoder-convention checks that the earlier experiments depend on.

\subsubsection{Depth homogeneity: per-layer against pooled fits}
\label{sec:exp-depth-homogeneity}

The realisation is fitted separately at each analysed layer throughout. Two measurements support that choice.

\emph{Adjacent-layer spectral drift.} For each model we fit $\hat A_\ell$ at every cached layer and compute the Hungarian-matched distance between adjacent layers, in units of the split-half resolution floor measured at the same $M$ on the same layer. The median ratio is $\ResGptDriftRatio$ on GPT-2 small, $\ResGemmaDriftRatio$ on Gemma-2-2B and $\ResQwenDriftRatio$ on Qwen3-8B-Base. A ratio above one means adjacent layers differ by more than the measurement can attribute to sampling, so the depth-indexed family $\ell \mapsto \sigma(A_\ell)$ is resolved rather than inferred.

\emph{Pooled fits predict worse.} Fitting one $A$ on transitions pooled across depth and evaluating on held-out transitions gives a relative residual of $\ResGptPooledResid$ on GPT-2 small against $\ResGptPerLayerResid$ for per-layer fits, a penalty of $\ResGptPoolPenalty\%$; the corresponding penalties are $\ResGemmaPoolPenalty\%$ (Gemma-2-2B, $\ResGemmaPooledResid$ against $\ResGemmaPerLayerResid$) and $\ResQwenPoolPenalty\%$ (Qwen3-8B-Base, $\ResQwenPooledResid$ against $\ResQwenPerLayerResid$). A pooled realisation estimates a depth-averaged object whose spectrum is an artefact of the pooling, and the size of the penalty bounds how large that artefact would be.

\Cref{thm:identifiability} is stated and proved at fixed $\ell$ and needs no depth-homogeneity, so nothing in the theory rests on this; what rests on it is the interpretation of the estimand.

\subsubsection{Exogeneity of the control, and the residualised realisation}
\label{sec:exp-exogeneity}

\Cref{def:recurrence} treats the attention write $\ctrl_\ell$ as an exogenous input, but $\ctrl_\ell = \mathrm{Attn}_\ell(\mathrm{RMSNorm}(\resid_\ell^{(1:T)}))$ is computed from the same residual stream. This is a modelling convention, and it is not innocuous.

\emph{How far from exogenous.} At mid-depth the largest canonical correlation between the lifted state $\obs(\resid_\ell)$ and the control is $\rho_1 = \ResGptRhoOne$ on GPT-2 small, $\ResGemmaRhoOne$ on Gemma-2-2B and $\ResQwenRhoOne$ on Qwen3-8B-Base - so some direction of the control is nearly a function of the state. Overall, however, the controls are mostly not explained by the state: regressing $\ctrl_\ell$ on $\obs(\resid_\ell)$ gives $R^2 = \ResGptCtrlRsq$, $\ResGemmaCtrlRsq$ and $\ResQwenCtrlRsq$ respectively. The dependence is concentrated in a few directions rather than spread across the control space.

\emph{The residualised realisation, and how far it moves.} The exogenous-by-construction alternative removes the state-explained part of the control before fitting (Frisch--Waugh--Lovell): regress $\ctrl_\ell$ on $\obs(\resid_\ell)$, keep the residual, and fit the realisation against that. The recovered spectrum moves by $\ResGptExogShift\times$ the split-half floor on GPT-2 small, $\ResGemmaExogShift\times$ on Gemma-2-2B and $\ResQwenExogShift\times$ on Qwen3-8B-Base. Those shifts are well above the resolution floor, so the convention selects between two genuinely different estimands at the precision we can measure - it is not a numerical detail.

We report the naive realisation as primary because it is the one \Cref{def:recurrence} defines, and the residualised realisation alongside it. A reader who prefers the exogenous-by-construction estimand should read the latter throughout. We regard this as the sharpest open modelling question in the framework and do not resolve it by fiat.

\subsubsection{Depth stationarity and the norm observable}
\label{sec:exp-stationarity}

The residual-stream norm grows with depth, by a factor of $\ResGptNormGrowth$ across the analysed range on GPT-2 small, $\ResGemmaNormGrowth$ on Gemma-2-2B and $\ResQwenNormGrowth$ on Qwen3-8B-Base. A law whose second moment moves with depth cannot be a single invariant law, so the realisation is defined relative to the \emph{layer marginal} $\mu_\ell$ rather than to one invariant $\mu$. The per-layer fit of \Cref{sec:exp-depth-homogeneity} already enforces this; no separate correction is applied.

\emph{The growth appears as its own mode, which it need not have.} If the framework is describing the depth dynamics rather than merely fitting them, the norm growth should be recoverable as a dynamical mode once the dictionary can express it. Adding $\log\|\resid_\ell\|$ as an observable does exactly that on every model tested: the largest expanding eigenvalue of the augmented realisation is $\ResGptNormMode$ (GPT-2 small), $\ResGemmaNormMode$ (Gemma-2-2B) and $\ResQwenNormMode$ (Qwen3-8B-Base), against measured per-layer norm growth factors of $\ResGptNormEmpirical$, $\ResGemmaNormEmpirical$ and $\ResQwenNormEmpirical$ respectively. The agreement is within a few percent on all three, and it is a prediction that could have failed: nothing forces a fitted eigenvalue to match an independently measured growth rate.

\subsubsection{The full-state read-out}
\label{sec:exp-readout}

Koopman modes live in observable space $\C^{N}$. Every question asked of them in \Cref{sec:exp-circuits} - alignment with a head's write subspace, contribution to a logit difference, the effect of ablating them - is a question about directions in the residual stream $\R^{d}$. The correspondence between the two is fitted, not assumed, and its quality is reported because every mode-level claim in that section is void if it is poor.

\emph{Construction.} On held-out states we fit the linear read-out
\begin{equation}
\label{eq:app-readout}
    \resid - \mu \;\approx\; \obs(\resid)\,R,
    \qquad R \in \C^{N \times d},
\end{equation}
by least squares with a small ridge for numerical hygiene (the lifted features are whitened, so this is not regularisation in any statistical sense). Writing $\obs(\resid) = \sum_k v_k \varphi_k(\resid)$ in the eigenbasis of the fitted $\hat A$, with $\varphi_k(\resid) = \phi_k^{\top}\obs(\resid)$ the matching left eigenfunctions, the \emph{full-state Koopman mode} is
\begin{equation}
    \xi_k \;=\; R^{\top} v_k \;\in\; \R^{d},
    \qquad
    \resid - \mu \;\approx\; \sum_k \xi_k \,\varphi_k(\resid).
\end{equation}
Modes come in conjugate pairs for a real operator, so any subset that splits a pair leaves an imaginary residue; taking the real part is the correct projection back to the residual stream and is what the causal ablation applies. Interventions are therefore always applied to whole conjugate groups.

\emph{Why fit the read-out rather than invert the dictionary.} Pushing an eigenvector back through the dictionary's internal linear block would be cheaper, but it works only for dictionaries that have an invertible linear block. Fitting \eqref{eq:app-readout} applies to \emph{any} dictionary, including SAE encoders, which is what makes it possible to run the same circuit analysis through an SAE dictionary and compare. It is also honest about what it claims: the reconstruction error is measured rather than inherited from an assumption about the dictionary's structure.

\emph{Measured quality.} On the IOI activations of \Cref{sec:exp-circuits} the read-out attains a median relative reconstruction error of $\ResIoiReadout$ on held-out states. This is the precondition for that section's negative results to mean anything: a broken read-out would produce the same appearance of modes failing to resolve the circuit, and could not be told apart from the finding.

\subsubsection{Encoder variants for the SAE dictionary}
\label{sec:exp-encoder-variants}

\Cref{cor:sae-non-identifiability} is a statement about the span of a learned dictionary, and which map from activations to codes one calls ``the dictionary'' decides what is measured. The main text fixes the convention - the post-activation code $\obs_D(\resid) = \sigma(W_{\mathrm{enc}}\resid + b_{\mathrm{enc}})$ with the autoencoder's own nonlinearity - and we report the other three here so the choice is visible rather than implicit. All four are measured on GPT-2 small at layer~\ResVarLayer{} with $N = \ResVarN$, whitened, on the same activations, with the spectral and random dictionaries as reference.

\begin{table}[h]
\centering
\small
\caption{Relative invariance residual $\hat\varepsilon_{\mathrm{rel}}$ under four encoder conventions, GPT-2 small at layer~\ResVarLayer{}, $N = \ResVarN$.}
\label{tab:encoder-variants}
\begin{tabular}{lr}
\toprule
Encoder convention & $\hat\varepsilon_{\mathrm{rel}}$ \\
\midrule
Post-activation code (main text) & $\ResVarNlPost$ \\
Pre-activation linear map $W_{\mathrm{enc}}\resid$ & $\ResVarLPre$ \\
Decoder columns & $\ResVarDec$ \\
Reconstruction $D z(\resid)$ & $\ResVarRec$ \\
\midrule
Spectral dictionary & $\ResVarSpec$ \\
Random orthonormal & $\ResVarRand$ \\
\bottomrule
\end{tabular}
\end{table}

The ordering that \Cref{tab:sae} reports holds under all four conventions: the SAE-derived dictionary carries the largest residual in every case. The pre-activation variant is reported only as a reference floor and is \emph{not} evidence for the corollary, because a linear map of the state cannot be $\opK$-invariant for nonlinear $F$ - (DR2) fails by construction there, so that row could not have come out any other way. The post-activation code is genuinely nonlinear and is therefore not excluded a priori, which is what makes its failure a measurement.

\subsection{Per-model report: Qwen3-8B-Base}
\label{sec:exp-qwen}

Because the largest model in the suite is the one that attains the predicted rate, its measurements are worth collecting in one place. All are taken at layer~\ResQwenMidLayer{} of $36$, with $N = 32$, $p = 64$, and no hyperparameter retuned for this model.

\begin{itemize}[leftmargin=1.6em,itemsep=2pt]
\item \emph{Convergence.} Spectral dictionary $\ResQwenSpecSlope \pm \ResQwenSpecSlopeSE$, random dictionary $\ResQwenRandSlope \pm \ResQwenRandSlopeSE$, at $M_{\max} = \ResQwenSpecMmax$ - the former within one standard error of the predicted $-1/2$, and attained at $M_{\max}/M_0^{\mathrm{eig}} = \ResQwenSpecMRatio$, i.e.\ below the gap-free threshold.
\item \emph{Invariance residual.} Spectral $\ResQwenEpsSpec$, random $\ResQwenEpsRand$, SAE $\ResQwenEpsSae$ - the largest SAE-to-spectral ratio in the suite at $\ResQwenSaeRatio\times$, consistent with the gap strengthening with scale.
\item \emph{Conditioning.} $\kappa_2(\hat V) = \ResQwenKappaV$, by far the most non-normal fit in the suite and the one for which \Cref{rem:dissociation-numeric}'s saturation calculation is calibrated.
\item \emph{Control rank.} The attention-write covariance has effective rank 10.8 against an ambient dimension of $4096$. This measurement is independent of the dictionary and is the direct empirical support for the premise of \Cref{cor:transformer-reduction}.
\item \emph{Norm mode.} Largest expanding eigenvalue $\ResQwenNormMode$ against measured per-layer norm growth $\ResQwenNormEmpirical$.
\item \emph{Where it breaks.} At layer~26 the random-Fourier construction at $d = 4096$ degrades and the split-half stability estimate becomes unreliable ($\ResStabOutlierSpec$ against $\ResStabSpecLo$--$\ResStabSpecHi$ elsewhere). This is a dictionary-construction failure, not a model or estimator failure: the random dictionary run through the same pipeline on the same cached activations behaves normally, and the invariance residual, which does not depend on the stability estimate, keeps the predicted ordering at that layer.
\end{itemize}

\section{Discussion}
\label{sec:discussion}

Lifting the depth recurrence through the Koopman operator gives a realisation whose spectrum is a coordinate-free invariant of the transformer (\Cref{thm:existence}), identifiable from $M$ calibration samples at the rate $M^{-1/2}$ up to permutation (\Cref{thm:identifiability}). \Cref{thm:minimax} shows no estimator does better in $M$, and \Cref{thm:robust} covers heavy-tailed activations. The spectrum converges on every model tested and attains the predicted exponent on the largest.

\Cref{thm:dissociation} says what this object is \emph{not}. A non-normal realisation forces the activations' principal directions apart from the Koopman modes, and the measured conditioning ($\kappa_2(\hat V)$ from $\ResGemmaKappaV$ to $\ResQwenKappaV$) puts every fit in that regime. The experiments land where the theorem requires: the modes beat random directions and lose to principal components at resolving IOI, with the loss confined to the question principal components are optimal for and decaying by $\ResTransDecay\times$ as the question moves away in depth. What transports across depth is not the basis in which any one layer is encoded. The spectrum therefore certifies an intrinsic, identifiable model property recoverable at a stated rate; it does not certify interpretability, behavioural coverage, or robustness.

The limitations are as follows, in rough order of how much they constrain the claims. First, the estimand depends on modelling conventions we cannot fully justify: residualising the attention write against the lifted state moves the spectrum by $\ResGptExogShift$--$\ResGemmaExogShift\times$ the resolution floor (\Cref{sec:exp-exogeneity}), so the naive and residualised conventions specify different estimands at the precision we can measure. We report both. Second, invariance holds only approximately. All theorems assume \Cref{ass:invariance}; real dictionaries satisfy it approximately and the resulting bias does not vanish as $M \to \infty$ (\Cref{rem:ident-approx}). KSA is thus not model-agnostic - the dictionary must be co-designed with the architecture - though within a fixed dictionary family it is uniform across corpus, seed and hyperparameter, which is the property SAEs lack. Third, diagonalisability and separation are generic but not universal: Jordan extensions are routine but degrade the eigenvector rate (\Cref{rem:jordan}), and \Cref{ass:separation} can fail when mechanisms occupy closely spaced spectral scales. \Cref{cor:cluster-ident} covers near-degeneracy but not exact coincidence, which we conjecture is measure-zero but have not proved. Fourth, the intervention calculus is proved but not validated: \Cref{thm:completeness} covers first-order interventions only, and whether its closed-form representatives predict measured patching effects is untested. Fifth, the rate is asymptotic in a way that bites - large $\kappa_0$ or small $\Delta$ requires much data before the $M^{-1/2}$ regime begins, and $c_0$ in \eqref{eq:M0-eig} is unknown, so $M_0^{\mathrm{eig}}$ is an order-of-magnitude guide. Sixth, the random-Fourier bandwidth fixed for $d \leq 2304$ degrades at $d = 4096$ in the deepest layers (\Cref{sec:exp-qwen}); we chose not to retune per model (\Cref{sec:exp-protocol}). Finally, two stated results fail their controls: the pre-registered SAE-gap criterion is met in $\ResSelPct\%$ of $\ResSelCells$ cells rather than the registered $80\%$, because the sign of the effect tracks the feature-selection rule (\Cref{sec:exp-sae}); and the universality criterion separates two seed replicas of one architecture, so in its stated form it is not a universality criterion (\Cref{sec:exp-universality}). We report both because both bound what may be claimed.

Several problems remain open. The upper bound carries $\sqrt{N + \log(1/\delta)}$ while the lower bound is dimension-free (\Cref{thm:minimax}); sharpening either settles the dimension dependence. The penalty of \eqref{eq:kinvariant-sae} improves spectral identifiability by $\ResGammaTopkSplitPct\%$ while degrading reconstruction and cross-seed agreement (\Cref{sec:exp-penalty}), leaving open whether a dictionary can be sparse, reconstructive and $\opK$-invariant at once. A universality criterion needs a null calibrated against seed replicas rather than the sampling floor. The exogeneity convention needs resolving, and the intervention calculus needs testing against measured patching effects.

The safety-case programme of \cite{clymer2024safety} requires naming a mechanism $Y$ responsible for a behaviour $X$ and then defending that attribution. \Cref{thm:identifiability} supplies the missing piece: a certificate that $Y$ is a property of the model rather than of the procedure that found it. It does not address behavioural coverage, distribution shift, or adversarial robustness, and it does not deliver a human-legible decomposition - \Cref{thm:dissociation} says that in the non-normal regime these models occupy, the identifiable object and the legible object cannot be the same object. A safety case needing both will need two tools, and should say which one it is using where.

\section{Conclusion}
\label{sec:conclusion}

We have put mechanistic interpretability on an identifiability footing. Treating a transformer forward pass as a controlled dynamical system in depth and lifting it with the Koopman operator produces a finite linear realisation whose spectrum is a coordinate-free invariant of the model, identifiable from finite calibration data at the optimal $M^{-1/2}$ rate up to permutation, with a matching lower bound, a heavy-tailed variant, an algebraically complete intervention calculus, and an instance-dependent reduction certificate.

The measurements confirm the theory where it can be confirmed and bound it where it cannot. The spectrum converges on GPT-2 small, Gemma-2-2B and Qwen3-8B-Base, attaining the predicted exponent on the largest; the SAE invariance gap the theory predicts is observed on every model and is causally movable by the penalty the theory motivates; and the modes fail to resolve the IOI circuit in exactly the way a non-normal realisation must. The Koopman spectrum is an identifiable, model-intrinsic fingerprint of transformer depth dynamics with a stated error bar, not a decomposition into human-legible mechanisms. Both halves of that sentence are results.

\bibliographystyle{unsrt}
\bibliography{references}

\end{document}